%% file: main.tex
\documentclass[11pt,letterpaper]{article}
\usepackage[T1]{fontenc}
\usepackage[utf8]{inputenc}

\usepackage{amsmath,amssymb,amsthm,mathtools}
\usepackage{newpxtext}                 
\usepackage{newpxmath}                 
\usepackage[scaled=0.90]{FiraSans}     

\usepackage[letterspace=80]{microtype}

\usepackage[letterpaper,left=1.15in,right=1.15in,top=1.1in,bottom=1.15in,headsep=0.32in,footskip=0.45in]{geometry}
\usepackage{xcolor}
\definecolor{ink}{HTML}{1A1F2B}
\definecolor{navy}{HTML}{1F3A63}
\definecolor{refcool}{HTML}{2C5AA0}      
\definecolor{citewarm}{HTML}{B5403B}     
\definecolor{muted}{HTML}{6B6A66}
\definecolor{rulegray}{HTML}{D3D6DC}
\definecolor{thmback}{HTML}{F2F6FC}
\definecolor{lemback}{HTML}{F5F6F8}
\definecolor{lembar}{HTML}{6B7A90}
\definecolor{absback}{HTML}{F7F8FA}

\usepackage{graphicx,booktabs,array,tabularx,multicol}
\usepackage{enumitem}
\setlist{itemsep=1pt,parsep=0pt,topsep=3pt,partopsep=0pt}
\usepackage[font=small,labelfont={sf,bf,color=navy},labelsep=period,skip=6pt]{caption}
\usepackage[round]{natbib}

\usepackage[most]{tcolorbox}
\usepackage{titlesec,titletoc}
\usepackage{fancyhdr}

\titleformat{\section}{\sffamily\Large\bfseries\color{navy}}{\textcolor{refcool}{\thesection}}{0.7em}{}
\titlespacing*{\section}{0pt}{3.2ex plus 1ex minus .2ex}{1.5ex plus .2ex}
\titleformat{\subsection}{\sffamily\large\bfseries\color{navy}}{\textcolor{refcool}{\thesubsection}}{0.6em}{}
\titlespacing*{\subsection}{0pt}{2.4ex plus .8ex minus .2ex}{1.0ex plus .2ex}
\titleformat{\paragraph}[runin]{\sffamily\bfseries\color{navy}}{}{0pt}{}
\titlespacing*{\paragraph}{0pt}{1.4ex plus .5ex minus .2ex}{0.55em}

\titlecontents{section}[2.0em]{\addvspace{0.6pt}\sffamily\fontsize{8.4}{10.2}\selectfont}
  {\contentslabel[\textcolor{refcool}{\bfseries\thecontentslabel}]{2.0em}}{\hspace*{-2.0em}}
  {\titlerule*[0.6pc]{\textcolor{rulegray}{.}}\contentspage}

\fancypagestyle{first}{\fancyhf{}\fancyfoot[C]{\sffamily\footnotesize\color{muted}\thepage}}

\newtheoremstyle{boxed}{0pt}{0pt}{\itshape}{}{\sffamily\bfseries\color{navy}}{.}{0.5em}
  {\thmname{#1}\thmnumber{ #2}\thmnote{\ \textmd{(#3)}}}
\newtheoremstyle{boxedrm}{0pt}{0pt}{\normalfont}{}{\sffamily\bfseries\color{navy}}{.}{0.5em}
  {\thmname{#1}\thmnumber{ #2}\thmnote{\ \textmd{(#3)}}}
\theoremstyle{boxed}
\newtheorem{theorem}{Theorem}[section]
\newtheorem{proposition}[theorem]{Proposition}
\newtheorem{lemma}[theorem]{Lemma}
\newtheorem{corollary}[theorem]{Corollary}
\theoremstyle{boxedrm}

\newtheorem{remark}[theorem]{Remark}
\tcbset{thmbox/.style={enhanced,breakable,frame hidden,sharp corners,boxrule=0pt,
  left=8pt,right=8pt,top=5pt,bottom=5pt,before skip=9pt,after skip=9pt}}
\tcolorboxenvironment{theorem}{thmbox,colback=thmback,borderline west={2.4pt}{0pt}{refcool}}
\tcolorboxenvironment{proposition}{thmbox,colback=thmback,borderline west={2.4pt}{0pt}{refcool}}
\tcolorboxenvironment{corollary}{thmbox,colback=thmback,borderline west={2.4pt}{0pt}{refcool}}
\tcolorboxenvironment{lemma}{thmbox,colback=lemback,borderline west={2.4pt}{0pt}{lembar}}
\tcolorboxenvironment{definition}{thmbox,colback=white,borderline west={2.4pt}{0pt}{citewarm}}
\tcolorboxenvironment{remark}{blanker,breakable,left=8pt,before skip=7pt,after skip=7pt,
  borderline west={1pt}{0pt}{rulegray}}

\makeatletter
\renewenvironment{proof}[1][\proofname]{\par\pushQED{\qed}\normalfont\topsep6\p@\@plus6\p@\relax
  \trivlist\item[\hskip\labelsep{\sffamily\bfseries\color{navy}#1.}]\ignorespaces}{\popQED\endtrivlist\@endpefalse}
\makeatother

\newcommand{\pflab}[2]{{\color{refcool}\ensuremath{\langle#1\rangle}#2}}
\newenvironment{steps}{\begin{list}{}{%
    \setlength{\leftmargin}{2.2em}\setlength{\labelwidth}{2.0em}\setlength{\labelsep}{0.2em}%
    \setlength{\itemsep}{2pt}\setlength{\parsep}{1pt}\setlength{\topsep}{2pt}\setlength{\partopsep}{0pt}%
    }}{\end{list}}
\newcommand{\step}[2]{\item[\pflab{#1}{#2}.]}
\newcommand{\qedstep}[2]{\item[\pflab{#1}{#2}.]{\sffamily\small\color{navy}Q.E.D.}}
\newcommand{\pf}{\par{\sffamily\small\color{muted}Proof:}\ }
\newcommand{\sref}[2]{\pflab{#1}{#2}}
\newcommand{\xsref}[2]{\pflab{#1}{#2}}
\newcommand{\Assume}{{\sffamily\small\color{navy}Assume:}\ }
\newcommand{\Prove}{{\sffamily\small\color{navy}Prove:}\ }
\newcommand{\Let}{{\sffamily\small\color{navy}Let:}\ }
\newcommand{\Suffices}{{\sffamily\small\color{navy}Suffices:}\ }
\newcommand{\Case}{{\sffamily\small\color{navy}Case:}\ }

\newcommand{\markmainend}{}

\newcommand{\R}{\mathbb{R}}
\newcommand{\one}{\mathbf{1}}
\newcommand{\Bn}{\mathcal{B}_n}
\newcommand{\Bno}{\mathcal{B}_n^{\circ}}
\newcommand{\Sn}{\mathfrak{S}_n}
\newcommand{\On}{\mathcal{O}_1(n)}
\newcommand{\Cn}{\mathcal{C}_n}
\newcommand{\Tsp}{\mathcal{T}}
\newcommand{\gauge}{\mathfrak{s}}
\newcommand{\SK}{\mathrm{SK}}
\newcommand{\gradFR}{\operatorname{grad}}
\newcommand{\Pfr}{\Pi}
\newcommand{\dd}{\mathrm{d}}
\newcommand{\eps}{\varepsilon}
\newcommand{\hpre}{h^{\mathrm{pre}}}
\newcommand{\hpost}{h^{\mathrm{post}}}
\newcommand{\ip}[2]{\langle #1,#2\rangle}
\newcommand{\T}{^{\!\top}}
\DeclareMathOperator{\KL}{KL}
\DeclareMathOperator{\diag}{diag}
\DeclareMathOperator{\conv}{conv}
\DeclareMathOperator{\skewop}{skew}
\DeclareMathOperator{\tr}{tr}
\DeclareMathOperator*{\argmin}{arg\,min}

\usepackage[colorlinks=true,linkcolor=refcool,citecolor=citewarm,urlcolor=refcool,
  pdftitle={The Birkhoff Geometry of Manifold-Constrained Hyper-Connections},pdfauthor={Xiaoyu Li, Zhizhou Sha, Chiwun Yang},pdfsubject={Two Channels, Vertex Viscosity, and Sinkhorn as a Retraction},pdfkeywords={deep learning theory, Riemannian optimization, doubly stochastic matrices, hyper-connections, Sinkhorn normalization, mirror descent}]{hyperref}

\begin{document}
\thispagestyle{first}

\noindent{\color{refcool}\rule{\linewidth}{2.2pt}}\par\vspace{7pt}
\noindent{\sffamily\footnotesize\color{muted}\lsstyle PREPRINT\quad\textbullet\quad October 2026}\par\vspace{8pt}
\noindent{\sffamily\bfseries\fontsize{23}{28}\selectfont\color{navy}The Birkhoff Geometry of\\ Manifold-Constrained Hyper-Connections\par}\vspace{7pt}
\noindent{\sffamily\large\color{citewarm}Two Channels, Vertex Viscosity, and Sinkhorn as a Retraction\par}\vspace{12pt}
\noindent\begin{tabular*}{\linewidth}{@{}l@{\extracolsep{\fill}}l@{\extracolsep{\fill}}l@{}}
{\sffamily\bfseries\color{ink}Xiaoyu Li} & {\sffamily\bfseries\color{ink}Zhizhou Sha} & {\sffamily\bfseries\color{ink}Chiwun Yang} \\[1pt]
{\small University of New South Wales} & {\small University of Texas at Austin} & {\small City University of Hong Kong} \\[1pt]
{\sffamily\footnotesize\href{mailto:xiaoyu.li2@unsw.edu.au}{\color{muted}xiaoyu.li2@unsw.edu.au}} & {\sffamily\footnotesize\href{mailto:zhizhousha@utexas.edu}{\color{muted}zhizhousha@utexas.edu}} & {\sffamily\footnotesize\href{mailto:christiannyang37@gmail.com}{\color{muted}christiannyang37@gmail.com}}
\end{tabular*}

\vspace{10pt}

\begin{tcolorbox}[enhanced,colback=absback,frame hidden,sharp corners,boxrule=0pt,
  borderline west={3pt}{0pt}{refcool},left=11pt,right=11pt,top=7pt,bottom=7pt]
{\sffamily\bfseries\footnotesize\color{navy}\lsstyle ABSTRACT}\par\vspace{3pt}
\small Hyper-connections widen the residual stream of a Transformer to $n$ parallel streams. Their manifold-constrained version (mHC) mixes the streams at each layer with a doubly stochastic matrix, which it computes by Sinkhorn normalization of exponentiated logits. We give a geometric theory of this design on the Birkhoff polytope. First, a doubly stochastic mixer splits the stream into a mean channel, on which mHC is exactly a residual network, and a difference channel, which each layer contracts by its second singular value $\sigma_2\le1-n\min_{ij}H_{ij}$. Thus the extra width is a fading memory with a horizon of $1/(1-\sigma_2)$ layers, and among nonnegative mixers only the permutations do not collapse. Second, the Sinkhorn-logit map is a global chart, and its logit gradient is exactly the Fisher--Rao gradient. Thus logit gradient flow follows a squared Fisher--Rao metric, and the straight-through update is exactly entropic mirror descent. Third, under logit gradient flow the logarithm of each entry moves at a rate of at most $4n^3\|\nabla f\|_\infty\,\eps$, where $\eps$ is the distance to the nearest permutation. Thus gradient flow approaches and leaves the vertices only at rate $1/t$, but mirror descent moves at an exponential rate. Fourth, the local convergence factor of Sinkhorn is $\sigma_2^2$, so a fixed iteration budget limits the horizon. Experiments confirm the predicted rates.\par\vspace{4pt}
{\sffamily\footnotesize\color{navy}\textbf{Keywords:}}\ {\footnotesize deep learning theory, Riemannian optimization, doubly stochastic matrices, hyper-connections, Sinkhorn normalization, mirror descent}
\end{tcolorbox}

\vspace{2pt}
\begin{tcolorbox}[enhanced,colback=white,colframe=rulegray,boxrule=0.5pt,arc=2pt,
  left=10pt,right=10pt,top=5pt,bottom=4pt]
{\sffamily\bfseries\footnotesize\color{navy}\lsstyle CONTENTS}\par\vspace{0pt}
\begin{multicols}{2}
{\hypersetup{linkcolor=ink}\makeatletter\@starttoc{toc}\makeatother}
\end{multicols}
\end{tcolorbox}
\suppressfloats[t]
\clearpage

\input{sec_intro}
\input{sec_setup}
\input{sec_signal}
\input{sec_geometry}
\input{sec_viscosity}
\input{sec_sinkhorn}
\input{sec_isometric}
\input{sec_numerics}
\input{sec_conclusion}

\input{back_matter}

\bibliographystyle{plainnat}
\bibliography{refs}

\clearpage
\appendix
\addtocontents{toc}{\protect\addvspace{5pt}\protect\noindent{\protect\sffamily\protect\small\protect\bfseries\protect\color{muted}Appendix}\protect\par}
\noindent{\color{refcool}\rule{\linewidth}{1.6pt}}\par\vspace{8pt}
\noindent{\sffamily\bfseries\LARGE\color{navy}Appendix}\par\vspace{3pt}
\noindent{\sffamily\small\color{muted}The Birkhoff Geometry of Manifold-Constrained Hyper-Connections}\par\vspace{6pt}
\noindent{\color{rulegray}\rule{\linewidth}{0.5pt}}\par\vspace{4pt}
\suppressfloats[t]
\input{app_roadmap}

\input{app_extra}
\input{app_aux}
\input{app_proofs}
\input{app_discussion}

\end{document}

%% file: sec_intro.tex
\section{Introduction}\label{sec:intro}

\suppressfloats[t]\input{fig_teaser}

A residual connection computes $x_{l+1}=x_l+\mathcal F_l(x_l)$. If we unroll this recursion, we get $x_L=x_l+\sum_{i=l}^{L-1}\mathcal F_i(x_i)$. Thus the signal of layer $l$ reaches layer $L$ without change, in the forward pass and in the backward pass. This is the \emph{identity-mapping property} \citep{he2016identity}.

Hyper-connections (HC) \citep{zhu2025hyper} widen the residual stream to $n$ parallel streams. Each layer reads from the streams, writes to them, and mixes them with a learned $n\times n$ matrix. An unconstrained mixer breaks the identity-mapping property: a product of mixers can amplify or attenuate the stream without bound. In runs with 27B parameters, \citet{xie2026mhc} measure composite gains of order $10^3$, together with loss spikes.

Manifold-constrained hyper-connections (mHC) \citep{xie2026mhc} correct this problem. They make each mixer a doubly stochastic matrix,
\begin{equation}\label{eq:mhc-param}
H_l=\SK(\exp W_l),
\end{equation}
where $W_l$ is a matrix of logits and $\SK$ is a fixed number of Sinkhorn--Knopp normalization sweeps \citep{sinkhorn1964relationship}. Doubly stochastic matrices are nonexpansive, and products of doubly stochastic matrices are doubly stochastic. The mHC design trains stably at scale with $n=4$, and DeepSeek-V4 uses it \citep{deepseek2026v4}. Many follow-ups replace the Sinkhorn loop \citep{yang2026mhclite,zhou2026kromhc,lyubinin2026tbpmhc,dandachi2026gomhc,gu2026temper,wang2026accelerating}, measure how trained models use their streams \citep{alimaskina2026stream,peng2026ablate,zhao2026howdoes}, or use orthogonal mixers \citep{sengupta2026jpmhc,liu2026shc,guo2026ohc}. These works are architectural or empirical, and the mHC paper contains no theorems.

Two observations of \citet{xie2026mhc} need an explanation. First, the composite mixers over 30, 60, and 90 layers are almost uniform matrices (their Figure~8). \citet{guo2026ohc} find that a composite of 24 trained mixers has numerical rank one. Second, the projection stops after 20 sweeps, and the composite backward gain drifts up to about $1.6$ (their Figure~7). We explain both observations.

We also ask three general questions. What does the doubly stochastic constraint give? What does it cost? How does gradient descent on the logits move on the polytope?

\paragraph{Contributions.}
One quantity connects all our results: the second singular value $\sigma_2(H)$ of a mixer $H$. It controls the propagation of signals, the speed of learning near the vertices, and the cost of the projection. Figure~\ref{fig:teaser} shows the first two effects.
\begin{enumerate}[leftmargin=1.2em,itemsep=1pt,topsep=2pt]
\item \textbf{Two channels (Section~\ref{sec:signal}).} A doubly stochastic mixer splits the stream into its mean and its differences. On the mean, mHC is exactly a residual network (Theorem~\ref{thm:two-channel}). On the differences, each layer contracts by $\sigma_2(H)\le1-n\min_{ij}H_{ij}$ (Theorem~\ref{thm:contraction}). Thus the extra width is a fading memory with a horizon of $1/(1-\sigma_2)$ layers (Corollary~\ref{cor:fading}). Among nonnegative mixers, only the permutations do not collapse (Proposition~\ref{prop:rigidity}).
\item \textbf{The Sinkhorn-logit chart (Section~\ref{sec:geometry}).} The map $W\mapsto\SK(\exp W)$ is a global chart of the positive doubly stochastic matrices, and its logit gradient is exactly the Fisher--Rao gradient (Lemma~\ref{lem:differential}). Thus logit gradient flow follows a squared Fisher--Rao metric. The straight-through update is exactly entropic mirror descent, and also exactly Riemannian gradient descent with the Sinkhorn retraction (Theorem~\ref{thm:induced}).
\item \textbf{Vertex viscosity (Section~\ref{sec:viscosity}).} Under logit gradient flow, $|\frac{\dd}{\dd t}\log H_{ij}|\le4n^3\|\nabla f\|_\infty\,\eps(H)$, where $\eps(H)$ is the distance to the nearest permutation (Theorem~\ref{thm:viscosity}). Thus the flow approaches and leaves each vertex only at rate $1/t$, for every $C^1$ loss. The Fisher--Rao flow moves at an exponential rate (Theorem~\ref{thm:fr-rate}). This lifts the known gap between softmax and natural policy gradient on the simplex \citep{mei2020global,li2021softmax} to the Birkhoff polytope. The main tool is a sharp pointwise bound for weighted additive least-squares fits (Lemma~\ref{lem:additive}).
\item \textbf{Truncated Sinkhorn (Section~\ref{sec:sinkhorn}).} The local convergence factor of Sinkhorn--Knopp is $\sigma_2^2$ \citep{knight2008sinkhorn}, the same quantity that sets the horizon. Thus a fixed budget of sweeps limits the horizon that mHC can project accurately. We bound the drift of the backward gain that the truncation causes (Proposition~\ref{prop:drift}).
\item \textbf{Isometric mixers (Section~\ref{sec:isometric}).} The collapse-free mixers among the mean-preserving nonexpansive mixers form a group isomorphic to $O(n-1)$. We give its Riemannian geometry, and we state what full orthogonal mixing \citep{guo2026ohc} gives up.
\end{enumerate}
Section~\ref{sec:numerics} confirms the predicted rates numerically. Appendix~\ref{app:proofs} gives structured proofs of all results.

\paragraph{Related work (more in Appendix~\ref{app:related}).}
Earlier work studies rank collapse in attention networks \citep{dong2021attention,noci2022signal}. Our collapse results are deterministic, concern the stream dimension, and use both marginals of the mixer. \citet{douik2019manifold} introduce the Fisher--Rao geometry of positive doubly stochastic matrices and the Sinkhorn retraction. PermELearn \citep{helmbold2009learning} learns permutations by entropic mirror descent with Sinkhorn projections; see also \citet{adams2011ranking,mena2018learning}. Section~\ref{sec:viscosity} lifts two results from the simplex to the polytope. They are the rate gap between softmax and natural policy gradient \citep{mei2020global,li2021softmax} and the replicator equation as a Fisher--Rao flow \citep{shahshahani1979new,hofbauer1998evolutionary}.

%% file: fig_teaser.tex
\begin{figure*}[t]
\centering
\includegraphics[width=\textwidth]{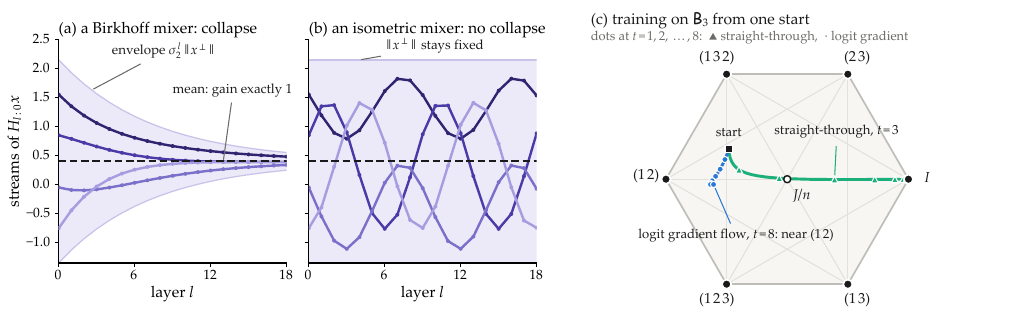}
\caption{The three effects of the theory, all computed. (a) Four streams under one mHC mixer $H=\SK(\exp(4A))$ at every layer. The stream mean (dashed) does not change, and the differences decay inside the envelope $\sigma_2(H)^l\|x^\perp_0\|$ of Theorem~\ref{thm:contraction}. (b) The same start under the isometric mixer $A\in\On$: the differences keep their norm (Proposition~\ref{prop:isometric}). (c) The Birkhoff polytope $\mathcal B_3$ under a linear map that sends the six permutations to a regular hexagon and $J/n$ to its center. The loss $\ip{C}{H}$ has its minimum at $I$. From one start, the straight-through update reaches $I$ by $t=8$ ($\eps=0.003$). Logit gradient flow moves toward the vertex $(1\,2)$ and is near it at $t=8$; it needs $t\approx400$ to reach $\eps=0.001$ at $I$ (Theorems~\ref{thm:viscosity} and~\ref{thm:fr-rate}).}
\label{fig:teaser}
\end{figure*}

%% file: sec_setup.tex
\section{Setting and notation}\label{sec:setup}

\paragraph{Hyper-connections.}
We follow \citet{xie2026mhc}. The input of layer $l$ is a stream $x_l\in\R^{n\times d}$, and its $n$ rows are the streams. A layer with block $\mathcal F_l:\R^{1\times d}\to\R^{1\times d}$ (attention or feed-forward) computes
\begin{equation}\label{eq:hc}
x_{l+1}=H_l\,x_l+(\hpost_l)\T\,\mathcal F_l(\hpre_l\,x_l).
\end{equation}
Here $H_l\in\R^{n\times n}$ is the \emph{mixer} (the matrix $\mathcal H^{\mathrm{res}}_l$ of \citet{xie2026mhc}), $\hpre_l\in\R^{1\times n}$ is the read-out, and $\hpost_l\in\R^{1\times n}$ is the write-in. The \emph{composite mixer} is $H_{L:l}:=H_{L-1}\cdots H_l$, with $H_{l:l}:=I$.

In mHC, $\hpre_l\in(0,1)^{1\times n}$ and $\hpost_l\in(0,2)^{1\times n}$ are outputs of sigmoids, and $H_l=\SK(\exp W_l)$. The logits $W_l$ are the sum of a static bias and an input-dependent term. The map $\SK$ alternately normalizes the columns and the rows, $t_{\max}=20$ times, and it stops after a row normalization. Thus $H_l\one=\one$ holds exactly, and the column sums are approximately one. Our theory treats the logits as the variables; Remark~\ref{rem:dynamic} relates them to the trainable parameters. The term \emph{exact projection} means $t_{\max}=\infty$, and Section~\ref{sec:sinkhorn} studies the truncation.

\paragraph{Doubly stochastic matrices.}
Let $n\ge2$. Let $\one\in\R^n$ be the all-ones vector, let $J:=\one\one\T$, and let $Q:=I-J/n$ be the orthogonal projector onto $\one^\perp$. The \emph{Birkhoff polytope} is
\[
\Bn:=\{H\in\R^{n\times n}:\ H\ge0,\ H\one=\one,\ H\T\one=\one\}.
\]
Its relative interior $\Bno$ is the set of positive doubly stochastic matrices. We identify a permutation $\pi$ of $[n]$ with its matrix $P_\pi$, where $(P_\pi)_{ij}=1$ if $j=\pi(i)$ and $0$ otherwise. We write $\Sn$ for the set of permutation matrices. By the Birkhoff--von Neumann theorem, $\Sn$ is the set of vertices of $\Bn$ \citep{birkhoff1946tres}. We use three facts many times.
\begin{enumerate}[leftmargin=2.2em,itemsep=0pt,topsep=1pt]
\item[(F1)] Every $H\in\Bn$ satisfies $\|H\|_2\le1$.
\item[(F2)] Products of matrices in $\Bn$ (in $\Bno$) are in $\Bn$ (in $\Bno$).
\item[(F3)] Every $H\in\Bn$ satisfies $HJ=JH=J$. Thus $H$ commutes with $Q$, and $H$ maps $\one^\perp$ into $\one^\perp$.
\end{enumerate}
For $H\in\Bn$ we write $\delta(H):=\min_{ij}H_{ij}$ and $\sigma_2(H):=\|HQ\|_2$. By (F3), $\sigma_2(H)$ is the second-largest singular value of $H$. The \emph{vertex proximity} of $H$ is
\begin{equation}\label{eq:vertexeps}
\eps_\pi(H):=\max_{i,\ j\ne\pi(i)}H_{ij},\qquad \eps(H):=\min_{\pi\in\Sn}\eps_\pi(H).
\end{equation}
It is the largest entry off the support of the nearest permutation. Thus $\eps(H)=0$ if and only if $H\in\Sn$. Also, $|H_{ij}-(P_\pi)_{ij}|\le(n-1)\eps_\pi(H)$ for all $(i,j)$.

\paragraph{Fisher--Rao geometry.}
Let $\Tsp:=\{\xi\in\R^{n\times n}:\ \xi\one=0,\ \xi\T\one=0\}$ and $\gauge:=\{a\one\T+\one b\T:\ a,b\in\R^n\}$. For the Frobenius inner product $\ip{\cdot}{\cdot}$, we have $\gauge=\Tsp^\perp$, $\dim\Tsp=(n-1)^2$, and $\dim\gauge=2n-1$. The set $\Bno$ is an open subset of the affine space $J/n+\Tsp$. Thus $\Bno$ is a manifold, and its tangent space is $\Tsp$ at every point. As \citet{douik2019manifold} do, we give $\Bno$ the \emph{Fisher--Rao metric}
\begin{equation}\label{eq:FR}
g_H(\xi,\zeta):=\sum_{ij}\frac{\xi_{ij}\zeta_{ij}}{H_{ij}}=\ip{\xi\oslash H}{\zeta},
\end{equation}
where $\odot$ and $\oslash$ are the entrywise product and quotient. The $g_H$-orthogonal complement of $\Tsp$ in $\R^{n\times n}$ is $H\odot\gauge$ (Lemma~\ref{lem:gauge-complement}). Let $\Pfr_H$ be the $g_H$-orthogonal projection onto $\Tsp$. Then $\Pfr_H(\xi)=\xi-H\odot(a\one\T+\one b\T)$, where $(a,b)$ solves
\begin{equation}\label{eq:FRproj}
a_i+\sum_jH_{ij}b_j=\sum_j\xi_{ij},\qquad b_j+\sum_iH_{ij}a_i=\sum_i\xi_{ij}.
\end{equation}
Equivalently, $a_i+b_j$ is the least-squares fit of $\xi\oslash H$ by an additive model, with weights $H_{ij}$ (Lemma~\ref{lem:fit-normal}). Section~\ref{sec:viscosity} depends on this fact. The Fisher--Rao gradient of a differentiable $f:\R^{n\times n}\to\R$ is \citep{douik2019manifold}
\begin{equation}\label{eq:FRgrad}
\gradFR f(H)=\Pfr_H\big(H\odot\nabla f(H)\big).
\end{equation}
We write $\|X\|_\infty:=\max_{ij}|X_{ij}|$ and $\KL(H\,\|\,K):=\sum_{ij}(H_{ij}\log(H_{ij}/K_{ij})-H_{ij}+K_{ij})$. Table~\ref{tab:notation} in Appendix~\ref{app:related} lists the notation.

%% file: sec_signal.tex
\section{The mean channel and the difference channel}\label{sec:signal}

Every $x\in\R^{n\times d}$ has a unique decomposition
\begin{equation}\label{eq:channels}
x=\one\bar x\T+x^{\perp},\qquad \bar x:=\tfrac1n x\T\one,\quad x^{\perp}:=Qx.
\end{equation}
We call $\bar x\in\R^d$ the \emph{stream mean} and $x^\perp$ the \emph{stream differences}; the rows of $x^\perp$ sum to zero. The first theorem shows that a doubly stochastic mixer does not couple the two parts. On the mean, mHC is a residual network.

\begin{theorem}[Two channels]\label{thm:two-channel}
Let $H_l\in\Bn$ for all $l$. Let $u_l:=\mathcal F_l(\hpre_l x_l)$ be the block output, and let $\bar h_l:=\frac1n\hpost_l\one$. Then \eqref{eq:hc} is equivalent to the two recursions
\begin{equation}\label{eq:two-channel}
\bar x_{l+1}\T=\bar x_l\T+\bar h_l\,u_l,\qquad
x^{\perp}_{l+1}=H_l\,x^{\perp}_l+Q(\hpost_l)\T u_l .
\end{equation}
The Jacobian of the residual path $x_l\mapsto x_L$, with all block outputs held fixed, is the map $x\mapsto H_{L:l}\,x$, and its adjoint is $y\mapsto H_{L:l}\T y$. Both maps fix $\one$. Thus the gain of the mean channel is exactly one in both directions, for all mixers in $\Bn$.
\end{theorem}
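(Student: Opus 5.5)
The proof is a direct computation from the decomposition \eqref{eq:channels} and the facts (F1)--(F3); no earlier theorem is needed beyond these.

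\textbf{The two recursions.} Apply the two linear maps $x\mapsto\frac1n x\T\one$ and $x\mapsto Qx$ to both sides of \eqref{eq:hc}, with the block output written $u_l\in\R^{1\times d}$.

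For the mean, we get $\bar x_{l+1}=\frac1n x_l\T H_l\T\one+\frac1n u_l\T\hpost_l\one$. Since $H_l\T\one=\one$, the first term is $\bar x_l$. The second term is $\bar h_l\,u_l\T$, because $\hpost_l\one$ is a scalar. Transposing gives the first recursion.

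For the differences, $Qx_{l+1}=QH_lx_l+Q(\hpost_l)\T u_l$. By (F3), $QH_l=H_lQ$, so $QH_lx_l=H_lx_l^\perp$. This is the second recursion.

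\textbf{Equivalence.} Conversely, the map $x\mapsto(\bar x,x^\perp)$ is a bijection, with inverse $x=\one\bar x\T+x^\perp$. Hence the pair of recursions determines $x_{l+1}$ uniquely, and that $x_{l+1}$ coincides with the right-hand side of \eqref{eq:hc}. The only point to handle carefully is that $u_l$ depends on $x_l$ through $\hpre_l x_l$. Since $x_l$ is recovered from $(\bar x_l,x_l^\perp)$, both systems use the same $u_l$, so the equivalence is exact. I do not expect any real obstacle here.

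\textbf{Jacobian and gains.} With the block outputs frozen, each step $x_l\mapsto x_{l+1}$ is affine with linear part $x\mapsto H_lx$. By the chain rule, the Jacobian of $x_l\mapsto x_L$ is left multiplication by $H_{L-1}\cdots H_l=H_{L:l}$. Under the Frobenius inner product its adjoint is $y\mapsto H_{L:l}\T y$.

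By (F2), $H_{L:l}\in\Bn$, so $H_{L:l}\one=\one$ and $H_{L:l}\T\one=\one$. More precisely, both maps send $\one v\T$ to $\one v\T$ for every $v\in\R^d$. Thus a perturbation or gradient in the mean channel passes through with gain exactly one, forward and backward. By (F3) the complement $\one^\perp$ is invariant, so no leakage occurs between the channels. This holds for every choice of mixers in $\Bn$, which gives the final claim.
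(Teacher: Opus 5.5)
Your proof is correct and follows the paper's argument essentially step for step. You project \eqref{eq:hc} with $\frac1n\one\T$ (you apply its transpose $x\mapsto\frac1n x\T\one$) and with $Q$, using $\one\T H_l=\one\T$ and (F3), and you recover $x_{l+1}$ from the two channels; you then read off the Jacobian $H_{L:l}$ by the chain rule (the paper unrolls the recursion instead), and (F2) shows that both the Jacobian and its adjoint fix $\one$.
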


The mean recursion is the residual network of \citet{he2016identity}, with block outputs scaled by $\bar h_l\in(0,2)$. This is the exact content of the identity-mapping property that mHC restores. All other effects of the mixers stay in the difference channel. \citet{guo2026ohc} observe the decomposition and the contraction of one layer. Our new content is the rate.

\begin{theorem}[Contraction]\label{thm:contraction}
\begin{enumerate}[leftmargin=1.4em,itemsep=0pt,topsep=1pt,label=(\alph*)]
\item For every $H\in\Bn$, $\sigma_2(H)\le 1-n\,\delta(H)$. Equality holds for $H=(1-t)P_\pi+tJ/n$, $t\in[0,1]$.
\item For $l<L$, $\sigma_2(H_{L:l})\le\prod_{i=l}^{L-1}\sigma_2(H_i)\le\exp\big(-n\sum_{i=l}^{L-1}\delta(H_i)\big)$.
\item $\max_{jk}\big|(H_{L:l})_{jk}-\tfrac1n\big|\le\sigma_2(H_{L:l})$. Thus $H_{L:l}\to J/n$ entrywise if $\sum_i\delta(H_i)\to\infty$.
\item Let $H=\SK(\exp W)$ with $\|W\|_\infty\le M$, for the exact projection or for any number of sweeps that ends with a row normalization. Then $\delta(H)\ge e^{-4M}/n$. For the exact projection, $\sigma_2(H)\le1-e^{-4M}$.
\end{enumerate}
\end{theorem}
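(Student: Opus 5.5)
For part (a), I write $H=n\delta\,J/n+(1-n\delta)K$ with $\delta=\delta(H)$. If $n\delta<1$, then $K:=(H-\delta J)/(1-n\delta)$ is nonnegative and has unit row and column sums, so $K\in\Bn$. If $n\delta=1$, then $H=J/n$ and the claim is immediate. By (F3), $JQ=0$, so $HQ=(1-n\delta)KQ$. Then (F1) gives $\|KQ\|_2\le\|K\|_2\|Q\|_2\le1$, and hence $\sigma_2(H)\le1-n\delta$.

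For the equality case, take $H=(1-t)P_\pi+tJ/n$. Then $HQ=(1-t)P_\pi Q$, and $P_\pi$ is orthogonal and preserves $\one^\perp$, so $\sigma_2(H)=1-t$. For $n\ge2$ and $t<1$, the minimal entry of $H$ is $t/n$, which gives $1-n\delta=1-t$. At $t=1$ both sides are $0$.

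For part (b), I use (F3) to get $QH=HQ$ and $Q^2=Q$. This gives $H_{L:l}Q=(H_{L-1}Q)\cdots(H_lQ)$, and submultiplicativity yields the first inequality. The second inequality follows from (a) together with $1-x\le e^{-x}$.

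For part (c), I note that $H_{L:l}\in\Bn$ by (F2). Then $H_{L:l}-J/n=H_{L:l}Q$, using $H_{L:l}J/n=J/n$. Each entry of this difference satisfies $|e_j\T (H_{L:l}Q)e_k|\le\|H_{L:l}Q\|_2$. Combining this with (b) gives the convergence statement.

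Part (d) is the main work. Let $A=\exp W$, so that $e^{-M}\le A_{ij}\le e^M$. The exact projection has the form $H=D_1AD_2$ with positive diagonal matrices. A truncated run ending with a row normalization also has this form, because it is a product of diagonal scalings. I then compare entries within a row. For any $i$ and any $j,k$,
\[
H_{ij}/H_{ik}=(A_{ij}/A_{ik})(d_{2,j}/d_{2,k}).
\]
So I need to bound the ratios of the column scalings. The last step is a row normalization, which gives $\sum_jH_{ij}=1$. If I can show $H_{ij}\ge e^{-4M}H_{ik}$ for all $j,k$, then $1\le nH_{i,\max}$ gives $H_{ij}\ge e^{-4M}/n$.

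To bound $d_{2,j}/d_{2,k}$, I use the fact that $d_2$ comes from a column normalization of the matrix $D_1A$, where $D_1$ is the current row scaling. Thus $1/d_{2,j}=\sum_i r_iA_{ij}$ for some positive $r$, so
\[
d_{2,j}/d_{2,k}=\frac{\sum_ir_iA_{ik}}{\sum_ir_iA_{ij}}\le e^{2M}.
\]
This gives $H_{ij}/H_{ik}\ge e^{-2M}e^{-2M}=e^{-4M}$. The same argument applies to the exact fixed point, using the column condition $\sum_iH_{ij}=1$. Here I must check that, in the truncated case, the column scaling really arises from a single column-normalization step. In the truncated case, the final column sweep sets the accumulated column scaling to the reciprocal of a column sum of $D_1A$ with $D_1$ positive diagonal, so this holds. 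Finally, the exact-projection bound $\sigma_2\le1-e^{-4M}$ follows by combining this entry bound with (a).
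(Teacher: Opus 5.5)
Your proposal is correct and follows the paper's proof essentially step by step: the same Doeblin splitting $H=\delta J+(1-n\delta)R$ with $R\in\Bn$ for (a), factorwise contraction on $\one^\perp$ for (b), and the identity $H_{L:l}-J/n=H_{L:l}Q$ for (c). For (d) you also argue as the paper does: the final column scaling is the reciprocal of the column sums of a row-scaled $\exp W$, so its ratios are at most $e^{2M}$, and a row entry of size at least $1/n$ then gives $\delta(H)\ge e^{-4M}/n$.
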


\begin{proof}[Proof sketch]
Write $H=n\delta\,(J/n)+(1-n\delta)R$ with $R\in\Bn$; this is a Doeblin splitting. On $\one^\perp$ the uniform part is zero, and $R$ is nonexpansive by (F1). This gives (a). Products act on $\one^\perp$ factor by factor, which gives (b). For (d), the column scalings have ratios at most $e^{2M}$. Thus two entries in one row differ by a factor of at most $e^{4M}$. Also, each row has an entry of at least $1/n$.
\end{proof}

\begin{corollary}[Width as a fading memory]\label{cor:fading}
Let $\rho_i:=\sigma_2(H_i)$ and $\rho_{L:l}:=\prod_{i=l}^{L-1}\rho_i$. Then for $l<L$,
\begin{equation}\label{eq:fading}
\|x^{\perp}_L\|_F\le\rho_{L:l}\|x^{\perp}_l\|_F+\sum_{i=l}^{L-1}\rho_{L:i+1}\|Q(\hpost_i)\T\|_2\|u_i\|_2 .
\end{equation}
For the backward map, $\|QH_{L:l}\T y\|_F\le\rho_{L:l}\|Qy\|_F$. If $\rho_i\le\rho<1$ for all $i$, then $\rho^{k}\le e^{-k/\tau}$ with the \emph{horizon}
\begin{equation}\label{eq:horizon}
\tau:=1/(1-\rho).
\end{equation}
\end{corollary}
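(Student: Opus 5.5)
We unroll the difference recursion of Theorem~\ref{thm:two-channel}, $x^{\perp}_{i+1}=H_i x^{\perp}_i+Q(\hpost_i)\T u_i$, from $i=l$ to $L-1$. This gives
\[
x^{\perp}_L=H_{L:l}\,x^{\perp}_l+\sum_{i=l}^{L-1}H_{L:i+1}\,Q(\hpost_i)\T u_i .
\]
Every vector or matrix acted on here has columns in $\one^\perp$. This holds for $x^\perp_l=Qx_l$ and for $Q(\hpost_i)\T u_i$, and by (F3) it is preserved by each $H_j$.

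The key estimate is $\|H_{L:i}Z\|_F\le\rho_{L:i}\|Z\|_F$ for every $Z\in\R^{n\times d}$ with $QZ=Z$. To prove it, write $H_{L:i}Z=H_{L:i}QZ$. For each column this gives a bound by $\|H_{L:i}Q\|_2=\sigma_2(H_{L:i})$, and Theorem~\ref{thm:contraction}(b) bounds that by $\rho_{L:i}$. Summing the squared column norms gives the Frobenius bound. We apply it to the first term with $Z=x^\perp_l$. For the $i$-th summand, $Z=Q(\hpost_i)\T u_i$ is a rank-one matrix, so $\|Z\|_F=\|Q(\hpost_i)\T\|_2\|u_i\|_2$. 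The triangle inequality then gives \eqref{eq:fading}. When $i=L-1$, $H_{L:L}=I$ and the empty product is $\rho_{L:L}=1$, so that term is consistent.

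For the backward map, note $QH_{L:l}\T y=H_{L:l}\T Qy$. This holds because the transpose of a doubly stochastic matrix is doubly stochastic, so it commutes with $Q$ by (F3). Also $H_{L:l}\T=H_l\T\cdots H_{L-1}\T$, and $\sigma_2(H_i\T)=\|H_i\T Q\|_2=\|QH_i\|_2=\|H_iQ\|_2=\rho_i$. Applying Theorem~\ref{thm:contraction}(b) to the transposed product, or using $\|H_{L:l}\T Q\|_2=\|QH_{L:l}\|_2=\|H_{L:l}Q\|_2$ directly, gives the column-wise bound $\rho_{L:l}\|Qy\|_F$.

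For the horizon, if $\rho_i\le\rho<1$, then $\rho^k=\exp(k\log\rho)$. Since $\log\rho\le\rho-1=-1/\tau$, we get $\rho^k\le e^{-k/\tau}$. No step is a real obstacle. The only point needing care is checking that every operand lies in $\one^\perp$, so that the relevant norm is $\sigma_2$ and not $\|H\|_2=1$.
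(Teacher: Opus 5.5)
Your proposal is correct and follows the paper's proof essentially step for step. You unroll the difference recursion, bound each term column by column through $\sigma_2(H_{L:i})\le\rho_{L:i}$ from Theorem~\ref{thm:contraction}(b) (with the rank-one factorization for the write-in terms), and combine with the triangle inequality; the backward bound via $QH_{L:l}\T=H_{L:l}\T Q$ from (F3) and the horizon bound via $\log\rho\le\rho-1$ are also the paper's own arguments.
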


Thus an mHC network is a residual network on the stream mean, plus an $(n-1)$-dimensional memory of the last $O(\tau)$ block outputs.

\paragraph{What this explains.}
In Figure~8 of \citet{xie2026mhc}, the trained mixers of single layers have many entries between $0.05$ and $0.5$. But the composites over 30, 60, and 90 layers have all entries between about $0.17$ and $0.35$. Theorem~\ref{thm:contraction} shows that this result is necessary. With $n=4$ and a minimum entry of $\delta=0.02$ per layer, $\sigma_2(H_{30:0})\le e^{-2.4}<0.091$, so each entry of the composite is within $0.091$ of $1/n$. Thus the early layers reach the late layers only through the stream mean.

\paragraph{Collapse-free mixers.}
HC arranges blocks in sequence or in parallel with $0/1$ mixers, that is, with vertices of $\Bn$ \citep{zhu2025hyper}. An interior mixer is a convex combination of such arrangements, and by Theorem~\ref{thm:contraction} their composites decay to the uniform average. We call a mixer $A$ \emph{collapse-free} if $\|Ax\|_2=\|x\|_2$ for all $x\perp\one$. The next result shows that, among nonnegative mixers, only the permutations are collapse-free. It also shows that the signed collapse-free mixers form an orthogonal group.

\begin{proposition}[Rigidity]\label{prop:rigidity}
Let $\Cn:=\{A\in\R^{n\times n}:\ A\one=\one,\ \|A\|_2\le1\}$ and $\On:=\{A\in\Cn:\ A\T A=I\}$.
\begin{enumerate}[leftmargin=1.4em,itemsep=0pt,topsep=1pt,label=(\alph*)]
\item Every $A\in\Cn$ satisfies $A\T\one=\one$, and $A$ is collapse-free if and only if $A\in\On$. Also, $\Cn=\conv\On$ and $\Bn=\Cn\cap\R^{n\times n}_{\ge0}$.
\item $\Bn\cap\On=\Sn$.
\item For $H\in\Bn$, $\sigma_2(H)=1$ if and only if $H$ is \emph{partly decomposable}. That is, there are sets $\mathcal R,\mathcal S\subset[n]$ with $0<|\mathcal R|=|\mathcal S|<n$ such that $H_{ij}=0$ if exactly one of $i\in\mathcal R$ and $j\in\mathcal S$ holds. In particular, $\sigma_2(H)<1$ for every $H\in\Bno$.
\item Let $H\in\Bn$ satisfy $\sigma_n(H)\ge1-\vartheta$ with $4n\vartheta<1$. Then $\eps_\pi(H)\le4n\vartheta$ for some $\pi\in\Sn$.
\end{enumerate}
\end{proposition}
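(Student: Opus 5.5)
For (a) I will first show $A\T\one=\one$ for $A\in\Cn$. Since $A\one=\one$, we have $\|A\one\|=\|\one\|$, so $\one$ is a top right singular vector of the contraction $A$. Hence $A\T A\one=\one$, because equality in $\|A\one\|\le\|A\|_2\|\one\|$ with $\|A\|_2\le1$ forces $\one$ to be an eigenvector of $A\T A$ with eigenvalue $1$. Then $A\T\one=A\T A\one=\one$.

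It follows that $A$ preserves $\one^\perp$ as well as $\mathrm{span}\,\one$. Therefore $A=J/n+QAQ$, with $\|QAQ\|_2\le1$. The map $A$ is collapse-free iff $QAQ$ is an isometry on $\one^\perp$, and this is equivalent to $A\T A=I$. This also gives $\Cn\cong\{B\in L(\one^\perp):\|B\|_2\le1\}$ and $\On\cong O(n-1)$.

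The equality $\Cn=\conv\On$ is the classical fact that the operator-norm unit ball is the convex hull of the orthogonal group. I will prove it via the SVD $B=U\Sigma V\T$, writing each $\sigma_k\in[0,1]$ as a convex combination of $\pm1$.

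For $\Bn=\Cn\cap\R_{\ge0}^{n\times n}$, the inclusion "$\subseteq$" is (F1). For "$\supseteq$", a nonnegative $A\in\Cn$ has $A\one=\one$, and $A\T\one=\one$ by the first part.

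For (b), take $H\in\Bn\cap\On$. Its columns are orthonormal and nonnegative, so they have disjoint supports, with $n$ columns spread over $n$ coordinates. Each column therefore has a single positive entry, which equals $1$ by unit norm. So $H$ is a permutation matrix. The converse inclusion is clear.

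For (c), suppose $\sigma_2(H)=1$. Then there is $x\perp\one$ with $\|Hx\|=\|x\|$, so $H\T Hx=x$. Now $S:=H\T H$ is symmetric doubly stochastic, and its eigenvalue $1$ has an eigenvector other than $\one$. By Perron--Frobenius, $S$ is then reducible, with blocks given by the level sets of $x$. I will take $\mathcal S$ to be an $S$-invariant class, that is, a union of level sets, and set $\mathcal R:=\{i:H_{ij}>0\text{ for some }j\in\mathcal S\}$. Because $S_{jk}=\sum_iH_{ij}H_{ik}$ vanishes between $\mathcal S$ and its complement, rows in $\mathcal R$ have no mass outside $\mathcal S$. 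Double stochasticity then gives $|\mathcal R|=\sum_{i\in\mathcal R,j\in\mathcal S}H_{ij}=|\mathcal S|$. The converse direction is direct: for a partly decomposable $H$, take $x=\one_{\mathcal S}-\tfrac{|\mathcal S|}{n}\one$ and check $\|Hx\|=\|x\|$. A positive $H$ is never partly decomposable, which gives the last claim of (c).

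Part (d) is the main obstacle. Since $\sigma_n(H)\ge1-\vartheta$, every singular value lies in $[1-\vartheta,1]$, so $\|H\T H-I\|_2\le 1-(1-\vartheta)^2\le2\vartheta$. The columns $h_j$ are therefore nearly orthonormal: $\ip{h_j}{h_k}\le2\vartheta$ for $j\ne k$ and $\|h_j\|^2\ge1-2\vartheta$. Each column has $\sum_ih_{ij}=1$ and $\sum_ih_{ij}^2\ge1-2\vartheta$, so its largest entry $m_j$ satisfies $m_j\ge\sum_ih_{ij}^2\ge1-2\vartheta>1/2$. I will let $\pi^{-1}(j)$ be the row of that maximal entry.

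This assignment is injective: two columns sharing a row $i$ would give $\ip{h_j}{h_k}\ge m_jm_k>1/4>2\vartheta$, which is impossible because $4n\vartheta<1$. So $\pi$ is a permutation. Every entry of column $j$ off the maximal one is at most $1-m_j\le2\vartheta\le4n\vartheta$. The only delicacy is keeping the constants within the stated $4n\vartheta$, and the crude bounds above already achieve this.
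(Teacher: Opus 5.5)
Your proof is correct. Parts (a) and (c) follow the paper's proof up to cosmetic differences. The paper gets $A\T\one=\one$ from the equality case of Cauchy--Schwarz in $n=\ip{A\T\one}{\one}\le\|A\T\one\|_2\|\one\|_2$, where you use $I-A\T A\succeq0$. Your test vector $\one_{\mathcal S}-\frac{|\mathcal S|}{n}\one$ is a multiple of the paper's $\one_{\mathcal S}/k-\one_{\mathcal S^c}/(n-k)$.

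One small point in (c): your claim that the invariant blocks of $S=H\T H$ are unions of level sets of $x$ is true but needs a line. For example, $\sum_{j,k}S_{jk}(x_j-x_k)^2=2\|x\|_2^2-2x\T Sx=0$ because $Sx=x$. Alternatively, take any nonempty proper $\mathcal S$ with no $S$-mass across it, which Perron--Frobenius gives directly, as the paper does.

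The real differences are in (b) and (d). In (b), the paper counts: $\sum_{ij}A_{ij}^2=\tr(A\T A)=n=\sum_{ij}A_{ij}$, and together with $0\le A_{ij}\le1$ this forces $0/1$ entries. That argument uses double stochasticity. Your disjoint-support argument uses only $A\ge0$ and $A\T A=I$, so it proves the more general fact that every nonnegative orthogonal matrix is a permutation matrix.

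In (d), the paper argues globally. It shows $\sum_{ij}H_{ij}(1-H_{ij})=n-\|H\|_F^2\le2n\vartheta$, so the entries at most $\frac12$ carry total mass at most $4n\vartheta<1$, which forces one large entry per row and column. You instead localize the deficit to each column through $\|He_j\|_2^2\ge\sigma_n(H)^2\ge1-2\vartheta$. This gives a stronger statement, $\eps_\pi(H)\le2\vartheta$. It holds already for $\vartheta<\frac14$ if you get injectivity from the row sums (two entries larger than $\frac12$ cannot share a row) rather than from $\ip{h_j}{h_k}$. Summing over all $n^2$ cells is what costs the paper the factor $2n$. Your bound would improve $\eps_\star$ in Corollary~\ref{cor:price} by the same factor.
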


\citet{guo2026ohc} also note part (b). By part (a), the collapse-free mixers in $\Cn$ form the group $\On\cong O(n-1)$ (Section~\ref{sec:isometric}). The proposition gives a square of design spaces:
\[
\begin{array}{ccc}
\Sn & \subset & \On\cong O(n-1)\\[-1pt]
\cap & & \cap\\[-1pt]
\Bn=\conv\Sn & \subset & \Cn=\conv\On .
\end{array}
\]
The top row contains isometries (no collapse and no explosion), and the bottom row contains contractions (no explosion). Nonnegativity, the design principle of mHC, selects the left column, and unit gain in every direction selects the top row. The two meet only at the permutations. Thus, in the design space of mHC, ``collapse-free'' means ``at a vertex''. Part (d) gives a quantitative version: a mixer that almost preserves all stream differences is near a vertex. Sections~\ref{sec:geometry} and~\ref{sec:viscosity} show that the Sinkhorn-logit parameterization approaches the vertices only slowly.

%% file: sec_geometry.tex
\section{The Sinkhorn-logit chart}\label{sec:geometry}

Let $\Phi(W):=\SK(\exp W)$ denote the exact projection. By Sinkhorn's theorem \citep{sinkhorn1964relationship}, every positive matrix $K$ has a unique doubly stochastic scaling $D_1KD_2$, with $D_1,D_2$ positive and diagonal. Thus $\Phi(W)=\diag(e^{\alpha})\exp(W)\diag(e^{\beta})$ for some $\alpha,\beta\in\R^n$. These vectors are unique, up to a constant that we add to $\alpha$ and subtract from $\beta$.

\begin{lemma}[Global chart]\label{lem:chart}
The map $\Phi:\R^{n\times n}\to\Bno$ is smooth and surjective, and $\Phi(W)=\Phi(W')$ if and only if $W-W'\in\gauge$. The restriction of $\Phi$ to $\Tsp$ is a diffeomorphism onto $\Bno$, with inverse $H\mapsto Q(\log H)Q$. If $W_k\in\Tsp$ and $\Phi(W_k)\to P_\pi$, then $\|W_k\|_\infty\to\infty$.
\end{lemma}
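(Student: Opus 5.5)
The plan has five parts, taken in order: the gauge description of the fibers, surjectivity, the inverse formula on $\Tsp$, smoothness of $\Phi$ and of its inverse, and divergence of the logits near a vertex. The tool throughout is Sinkhorn's theorem as stated above, which writes $\Phi(W)=\diag(e^{\alpha})\exp(W)\diag(e^{\beta})$. Taking entrywise logarithms gives
\[
\log\Phi(W)=W+\alpha\one\T+\one\beta\T .
\]
So $\log\Phi(W)-W\in\gauge$ for every $W$.

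\textbf{Fibers.} Suppose $W-W'=a\one\T+\one b\T\in\gauge$. Then $\exp W=\diag(e^{a})\exp(W')\diag(e^{b})$, so $\Phi(W')$ is also a diagonal scaling of $\exp W$. By uniqueness of the doubly stochastic scaling, $\Phi(W)=\Phi(W')$. Conversely, if $\Phi(W)=\Phi(W')$, the displayed identity gives $W-W'=\log\Phi(W')-\log\Phi(W)+(\text{element of }\gauge)\in\gauge$.

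\textbf{Surjectivity and the inverse.} For $H\in\Bno$, $\log H$ is defined and $\exp(\log H)=H$ is already doubly stochastic. Hence $\Phi(\log H)=H$, which proves surjectivity. Since $\gauge=\Tsp^\perp$, each fiber $W+\gauge$ meets $\Tsp$ in exactly one point, namely the orthogonal projection of $W$ onto $\Tsp$. This projection is $W\mapsto QWQ$: indeed $QWQ\in\Tsp$, and $W-QWQ=(J/n)W+W(J/n)-(J/n)W(J/n)\in\gauge$. So $\Phi|_{\Tsp}$ is a bijection onto $\Bno$, with inverse $H\mapsto Q(\log H)Q$, the $\Tsp$-representative of the fiber through $\log H$.

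\textbf{Smoothness.} The inverse $H\mapsto Q(\log H)Q$ is plainly smooth on $\Bno$. For $\Phi$ itself, I would use the implicit function theorem on the scaling equations
\[
F(\alpha,\beta,W)=\Big(\textstyle\sum_j e^{\alpha_i+W_{ij}+\beta_j}-1,\ \sum_i e^{\alpha_i+W_{ij}+\beta_j}-1\Big),
\]
after normalizing the gauge by fixing $\beta_n=0$. The key step is nondegeneracy of the Jacobian in $(\alpha,\beta)$. At a solution, this Jacobian is the block matrix $\begin{pmatrix} I & H\\ H\T & I\end{pmatrix}$. Its kernel consists of pairs with $a=-Hb$ and $b=-H\T a$, hence $b=H\T Hb$. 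Proposition~\ref{prop:rigidity}(c) gives $\sigma_2(H)<1$ on $\Bno$, so $b\in\operatorname{span}\one$, and then $a=-b$. This is exactly the gauge direction, which the normalization $\beta_n=0$ removes. Hence $(\alpha,\beta)$ depends smoothly on $W$, and so does $\Phi$. The same nondegeneracy shows that $D\Phi|_{\Tsp}$ is invertible, which confirms that $\Phi|_{\Tsp}$ is a diffeomorphism. I expect this step to be the main obstacle, since it is the only place where the argument is more than bookkeeping.

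\textbf{Divergence near a vertex.} Let $W_k\in\Tsp$ with $\Phi(W_k)\to P_\pi$, and suppose instead that $\|W_k\|_\infty\le M$ along a subsequence. Theorem~\ref{thm:contraction}(d) then gives $\delta(\Phi(W_k))\ge e^{-4M}/n$ along that subsequence. But entries of $\Phi(W_k)$ off the support of $\pi$ tend to $0$, a contradiction. (Alternatively, a bounded subsequence converges to some $W$, and continuity gives $\Phi(W)=P_\pi\notin\Bno$, again a contradiction.)
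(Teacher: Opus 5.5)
Your proposal follows the paper's proof essentially step for step. The shared steps are the fiber description via uniqueness of the scaling, surjectivity via $\Phi(\log H)=H$, and the Frobenius projection $W\mapsto QWQ$ onto $\Tsp$. You also use the same implicit-function argument with Jacobian $\begin{pmatrix}I&H\\H\T&I\end{pmatrix}$, whose kernel is the gauge line by Proposition~\ref{prop:rigidity}(c), and the same divergence argument via Theorem~\ref{thm:contraction}(d). Your compactness alternative for the divergence also works and avoids that theorem.

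There is one real hole, in the smoothness step, exactly where the paper spends an extra step. After fixing $\beta_n=0$ you have $2n-1$ unknowns, but $F$ has $2n$ components. So your partial Jacobian is a $2n\times(2n-1)$ matrix, and showing it is injective does not let you invoke the implicit function theorem, which needs an isomorphism onto the target. The missing observation is that one equation is redundant. With $M=\diag(e^\alpha)e^W\diag(e^\beta)$, the identity $\sum_i\big((M\one)_i-1\big)=\sum_{i,j}M_{ij}-n=\sum_j\big((M\T\one)_j-1\big)$ holds for all $(\alpha,\beta,W)$. Hence $F$ takes values in the hyperplane $E:=\{(r,c)\in\R^{2n}:\sum_ir_i=\sum_jc_j\}$, which has dimension $2n-1$. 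The partial Jacobian therefore maps the $(2n-1)$-dimensional slice into $E$, and injectivity plus the dimension count make it a bijection onto $E$. The implicit function theorem then applies to $F$ viewed as an $E$-valued map. Equivalently, drop the last column equation, which the others then imply.

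The paper does this with the slice $\sum_j\beta_j=0$. Your slice $\beta_n=0$ works equally well, since it also meets the kernel line $\{(s\one,-s\one)\}$ only at $0$. Finally, your remark that ``the same nondegeneracy'' makes $\dd\Phi|_{\Tsp}$ invertible is not needed. A smooth bijection with the explicit smooth inverse $H\mapsto Q(\log H)Q$ is already a diffeomorphism, and this is how the paper concludes.
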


The chart is natural for matrix scaling. The set $\Bno$ is the quotient of the positive matrices by the scalings $K\mapsto D_1KD_2$, and $W\mapsto W+\gauge$ is the same action in logarithmic coordinates. The next lemma identifies the gradient in logit space.

\begin{lemma}[Differential of the chart]\label{lem:differential}
Let $H=\Phi(W)$, and define $A_H:\R^{n\times n}\to\Tsp$ by $A_H(\xi):=\Pfr_H(H\odot\xi)$.
\begin{enumerate}[leftmargin=1.4em,itemsep=0pt,topsep=1pt,label=(\alph*)]
\item $\dd\Phi_W(V)=A_H(V)$ for every $V\in\R^{n\times n}$.
\item $A_H$ is self-adjoint and positive semidefinite, with kernel $\gauge$. On $\Tsp$ it is the inverse of the Fisher--Rao metric tensor: $g_H(\xi,\zeta)=\ip{A_H^{-1}\xi}{\zeta}$ for $\xi,\zeta\in\Tsp$.
\item For every differentiable $f$,
\begin{equation}\label{eq:logit-grad}
\nabla_W(f\circ\Phi)(W)=A_H\big(\nabla f(H)\big)=\gradFR f(H).
\end{equation}
\end{enumerate}
\end{lemma}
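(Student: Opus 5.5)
The plan is to differentiate the scaling identity directly. Write $\Phi(W)=\diag(e^{\alpha})\exp(W)\diag(e^{\beta})$, with $\alpha(W),\beta(W)$ fixed smoothly by a gauge normalization such as $\beta_n=0$. Lemma~\ref{lem:chart} gives smoothness of $\Phi$, and the implicit function theorem gives smoothness of $\alpha,\beta$ because the scaling is unique up to that one constant. Differentiating $H_{ij}=\exp(\alpha_i+W_{ij}+\beta_j)$ in the direction $V$ gives
\[
\dd\Phi_W(V)=H\odot(V+\dot\alpha\one\T+\one\dot\beta\T).
\]
Set $\xi:=H\odot V$ and $a:=-\dot\alpha$, $b:=-\dot\beta$. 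Then $\dd\Phi_W(V)=\xi-H\odot(a\one\T+\one b\T)$. Since $\Phi$ takes values in $J/n+\Tsp$, the differential lies in $\Tsp$. Taking row sums and column sums of the vanishing condition gives exactly the system \eqref{eq:FRproj} for $(a,b)$. The projection formula stated in Section~\ref{sec:setup} then yields $\dd\Phi_W(V)=\Pfr_H(H\odot V)=A_H(V)$. This proves (a).

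For (b), I would compute $\ip{A_H(\xi)}{\zeta}$ in a symmetric form. Use the $g_H$-orthogonal decomposition $\R^{n\times n}=\Tsp\oplus H\odot\gauge$ from Lemma~\ref{lem:gauge-complement}. Because $\Pfr_H$ is $g_H$-orthogonal, for any $\xi,\zeta$ we have
\[
\ip{\Pfr_H(H\odot\xi)}{\zeta}=g_H\big(\Pfr_H(H\odot\xi),H\odot\zeta\big)=g_H\big(\Pfr_H(H\odot\xi),\Pfr_H(H\odot\zeta)\big).
\]
The last expression is symmetric in $\xi,\zeta$ and nonnegative when $\xi=\zeta$. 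So $A_H$ is self-adjoint and positive semidefinite.

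The kernel of $A_H$ is $\{\xi:\ H\odot\xi\in H\odot\gauge\}$, which equals $\gauge$ since $H>0$. Self-adjointness and kernel $\gauge=\Tsp^\perp$ imply that $A_H$ maps $\Tsp$ bijectively onto $\Tsp$. For the metric identity, take $\xi,\zeta\in\Tsp$ and put $\eta:=A_H^{-1}\xi\in\Tsp$, so $\xi=\Pfr_H(H\odot\eta)$. Then
\[
g_H(\xi,\zeta)=g_H\big(\Pfr_H(H\odot\eta),\zeta\big)=g_H(H\odot\eta,\zeta)=\ip{\eta}{\zeta}.
\]
The middle step uses $\zeta\in\Tsp$ and the orthogonality of $\Pfr_H$, and the last step uses $g_H(H\odot\eta,\zeta)=\ip{\eta}{\zeta}$. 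This gives (b).

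For (c), the chain rule gives $\ip{\nabla_W(f\circ\Phi)}{V}=\ip{\nabla f(H)}{A_H(V)}$. Self-adjointness of $A_H$ turns the right side into $\ip{A_H(\nabla f(H))}{V}$, so $\nabla_W(f\circ\Phi)(W)=A_H(\nabla f(H))$. By definition \eqref{eq:FRgrad}, $A_H(\nabla f(H))$ is $\gradFR f(H)$.

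The main obstacle I expect is in (a): justifying differentiability of the scaling vectors and checking that the linearized system is exactly \eqref{eq:FRproj}. The projection itself is uniquely determined, even though $(a,b)$ is not, so the gauge ambiguity is harmless once $\Pfr_H$ is characterized as the $g_H$-orthogonal projection.
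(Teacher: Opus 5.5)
Your proposal is correct and follows essentially the paper's route: differentiate $\log\Phi(W)=W+\alpha\one\T+\one\beta\T$ to place $\dd\Phi_W(V)$ in $\Tsp\cap(H\odot V+H\odot\gauge)=\{\Pfr_H(H\odot V)\}$, then get (b) from the $g_H$-orthogonality of $\Pfr_H$ together with $g_H(\zeta,H\odot\theta)=\ip{\zeta}{\theta}$, and (c) from the chain rule and self-adjointness. Your symmetric expression $g_H(\Pfr_H(H\odot\xi),\Pfr_H(H\odot\zeta))$ gives self-adjointness and positivity in one step, a little more directly than the paper's splitting of $\zeta$. One small correction: justify the smoothness of $\alpha,\beta$ by the smoothness of $\Phi$ (with $\beta_n=0$ they are explicit smooth functions of $W$ and $\log\Phi(W)$), not by the uniqueness of the scaling alone.
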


Part (c) says that backpropagation through Sinkhorn computes exactly the Fisher--Rao gradient of \citet{douik2019manifold}. But the training step applies this matrix to $W$, not to $H$. Thus the step applies the inverse metric twice. The next theorem makes this statement precise. We write $H^{(k)}$ for the $k$-th training iterate, to separate it from the mixer $H_l$ of layer $l$.

\begin{theorem}[Induced dynamics]\label{thm:induced}
Let $f$ be $C^1$, and let $H(t)=\Phi(W(t))$.
\begin{enumerate}[leftmargin=1.4em,itemsep=1pt,topsep=1pt,label=(\alph*)]
\item \textup{(Logit gradient flow.)} If $\dot W=-\nabla_W(f\circ\Phi)(W)$, then $\dot H=-A_H^{2}\,\nabla f(H)$. This flow is the Riemannian gradient flow of $f$ for the \emph{squared Fisher--Rao metric} $\mathcal G_H(\xi,\zeta):=\ip{A_H^{-1}\xi}{A_H^{-1}\zeta}$ on $\Tsp$.
\item \textup{(Straight-through flow.)} If $\dot W=-\nabla f(\Phi(W))$, then $\dot H=-\gradFR f(H)$. This flow is the Fisher--Rao gradient flow, and it is also the entropic mirror flow $\frac{\dd}{\dd t}\log H\in-\nabla f(H)+\gauge$.
\item \textup{(Three-way identity.)} Let $\eta>0$, $H^{(k)}=\Phi(W^{(k)})$, $\Gamma^{(k)}:=\nabla f(H^{(k)})$, and $W^{(k+1)}=W^{(k)}-\eta\Gamma^{(k)}$. Then
\begin{align}
H^{(k+1)}&=\SK\big(H^{(k)}\odot e^{-\eta\Gamma^{(k)}}\big)\notag\\
&=\argmin_{H\in\Bn}\big\{\eta\ip{\Gamma^{(k)}}{H}+\KL(H\,\|\,H^{(k)})\big\}\notag\\
&=R_{H^{(k)}}\big(-\eta\,\gradFR f(H^{(k)})\big),\label{eq:three-way}
\end{align}
where $R_H(\xi):=\SK(H\odot\exp(\xi\oslash H))$ is the Sinkhorn retraction of \citet{douik2019manifold}.
\end{enumerate}
\end{theorem}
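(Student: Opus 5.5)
Everything follows from Lemma~\ref{lem:differential} and the chart structure of Lemma~\ref{lem:chart}. The three parts are handled in order.

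\emph{Part (a).} By the chain rule and Lemma~\ref{lem:differential}(a), $\dot H=\dd\Phi_W(\dot W)=A_H(\dot W)$. Part (c) of that lemma gives $\dot W=-A_H(\nabla f(H))$, hence $\dot H=-A_H^2\nabla f(H)$.

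For the metric claim, we use that $A_H$ restricted to $\Tsp$ is self-adjoint and positive definite (Lemma~\ref{lem:differential}(b): its kernel is $\gauge=\Tsp^\perp$). So $A_H^{-1}$ on $\Tsp$ is well defined, and $\mathcal G_H$ is a smooth Riemannian metric on $\Bno$.

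The $\mathcal G_H$-gradient $G$ is characterized by $\mathcal G_H(G,\zeta)=\ip{\nabla f(H)}{\zeta}$ for all $\zeta\in\Tsp$. The left side equals $\ip{A_H^{-2}G}{\zeta}$. Hence $A_H^{-2}G$ equals the Euclidean projection $P_\Tsp\nabla f$ of $\nabla f$ onto $\Tsp$, and so $G=A_H^2P_\Tsp\nabla f$. Since $A_H$ kills $\gauge$, we have $A_H\nabla f=A_HP_\Tsp\nabla f$. Therefore $G=A_H^2\nabla f$, which matches the flow.

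\emph{Part (b).} By the same chain rule, $\dot H=-A_H(\nabla f(H))=-\gradFR f(H)$ by \eqref{eq:FRgrad}. This is the Fisher--Rao gradient flow because, on $\Tsp$, $g_H(\xi,\zeta)=\ip{A_H^{-1}\xi}{\zeta}$ makes the $g_H$-gradient equal to $A_H P_\Tsp\nabla f=A_H\nabla f$.

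For the mirror form, write $H=\diag(e^\alpha)e^W\diag(e^\beta)$. Then $\log H=W+\alpha\one\T+\one\beta\T$. Differentiating gives $\frac{\dd}{\dd t}\log H\in\dot W+\gauge=-\nabla f(H)+\gauge$. Differentiability of $\alpha,\beta$ in $t$ follows from smoothness of $\Phi$ together with a choice of gauge fixing, for example $\sum\alpha=0$; the implicit function theorem applied to the scaling equations gives this.

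\emph{Part (c).} This part is algebraic. From the scaling form, $e^{W^{(k+1)}}=e^{W^{(k)}}\odot e^{-\eta\Gamma}$. Moreover $H^{(k)}=D_1e^{W^{(k)}}D_2$, and diagonal scalings commute with entrywise multiplication by $e^{-\eta\Gamma}$. Hence $H^{(k)}\odot e^{-\eta\Gamma}$ and $e^{W^{(k+1)}}$ differ by a diagonal scaling, and uniqueness in Sinkhorn's theorem gives the first equality.

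For the KL equality, the objective is strictly convex on $\Bn$. Since $\partial_H\KL=\log(H/H^{(k)})$ blows up at the boundary, the minimizer lies in $\Bno$. The KKT conditions with multipliers for the row and column constraints give $\eta\Gamma+\log H-\log H^{(k)}\in\gauge$. Equivalently, $H=\diag(u)\,(H^{(k)}\odot e^{-\eta\Gamma})\diag(v)$, and this matrix is doubly stochastic, so it is the Sinkhorn scaling.

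For the retraction, set $\xi=-\eta\,\gradFR f(H^{(k)})$. By \eqref{eq:FRproj}, $\gradFR f=H\odot(\Gamma-a\one\T-\one b\T)$. Thus $\xi\oslash H=-\eta\Gamma+\eta(a\one\T+\one b\T)$, and the extra factor $\exp(\eta(a\one\T+\one b\T))$ is a diagonal scaling that $\SK$ absorbs.

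\emph{Main obstacle.} The delicate step is the metric identification in (a). One has to handle carefully that $\nabla f$ need not lie in $\Tsp$, and that $A_H^{-1}$ is defined only on $\Tsp$. The kernel statement of Lemma~\ref{lem:differential}(b) resolves this. A secondary point is the interiority of the KL minimizer, which is the standard boundary-derivative argument given in part (c).
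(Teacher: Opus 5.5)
Your proposal is correct and follows the paper's proof essentially step for step. It uses the chain rule with Lemma~\ref{lem:differential} for (a) and (b), with your $P_\Tsp$ playing the role of the paper's projection $\xi\mapsto Q\xi Q$. It uses the invariance $\SK(D_1KD_2)=\SK(K)$ for the first and third equalities of (c). For the KL equality it uses strict convexity, interiority of the minimizer, and the Lagrange conditions.

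There are only two small differences. You verify the mirror inclusion directly from $\log H=W+\alpha\one\T+\one\beta\T$, whereas the paper shows via Lemma~\ref{lem:gauge-complement}(b) that any $\gauge$-corrected mirror velocity tangent to $\Bn$ must equal $-\gradFR f(H)$. The paper also checks that $R$ is a first-order retraction, which you take from the citation.
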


Thus one update is three methods. The first expression in \eqref{eq:three-way} is an exponentiated-gradient step \citep{helmbold2009learning}, and the second is an entropic mirror descent step \citep{beck2003mirror}. The third is a Riemannian gradient descent step on $(\Bno,g)$ with a first-order retraction.

\begin{proof}[Proof sketch]
Parts (a) and (b) follow from the chain rule and Lemma~\ref{lem:differential}. Part (c) depends on one fact: the Sinkhorn projection does not change under diagonal scalings, $\SK(D_1KD_2)=\SK(K)$. Each of the three updates produces $H^{(k)}\odot e^{-\eta\Gamma^{(k)}}$ up to such a scaling.
\end{proof}

\begin{remark}[The simplex as a check]\label{rem:simplex}
For one row, that is, for the simplex with $p=\mathrm{softmax}(w)$, the analogue of $A_H$ is $A_p=\diag(p)-pp\T$, the inverse Fisher--Rao tensor. The straight-through flow $\dot w=-\nabla f(p)$ gives $\dot p=-A_p\nabla f(p)$. This is the replicator equation, which is the Fisher--Rao gradient flow \citep{shahshahani1979new,hofbauer1998evolutionary}. The exact logit flow gives $\dot p=-A_p^{2}\nabla f(p)$. Theorem~\ref{thm:induced} is the doubly stochastic version of this pair of flows. The gauge $\gauge$ takes the place of the single normalizing constant.
\end{remark}

\begin{remark}[Cost]\label{rem:cost}
mHC backpropagates through the Sinkhorn loop with a dedicated kernel \citep[Sec.~4.3]{xie2026mhc}. By Theorem~\ref{thm:induced}(c), the straight-through update skips this backward pass, and it is not an approximation. It is exact entropic mirror descent, a natural-gradient method with its own convergence theory \citep{beck2003mirror}. Remark~\ref{rem:truncated-three-way} discusses the truncated projection.
\end{remark}

\begin{remark}[Dynamic logits]\label{rem:dynamic}
In mHC the logits are $W=\lambda\,\mathrm{mat}(\tilde x\varphi)+b$. Here $\lambda\in\R$, $\varphi$, and $b$ are trainable, and $\tilde x\in\R^{1\times nd}$ is a normalized vector that depends on the input. For one input, gradient flow on $b$ induces exactly $\dot W=-\nabla_W(f\circ\Phi)$. Gradient flow on $\varphi$ induces $\dot W=-\lambda^2\|\tilde x\|_2^2\,\nabla_W(f\circ\Phi)$, the same flow at a different speed. For a batch, the update averages the logit gradients of the inputs with the Gram weights $\lambda^2\ip{\tilde x}{\tilde x'}$. Thus every statement about the logit flow holds input by input, up to this positive semidefinite weighting.
\end{remark}

%% file: sec_viscosity.tex
\section{Vertex viscosity}\label{sec:viscosity}

By Proposition~\ref{prop:rigidity} a nonnegative mixer is collapse-free only at a vertex, and by Lemma~\ref{lem:chart} the vertices are at infinity in the logit chart. How fast can training move toward a vertex, or away from it? We compare the two flows of Theorem~\ref{thm:induced}: the logit gradient flow moves at a polynomial rate, and the Fisher--Rao flow moves at an exponential rate. On the simplex, earlier work shows this gap: softmax policy gradient converges at rate $O(1/t)$, and natural policy gradient converges linearly \citep{mei2020global,li2021softmax}. We lift the gap to $\Bn$, for every $C^1$ loss $f$ and in both directions. Below, $G$ is a bound on $\|\nabla f(H(t))\|_\infty$ along the trajectory.

The mechanism is visible in \eqref{eq:FRgrad}. By Lemma~\ref{lem:fit-normal}, $\gradFR f(H)=H\odot(\nabla f(H)-m)$, where $m=a\one\T+\one b\T$ is the weighted additive fit of $\nabla f(H)$. If $m$ is bounded, then each entry of the gradient is proportional to the corresponding entry of $H$. On the simplex the fit is one number, the $p$-weighted mean of the gradient, and the bound is immediate. On $\Bn$ the fit has $2n-1$ parameters, and its weights can be very unbalanced. The next lemma gives the bound that we need.

\begin{lemma}[Weighted additive fits]\label{lem:additive}
Let $\omega\in\R^{n\times n}_{>0}$ and $C\in\R^{n\times n}$. The problem
\[
\min_{a,b\in\R^n}\ \sum_{i,j}\omega_{ij}\,(C_{ij}-a_i-b_j)^2
\]
has a minimizer, and its fitted values $m_{ij}:=a_i+b_j$ are unique. They satisfy
\begin{equation}\label{eq:additive}
\|m\|_\infty\le(2n-1)\,\|C\|_\infty .
\end{equation}
The constant $2n-1$ is sharp.
\end{lemma}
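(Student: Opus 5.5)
The plan is to treat existence and uniqueness as a projection statement, and to get the bound \eqref{eq:additive} from a gap argument on the sorted values of the fit.

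\textbf{Existence and uniqueness.} Because $\omega>0$, the form $\ip{X}{Y}_\omega:=\sum_{ij}\omega_{ij}X_{ij}Y_{ij}$ is an inner product on $\R^{n\times n}$. The problem is to find the $\ip{\cdot}{\cdot}_\omega$-orthogonal projection of $C$ onto the linear subspace $\gauge=\{a\one\T+\one b\T\}$. That projection exists and is unique. So $m$ is unique, even though $(a,b)$ is determined only up to $a\mapsto a+c\one$, $b\mapsto b-c\one$. The normal equations (as in \eqref{eq:FRproj} and Lemma~\ref{lem:fit-normal}) say that the residual $R:=C-m$ satisfies
\[
\sum_j\omega_{ij}R_{ij}=0\ \text{ for every } i,\qquad \sum_i\omega_{ij}R_{ij}=0\ \text{ for every } j .
\]

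\textbf{The bound.} Set $K:=\|C\|_\infty$, $x_i:=a_i$ and $y_j:=-b_j$, so that $m_{ij}=x_i-y_j$. Put all $2n$ numbers $x_1,\dots,x_n,y_1,\dots,y_n$ into one sorted list. This list has $2n-1$ consecutive gaps. The key claim is that every gap that lies between some $x_i$ and some $y_j$ has size at most $K$. Once this holds, $|m_{ij}|=|x_i-y_j|$ is a sum of at most $2n-1$ such gaps, which gives $(2n-1)K$.

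To prove the claim, fix a gap of size $g>0$. Let $U$ be the set of indices (rows and columns) whose values lie above the gap, and let $L$ be the rest. Add the row equations over the rows in $U$, and subtract the column equations over the columns in $U$. The terms with row and column both in $U$ cancel, and we get
\[
\sum_{i\in U,\ j\in L}\omega_{ij}R_{ij}=\sum_{i\in L,\ j\in U}\omega_{ij}R_{ij}.
\]
On the left, $m_{ij}=x_i-y_j\ge g$, so $R_{ij}\le K-g$. On the right, $m_{ij}\le -g$, so $R_{ij}\ge g-K$.

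Suppose the gap separates some $x_i$ above from some $y_j$ below. Then the left index set is nonempty, and all weights are positive. If $g>K$, the left side is strictly negative while the right side is $\ge0$, which is a contradiction. The case where some $y_j$ lies above and some $x_i$ lies below is symmetric, using the right side. Gaps that do not separate any row value from any column value never enter a difference $x_i-y_j$. This proves the claim.

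\textbf{Sharpness.} I would build a chain in which all $2n-1$ gaps are nearly $K$ at once. Take the sorted order $x_1>y_1>x_2>y_2>\dots>x_n>y_n$. Put weight $1$ on the "path" entries $(i,i)$ and $(i+1,i)$, and weight $\eta\to0$ elsewhere. Choose $C=\pm K$ with alternating signs along the path, so that each path residual is forced to vanish while the path differences alternate $\pm K$. The far entry $m_{1n}$ then tends to $(2n-1)K$ in magnitude. The concrete task is to pick the signs so that the limiting normal equations, which amount to solving on a tree, force $m=C$ on the path. The tiny off-path weights must then not spoil this in the limit, so it comes down to checking continuity of the projection as $\eta\to0$.

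I expect the main obstacle to be this sharpness construction: getting the signs right and passing to the limit. The gap argument for the upper bound should be routine once the cut identity is written down.
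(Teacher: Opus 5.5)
Your proof is correct, and it takes a genuinely different route from the paper's.

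\textbf{The paper's route.} The paper uses the Cauchy--Binet formula and Cramer's rule to write the weighted least-squares fit as a convex combination $\sum_S\lambda_S\,Xz_S$ of exact interpolants. These interpolants live on the nonsingular square subsystems, which are the spanning trees of $K_{n,n}$; this step uses the unimodularity of the incidence matrix. On a tree, each fitted value is an alternating sum of entries of $C$ along a path with at most $2n-1$ edges.

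\textbf{Your route.} You use only the normal equations. Your cut identity is a flow-conservation statement, and it shows that every consecutive gap in the merged sorted list of the $a_i$ and $-b_j$ is at most $\|C\|_\infty$. In fact a gap larger than $\|C\|_\infty$ would force both cross sums to be empty, and that happens only if $U$ or $L$ is empty. So every gap separates a row value from a column value, and your caveat about the other gaps is vacuous. Your argument is more elementary: it needs no determinants and no matrix-tree facts. It also gives slightly more information, namely that the whole spread of the $2n$ numbers $a_i,-b_j$ is at most $(2n-1)\|C\|_\infty$.

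\textbf{What the paper's representation buys.} It gives sharpness almost for free. With weight $1$ on a Hamiltonian path $S_0$ and $\nu\to0$ elsewhere, the explicit weights give $\lambda_{S_0}\to1$ directly.

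\textbf{Your sharpness step.} You need the separate limit argument you describe, and it is routine. Fix the gauge $b_n=0$. The limiting normal matrix is then $X_{S_0}\T X_{S_0}$, which is nonsingular because the path is a spanning tree. Since inversion is continuous there, the fit converges to the path interpolant. Your signs are the right ones: with $C_{ii}=+K$ and $C_{i+1,i}=-K$, the limit gives $x_i-y_i=y_i-x_{i+1}=K$, hence $m_{1n}\to(2n-1)K$.
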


\begin{proof}[Proof sketch]
By the Cauchy--Binet formula and Cramer's rule, a weighted least-squares solution is a convex combination of the solutions of the square subsystems \citep{bental1990geometric}. For the additive model, the nonsingular square subsystems are the spanning trees of the complete bipartite graph $K_{n,n}$. On a tree, the fitted value of a cell is an alternating sum of data along a path with at most $2n-1$ edges. A Hamiltonian path with data $\pm1$ shows that the bound is sharp.
\end{proof}

\begin{lemma}[Small gradients near vertices]\label{lem:small-grad}
Let $H\in\Bno$ and $G:=\|\nabla f(H)\|_\infty$. Then $|\gradFR f(H)_{ij}|\le 2nG\,H_{ij}$ for all $(i,j)$, and $\|\gradFR f(H)\|_\infty\le 2n(n-1)\,G\,\eps(H)$. Consequently, for every $W$ and $H=\Phi(W)$,
\begin{equation}\label{eq:flat}
\|\nabla_W(f\circ\Phi)(W)\|_\infty\le 2n(n-1)\,\|\nabla f(H)\|_\infty\,\eps(H).
\end{equation}
\end{lemma}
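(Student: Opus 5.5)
We bound $\gradFR f(H)$ entrywise using its fitted form, then use the vertex structure to convert the entrywise bound into one in terms of $\eps(H)$.

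\textbf{Entrywise form.} By Lemma~\ref{lem:fit-normal} and \eqref{eq:FRgrad}, $\gradFR f(H)=H\odot(\nabla f(H)-m)$. Here $m_{ij}=a_i+b_j$ is the weighted least-squares additive fit of $C:=(H\odot\nabla f(H))\oslash H=\nabla f(H)$ with weights $\omega=H>0$. Lemma~\ref{lem:additive} applies since $H\in\Bno$, and it gives $\|m\|_\infty\le(2n-1)G$. Hence
\[
|\gradFR f(H)_{ij}|\le H_{ij}\,(G+(2n-1)G)=2nG\,H_{ij}.
\]
This is the first claim.

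\textbf{Bound in terms of $\eps(H)$.} Fix $\pi$ attaining $\eps(H)=\eps_\pi(H)$. For $j\ne\pi(i)$ we have $H_{ij}\le\eps$, so $|\gradFR f(H)_{ij}|\le 2nG\eps\le 2n(n-1)G\eps$ (using $n\ge2$). The on-support entries $(i,\pi(i))$ are where $H_{ij}$ is close to $1$, so the first bound is useless there. Instead we use that $\gradFR f(H)\in\Tsp$, so each row sums to zero:
\[
\gradFR f(H)_{i\pi(i)}=-\sum_{j\ne\pi(i)}\gradFR f(H)_{ij}.
\]
The right side has $n-1$ terms, each at most $2nG\eps$ in absolute value, so the on-support entry is at most $2n(n-1)G\eps$. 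This gives $\|\gradFR f(H)\|_\infty\le 2n(n-1)G\eps(H)$.

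\textbf{Consequence \eqref{eq:flat}.} By Lemma~\ref{lem:differential}(c), $\nabla_W(f\circ\Phi)(W)=\gradFR f(\Phi(W))$, and $\Phi(W)\in\Bno$ by Lemma~\ref{lem:chart}. Substituting into the previous bound gives \eqref{eq:flat}.

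The only step with real content is the fit bound, and it is supplied by Lemma~\ref{lem:additive}. What remains to check is that the data fed to the fit is exactly $\nabla f(H)$, that is, that the normal equations \eqref{eq:FRproj} with $\xi=H\odot\nabla f(H)$ coincide with the weighted fit of $\xi\oslash H$. I would confirm this against Lemma~\ref{lem:fit-normal} before anything else, since the factor $H_{ij}$ in the entrywise bound depends on it.
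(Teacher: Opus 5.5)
Your proposal is correct and matches the paper's proof step for step. It uses the fitted form $\gradFR f(H)=H\odot(\nabla f(H)-m)$ from Lemma~\ref{lem:fit-normal} with $\|m\|_\infty\le(2n-1)G$ from Lemma~\ref{lem:additive}, handles the off-support cells via $H_{ij}\le\eps_\pi(H)$ and $1\le n-1$, handles the on-support cells via the zero row sums of an element of $\Tsp$, and finishes with Lemma~\ref{lem:differential}(c). The only cosmetic difference is that the paper proves the bound for every $\pi$ and then minimizes, whereas you fix a minimizing $\pi$ at the start. The consistency check you flag at the end is exactly the last sentence of Lemma~\ref{lem:fit-normal} with $C=\nabla f(H)$.
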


The bound \eqref{eq:flat} is a property of the landscape, not of a dynamics. If $\Phi(W_k)\to P_\pi$, then the logit gradient of every loss goes to zero. Thus every vertex is an asymptotic critical point of $f\circ\Phi$, also if $P_\pi$ does not minimize $f$.

\begin{theorem}[Vertex viscosity of the logit flow]\label{thm:viscosity}
Let $W(t)$, $t\in[0,T)$, solve $\dot W=-\nabla_W(f\circ\Phi)(W)$, let $H(t)=\Phi(W(t))$, and let $G:=\sup_{t<T}\|\nabla f(H(t))\|_\infty<\infty$. Then for all $i,j$ and $t$,
\begin{equation}\label{eq:viscosity}
\Big|\frac{\dd}{\dd t}\log H_{ij}(t)\Big|\ \le\ 4n^{3}\,G\,\eps(H(t)).
\end{equation}
Let $\eps_0:=\eps(H(0))$ and $c:=4n^3G$. Then
\begin{align}
\eps(H(t))&\ge\frac{\eps_0}{1+c\,\eps_0\,t}\quad\text{for } t<T,\label{eq:approach}\\
\eps(H(t))&\le\frac{\eps_0}{1-c\,\eps_0\,t}\quad\text{for } t<T,\ t<\tfrac{1}{c\eps_0}.\label{eq:escape}
\end{align}
Thus the flow needs time at least $(1/\eps-1/\eps_0)/c$ to reach a proximity $\eps<\eps_0$, and time at least $1/(2c\,\eps_0)$ to double the proximity.
\end{theorem}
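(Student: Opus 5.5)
The plan is to write $\frac{\dd}{\dd t}\log H_{ij}$ as an entrywise residual of a weighted additive fit, bound it with Lemma~\ref{lem:additive} and Lemma~\ref{lem:small-grad}, and then integrate a scalar differential inequality for $\eps(H(t))$.

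\emph{Step 1: the pointwise rate bound.} By Theorem~\ref{thm:induced}(a) and Lemma~\ref{lem:differential}(c), $\dot H=-A_H^2\nabla f(H)=-A_H(g)$ with $g:=\gradFR f(H)$. By Lemma~\ref{lem:fit-normal}, for any $\xi$ we have $\Pfr_H(H\odot\xi)=H\odot(\xi-m(\xi))$. Here $m(\xi)$ is the $H$-weighted additive least-squares fit of $\xi$, since the fit of $(H\odot\xi)\oslash H=\xi$ is what \eqref{eq:FRproj} computes. Applying this with $\xi=g$ gives
\[
\frac{\dd}{\dd t}\log H_{ij}=\frac{\dot H_{ij}}{H_{ij}}=-\big(g_{ij}-m(g)_{ij}\big).
\]
Lemma~\ref{lem:additive}, with weights $\omega=H>0$ (valid since $H(t)\in\Bno$ by Lemma~\ref{lem:chart}), gives $\|m(g)\|_\infty\le(2n-1)\|g\|_\infty$. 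Hence $|\frac{\dd}{\dd t}\log H_{ij}|\le 2n\|g\|_\infty$. Lemma~\ref{lem:small-grad} then gives $\|g\|_\infty\le 2n(n-1)G\,\eps(H)$. Combining, the bound is $4n^2(n-1)G\,\eps(H)\le 4n^3G\,\eps(H)$, which is \eqref{eq:viscosity}.

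\emph{Step 2: a differential inequality for $\eps$.} Since $H(t)$ stays in $\Bno$, every entry is positive, so $\eps(t):=\eps(H(t))>0$. The function $\log\eps(t)=\min_\pi\max_{i,\,j\ne\pi(i)}\log H_{ij}(t)$ is a finite min of finite maxes of $C^1$ functions. Each of these has derivative bounded in absolute value by $c\,\eps(t)$. So $\log\eps$ is locally Lipschitz, and at every point of differentiability (almost every $t$) its derivative equals the derivative of an active branch. This yields $|\frac{\dd}{\dd t}\log\eps|\le c\,\eps$ almost everywhere. Equivalently, $|\frac{\dd}{\dd t}(1/\eps)|\le c$ almost everywhere, and $1/\eps$ is absolutely continuous.

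\emph{Step 3: integration.} Integrating from $0$ to $t$ gives $1/\eps_0-ct\le 1/\eps(t)\le1/\eps_0+ct$. The right inequality is \eqref{eq:approach}. The left one, when $ct\eps_0<1$, is \eqref{eq:escape}. For the time statements: reaching $\eps<\eps_0$ requires $1/\eps\le 1/\eps_0+ct$, so $t\ge(1/\eps-1/\eps_0)/c$. Reaching $2\eps_0$ requires $1/(2\eps_0)\ge 1/\eps_0-ct$, so $t\ge1/(2c\eps_0)$.

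\emph{Main obstacle.} I expect the main point of care to be Step 1. One must identify $A_H(g)$ correctly as $H\odot(g-\text{fit of }g)$, so that the weighted-fit bound applies to $g$ itself rather than to $H\odot g$. The nonsmoothness of $\eps$ in Step 2 is routine once it is handled through Lipschitz min/max functions.
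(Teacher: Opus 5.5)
Your proof is correct and follows the paper's argument. Both write $\frac{\dd}{\dd t}\log H_{ij}=-g_{ij}+m(g)_{ij}$; the paper gets this from the scaling vectors $\alpha,\beta$ rather than from $\dot H=A_H\dot W$. Both bound it by $2n\|g\|_\infty\le4n^2(n-1)G\,\eps(H)$ using Lemma~\ref{lem:additive} and Lemma~\ref{lem:small-grad}, and then integrate $1/\eps$. The only real difference is how the nonsmoothness of $\eps$ is handled: the paper uses Lemma~\ref{lem:eps-mono}, Dini derivatives and the comparison Lemma~\ref{lem:dini}, while you use local Lipschitz continuity and absolute continuity. Both routes are valid.
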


\input{fig_main}

\begin{proof}[Proof sketch]
Along the flow, $\log H_{ij}=W_{ij}+\alpha_i+\beta_j$. The logit velocity $\dot W$ is the Fisher--Rao gradient, so it is $O(\eps)$ by Lemma~\ref{lem:small-grad}. If we differentiate the marginal constraints, we see that $\dot\alpha_i+\dot\beta_j$ is the weighted additive fit of $-\dot W$. Thus it is also $O(\eps)$ by Lemma~\ref{lem:additive}. The result is $|\dot\eps|\le c\,\eps^2$, which we integrate with Dini derivatives because $\eps$ is only Lipschitz.
\end{proof}

\begin{theorem}[Exponential rate of the Fisher--Rao flow]\label{thm:fr-rate}
Let $H(t)$ solve $\dot H=-\gradFR f(H)$, and let $G$ be as above. Then $|\frac{\dd}{\dd t}\log H_{ij}(t)|\le 2nG$, and therefore
\begin{equation}\label{eq:fr-bounds}
\eps_0\,e^{-2nGt}\ \le\ \eps(H(t))\ \le\ \eps_0\,e^{2nGt}.
\end{equation}
The loss $f_\pi(H):=-\ip{P_\pi}{H}$ (thus $G=1$) attains the exponential rate: for every start $H(0)=\Phi(W_0)$,
\begin{equation}\label{eq:fr-attained}
\eps(H(t))\le(n-1)^{n-1}\,e^{2\|W_0\|_\infty}\,e^{-t}\quad\text{for all } t\ge0.
\end{equation}
For the same loss and start, the logit gradient flow satisfies \eqref{eq:approach} with $c=4n^3$.
\end{theorem}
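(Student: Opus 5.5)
The first claim follows from Lemma~\ref{lem:small-grad}. Along $\dot H=-\gradFR f(H)$ we have $\frac{\dd}{\dd t}\log H_{ij}=-\gradFR f(H)_{ij}/H_{ij}$. Since $|\gradFR f(H)_{ij}|\le 2nG\,H_{ij}$, this gives $|\frac{\dd}{\dd t}\log H_{ij}|\le 2nG$ directly. By Lemma~\ref{lem:fit-normal} we can also write $\frac{\dd}{\dd t}\log H_{ij}=-(\nabla f_{ij}-m_{ij})$, which shows that $H$ stays in $\Bno$ for all time.

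Integrating, $H_{ij}(0)e^{-2nGt}\le H_{ij}(t)\le H_{ij}(0)e^{2nGt}$ for every entry. The maps $H\mapsto\eps_\pi(H)$ (a max over entries) and $H\mapsto\eps(H)$ (a min over $\pi$) are monotone under entrywise multiplicative perturbations. Hence each $\eps_\pi$ satisfies the two-sided bound, and so does $\eps$, which is \eqref{eq:fr-bounds}.

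For the attained rate I would use the straight-through representation of Theorem~\ref{thm:induced}(b). With $\nabla f_\pi=-P_\pi$, the logit equation $\dot W=-\nabla f(\Phi(W))=P_\pi$ integrates explicitly to $W(t)=W_0+tP_\pi$. Hence $H(t)=\diag(e^{\alpha})K\diag(e^{\beta})$ with $K_{kl}=e^{(W_0)_{kl}}e^{t[l=\pi(k)]}$.

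The scaling factors cancel in any ratio of products along permutations. For permutations $\sigma,\pi$,
\[
\prod_k H_{k\sigma(k)}\Big/\prod_k H_{k\pi(k)}=\prod_k K_{k\sigma(k)}\Big/\prod_k K_{k\pi(k)} .
\]
Since $\prod_k H_{k\pi(k)}\le1$, it suffices to bound the $H$-entries on $\sigma$ from below. So for an off-support cell $(i,j)$, I would pick $\sigma$ with $\sigma(i)=j$ that differs from $\pi$ on a single cycle. Then:
\begin{itemize}[leftmargin=1.2em,itemsep=1pt,topsep=1pt]
\item each mismatched factor of $K$ costs $e^{-t}$;
\item the $W_0$-dependence should be controlled by $e^{2\|W_0\|_\infty}$ after using the gauge freedom to normalize $W_0$;
\item the remaining factors $H_{k\sigma(k)}$ with $k\ne i$ need a lower bound.
\end{itemize}
The constant $(n-1)^{n-1}$ suggests the lower-bound mechanism. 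I would argue that along the flow the on-support entries satisfy $H_{k\pi(k)}\ge$ something like $1/(n-1)$ relative to the competing entries. Alternatively, I would average over the $(n-1)!$-type family of permutations through $(i,j)$ using the Birkhoff decomposition. Either route should give $H_{ij}\le (n-1)^{n-1}e^{2\|W_0\|_\infty}e^{-t}$.

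The main obstacle is this step: turning a product (permanent-type) identity into a bound on a single entry with exactly the stated constant. My first attempt would be a transposition $\sigma=\pi\circ(\cdot\,\cdot)$ with a cross-ratio bound, iterated along the cycle. If that loses too much, I would fall back on comparing $H_{ij}$ with the full sum $\sum_{\sigma(i)=j}\prod_k H_{k\sigma(k)}$, that is, with $\mathrm{per}$ of the minor.

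The final sentence is immediate. For $f_\pi$ we have $\|\nabla f_\pi\|_\infty=1$, so $G=1$ and Theorem~\ref{thm:viscosity} applies with $c=4n^3$, which yields \eqref{eq:approach}.
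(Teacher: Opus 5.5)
Your proof of the first claim, your representation $H(t)=\Phi(W_0+tP_\pi)$ (which implicitly uses uniqueness of solutions of the Fisher--Rao flow), and the final sentence are fine and agree with the paper. Your permutation-product identity is also the right tool, and the paper uses the same cancellation. If $\sigma$ differs from $\pi$ on $k$ rows, the scalings cancel, and $H_{l\pi(l)}\le1$ gives $\prod_{l:\,\sigma(l)\ne\pi(l)}H_{l\sigma(l)}\le(e^{2\|W_0\|_\infty}e^{-t})^k$. No gauge normalization of $W_0$ is needed.

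The gap is the step you flag yourself. You need a cycle of off-support cells whose entries are \emph{all} comparable to $\eps_\pi(H(t))$, and none of your routes produces one.
A lower bound on the on-support entries is not what the argument needs, since they enter only through $H_{l\pi(l)}\le1$. It is also false at small $t$ for general $W_0$.
A transposition controls only the product $H_{i\pi(i')}H_{i'\pi(i)}$, and the marginals give no lower bound on the reverse entry. For example, take $\pi=\mathrm{id}$, a $3$-cycle $c$, and $0<\eta\ll\eps$. The matrix $(1-\eps-\eta)I+\eps P_c+\eta P_c\T$ has every transposition product equal to $\eps\eta\ll\eps^2$.
The permanent of the $(i,j)$-minor is itself of the order of the off-support entries near the vertex, so bounding it from below brings back the same problem.
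Requiring the cycle to pass through a prescribed cell $(i,j)$ also makes the task harder than necessary.

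The missing idea is to build the cycle greedily from the marginals and let it close wherever it closes.
Start at a cell $(i_0,j_0)$ off $\pi$ where $\bar\eps:=\eps_\pi(H(t))$ is attained. Given $(i_r,j_r)$, set $i_{r+1}:=\pi^{-1}(j_r)$.
Column $j_r$ sums to one, so $H_{i_{r+1}j_r}\le1-H_{i_rj_r}$. Row $i_{r+1}$ sums to one, so its $n-1$ off-support entries sum to at least $H_{i_rj_r}$. One of them, $H_{i_{r+1}j_{r+1}}$, is therefore at least $H_{i_rj_r}/(n-1)$, and so $H_{i_rj_r}\ge\bar\eps/(n-1)^r$.

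Among the rows $i_0,\dots,i_n$ two coincide. Take $b\le n$ minimal with $i_b=i_a$ for some $a<b$. Then $k:=b-a\ge2$, because $\pi(i_{r+1})=j_r\ne\pi(i_r)$, and the rows $i_a,\dots,i_{b-1}$ are distinct.
Define $\sigma(i_r):=j_r$ for $a\le r<b$ and $\sigma:=\pi$ elsewhere. This is a permutation that differs from $\pi$ on one cycle, so your identity applies. All its mismatched entries are at least $\bar\eps/(n-1)^{n-1}$.
Hence $(\bar\eps/(n-1)^{n-1})^k\le(e^{2\|W_0\|_\infty}e^{-t})^k$. Taking the $k$-th root and using $\eps\le\eps_\pi$ gives \eqref{eq:fr-attained}.

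This is the paper's argument, and the pigeonhole step is where the constant $(n-1)^{n-1}$ comes from. The closed cycle in general does not contain $(i_0,j_0)$.
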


\begin{proof}[Proof sketch]
Along the flow, $H(t)$ is the scaling of $\exp(W_0)\odot(P_\pi+e^{-t}(J-P_\pi))$. Thus each entry off $\pi$ has a factor $e^{-t}$, and the scaling vectors could, in principle, cancel it. They cannot. A chain of entries off $\pi$, none of them small, closes into a cycle. The product of the entries of a cycle does not depend on the scaling, up to bounded factors.
\end{proof}

For a linear loss with a unique optimal permutation, the Fisher--Rao trajectory is the entropic trajectory of the assignment linear program. This trajectory converges exponentially \citep{cominetti1994asymptotic}. Theorem~\ref{thm:fr-rate} is a self-contained case with explicit constants. For $n=2$ both flows have closed forms, and the comparison becomes a pair of scalar equations.

\begin{proposition}[Two streams]\label{prop:n2}
Let $n=2$ and $w:=W_{11}+W_{22}-W_{12}-W_{21}$. Then $\Phi(W)=\begin{psmallmatrix}p&q\\q&p\end{psmallmatrix}$ with $q=1-p$ and $\log\frac pq=\frac w2$, and $q=\eps(\Phi(W))$ if $p\ge\frac12$. For $f(H)=\ip{C}{H}$, let $\Delta:=C_{12}+C_{21}-C_{11}-C_{22}$; thus $\Delta>0$ means that $I$ is the optimal vertex. Then
\begin{equation}\label{eq:n2}
\begin{aligned}
\text{Fisher--Rao:}\ \ &\dot w=\Delta, & \dot q&=-\tfrac{\Delta}{2}\,q(1-q),\\
\text{logit:}\ \ &\dot w=2\Delta pq, & \dot q&=-\Delta\,q^{2}(1-q)^{2}.
\end{aligned}
\end{equation}
For $\Delta<0$, the logit flow satisfies $q(t)\le q_0/(1-|\Delta|q_0t)$.
\end{proposition}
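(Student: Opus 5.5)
The plan is to reduce everything to the single gauge-invariant coordinate $w$ and then differentiate.

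\textbf{Parametrization.} First I describe $\mathcal B_2$. The constraints $H\one=\one$ and $H\T\one=\one$ force $H_{11}=H_{22}=p$ and $H_{12}=H_{21}=q=1-p$. By Lemma~\ref{lem:chart}, $\Phi(W)=\diag(e^\alpha)\exp(W)\diag(e^\beta)$. The cross-ratio $H_{11}H_{22}/(H_{12}H_{21})$ is unchanged by row and column scalings, so it equals $e^{W_{11}+W_{22}-W_{12}-W_{21}}=e^{w}$. This gives $p^2/q^2=e^w$, hence $\log(p/q)=w/2$. The two permutations of $[2]$ are $I$ and the swap. If $p\ge\frac12$, the nearest one is $I$, and by \eqref{eq:vertexeps} $\eps_I(H)=q\le p=\eps_{(1\,2)}(H)$, so $\eps=q$.

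\textbf{The two flows.} Write $E:=\begin{psmallmatrix}1&-1\\-1&1\end{psmallmatrix}$, so that $w=\ip{E}{W}$ and $\dot w=\ip{E}{\dot W}$. Everything reduces to the scalar $q(w)=1/(1+e^{w/2})$, which satisfies $\dd q/\dd w=-\tfrac12pq$.

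For the Fisher--Rao flow I use the straight-through form of Theorem~\ref{thm:induced}(b): $\dot W=-\nabla f=-C$. This gives $\dot w=-\ip{E}{C}=\Delta$. Differentiating $\log\frac{1-q}{q}=\frac w2$ gives $-\dot q/(q(1-q))=\Delta/2$, which is the stated equation $\dot q=-\tfrac{\Delta}{2}q(1-q)$.

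For the logit flow, I avoid computing $A_H$ and work directly with $f\circ\Phi$. On $\mathcal B_2$,
\[
f(H)=p(C_{11}+C_{22})+q(C_{12}+C_{21})=(C_{11}+C_{22})+\Delta q .
\]
Hence $f\circ\Phi$ depends on $W$ only through $w$, and
\[
\nabla_W(f\circ\Phi)=\Delta\,\frac{\dd q}{\dd w}\,E=-\tfrac{\Delta}{2}pq\,E .
\]
So $\dot W=\tfrac{\Delta}{2}pq\,E$, and since $\|E\|_F^2=4$, $\dot w=2\Delta pq$. The chain rule then gives $\dot q=-\tfrac12pq\cdot\dot w=-\Delta p^2q^2=-\Delta q^2(1-q)^2$. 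As a check, this matches $\dot H=-A_H^2\nabla f$ from Theorem~\ref{thm:induced}(a).

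\textbf{The escape bound.} For $\Delta<0$ the logit equation reads $\dot q=|\Delta|q^2(1-q)^2\le|\Delta|q^2$. Since $q>0$ on $\Bno$, I divide by $q^2$ to get $\frac{\dd}{\dd t}(1/q)\ge-|\Delta|$, hence $1/q(t)\ge 1/q_0-|\Delta|t$. Inverting gives $q(t)\le q_0/(1-|\Delta|q_0t)$ wherever the right side is positive. Outside that range the bound is vacuous, or is read as in \eqref{eq:escape}.

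\textbf{Main obstacle.} The only delicate step is justifying that $f\circ\Phi$ factors through $w$. This holds because $\Phi(W)$ depends only on the class $W+\gauge$ (Lemma~\ref{lem:chart}), and for $n=2$ the quotient $\R^{2\times2}/\gauge$ is one-dimensional, spanned by the functional $W\mapsto\ip{E}{W}=w$ (since $E\perp\gauge$ and $\dim\Tsp=1$). Everything else is the one-variable chain rule.
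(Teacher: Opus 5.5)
Your proof is correct and follows the paper's outline: the cross-ratio gives $\log(p/q)=w/2$, the straight-through form $\dot W=-C$ gives $\dot w=\Delta$, the chain rule $\dot q=-\frac12pq\,\dot w$ converts each flow to an equation for $q$, and integrating $\frac{\dd}{\dd t}(1/q)\ge-|\Delta|$ gives the escape bound. The one difference is in the logit flow. The paper uses Lemma~\ref{lem:differential}(c) to write $\dot W=-\Pfr_H(H\odot C)=-\lambda E$ and computes $\lambda=g_H(H\odot C,E)/g_H(E,E)=-\frac{\Delta}{2}pq$. You instead write $f\circ\Phi=(C_{11}+C_{22})+\Delta\,q(w)$ and differentiate directly, which is more elementary and turns your comparison with $\dot H=-A_H^2\nabla f(H)$ into an independent check rather than a consequence of the same lemma.
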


Near the vertex ($q\to0$) the Fisher--Rao flow is exponential, $\dot q\approx-\frac{\Delta}{2}q$, and the logit flow is quadratic, $\dot q\approx-\Delta q^{2}$, so $q(t)\approx1/(\Delta t)$. For $\Delta<0$ the same equations describe the escape. The Fisher--Rao escape is logistic, and the logit flow needs time at least $1/(2|\Delta|q_0)$ to double $q$. Thus the exponent in Theorem~\ref{thm:viscosity} is sharp. We do not claim that the constant $4n^3$ is sharp (Remark~\ref{rem:constant}). The next result converts Theorem~\ref{thm:viscosity} into a time cost of a long horizon.

\begin{corollary}[The price of a long horizon]\label{cor:price}
Let $c_0\in(0,1)$. Assume that the residual path of an $L$-layer network preserves the difference channel up to the factor $c_0$, that is, $\sigma_n(H_{L:0})\ge c_0$. Let $\eps_\star:=8n(n-1)\log(1/c_0)/L<1$. Then at least half of the layers satisfy $\eps(H_l)\le\eps_\star$. From a common start $\eps_0>\eps_\star$, the logit gradient flow needs time at least $(1/\eps_\star-1/\eps_0)/(4n^3G)$ to reach this configuration, which is linear in $L$. The corresponding lower bound for the Fisher--Rao flow, $\log(\eps_0/\eps_\star)/(2nG)$, grows only as $\log L$.
\end{corollary}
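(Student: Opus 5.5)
The plan is to turn the global hypothesis $\sigma_n(H_{L:0})\ge c_0$ into a per-layer bound on $\sigma_n(H_l)$ for most layers, using determinants. Then Proposition~\ref{prop:rigidity}(d) turns each such bound into a vertex-proximity bound. Finally, the two rate theorems turn the proximity bound into time bounds.

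\emph{Step 1 (determinant budget).} By (F3) each $H_l$ maps $\one$ to $\one$ and $\one^\perp$ to itself. So its singular values are $1$ (on $\one$) together with the singular values of $H_l|_{\one^\perp}$. The same holds for the composite $H_{L:0}$. Since $\sigma_1\le1$ by (F1), we get
\[
|\det H_{L:0}|=\prod_{k=2}^n\sigma_k(H_{L:0})\ge\sigma_n(H_{L:0})^{n-1}\ge c_0^{\,n-1}.
\]
For each layer, all singular values are at most $1$, so $|\det H_l|\le\sigma_n(H_l)$. Multiplicativity of the determinant then gives
\[
\sum_{l=0}^{L-1}\log\frac{1}{\sigma_n(H_l)}\le(n-1)\log\frac1{c_0}.
\]
In particular $\sigma_n(H_l)>0$ for every $l$.

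\emph{Step 2 (Markov).} By Markov's inequality, at least half of the layers satisfy
\[
\log\frac{1}{\sigma_n(H_l)}\le\theta:=\frac{2(n-1)\log(1/c_0)}{L}.
\]
For these layers $\sigma_n(H_l)\ge e^{-\theta}\ge1-\theta$. Put $\vartheta=\theta$. Then $4n\vartheta=8n(n-1)\log(1/c_0)/L=\eps_\star<1$, so the hypothesis of Proposition~\ref{prop:rigidity}(d) holds. It gives $\eps_\pi(H_l)\le4n\theta=\eps_\star$ for some $\pi$, hence $\eps(H_l)\le\eps_\star$.

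\emph{Step 3 (time costs).} I treat each such layer's mixer as following its own logit gradient flow from proximity $\eps_0$, with $G$ bounding $\|\nabla f\|_\infty$ along every trajectory. Bound \eqref{eq:approach} of Theorem~\ref{thm:viscosity} requires $\eps_0/(1+c\eps_0t)\le\eps_\star$ before the layer can reach the configuration. With $c=4n^3G$ this is exactly $t\ge(1/\eps_\star-1/\eps_0)/c$. Since $1/\eps_\star$ is proportional to $L$, this bound is linear in $L$. For the Fisher--Rao flow, the lower bound in \eqref{eq:fr-bounds} requires $\eps_0e^{-2nGt}\le\eps_\star$, that is, $t\ge\log(\eps_0/\eps_\star)/(2nG)$, which grows like $\log L$.

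The main obstacle is Step 1. Submultiplicativity alone only yields $\sigma_n(H_l)\ge c_0$ for each single layer, which is far too weak to force proximity $O(1/L)$. The determinant is what converts the one global bound into a summable per-layer budget. One must also check that the restriction to $\one^\perp$ contributes exactly $n-1$ factors, which produces the factor $(n-1)$ in $\eps_\star$. Everything after that is a direct invocation of earlier results.
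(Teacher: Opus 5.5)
Your proposal is correct and follows essentially the same route as the paper. The paper also uses a determinant budget to get $\prod_l\sigma_n(H_l)\ge c_0^{\,n-1}$, then Markov's inequality and Proposition~\ref{prop:rigidity}(d), and then \eqref{eq:approach} and \eqref{eq:fr-bounds} for the two time bounds. The only differences are cosmetic. You use the full determinant, where the $\one$ direction contributes the factor $1$, and the paper uses $\det(U\T H_lU)$ from Lemma~\ref{lem:two-block}. You also apply Markov to $\log(1/\sigma_n(H_l))$ and then use $e^{-\theta}\ge1-\theta$, and the paper applies it directly to $1-\sigma_n(H_l)$.
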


Thus a long memory is expensive for the logit parameterization in two ways. By Proposition~\ref{prop:rigidity}(d), at least half of the layers must be near a vertex. By Theorem~\ref{thm:viscosity}, gradient flow reaches such layers only after a time proportional to the depth.

\begin{remark}[Adaptive optimizers]\label{rem:adam}
mHC trains with AdamW \citep{kingma2015adam}, which divides each update by a running root-mean-square of the gradient coordinate. Near a vertex the logit gradient is $O(\eps)$ by \eqref{eq:flat}, but its running root-mean-square is also $O(\eps)$. Thus the normalized update is $O(\eta)$, and Adam moves at an exponential rate (Figure~\ref{fig:main}(b,c)). Theorem~\ref{thm:viscosity} is therefore a statement about gradient flow, gradient descent, and the landscape. The landscape is flat at every vertex for every loss, so every optimizer receives a gradient signal of size $O(\eps)$. The collapse of Section~\ref{sec:signal} and the budget limit of Section~\ref{sec:sinkhorn} do not depend on the optimizer.
\end{remark}

%% file: fig_main.tex
\begin{figure*}[t]
\centering
\includegraphics[width=\textwidth]{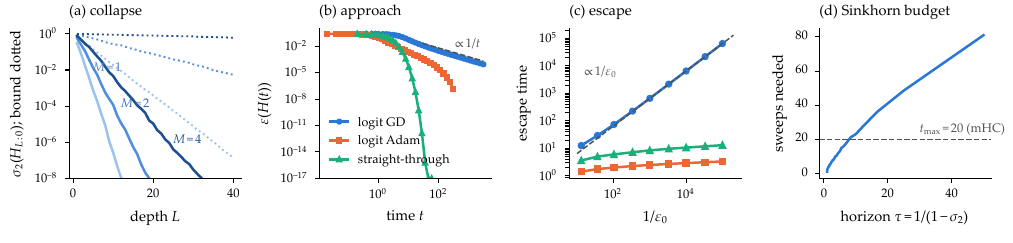}
\caption{Numerical checks with $n=4$. (a) Second singular value of the composite of $L$ projected random mixers (median of 30 draws), and the bound $\exp(-n\sum_l\delta(H_l))$ of Theorem~\ref{thm:contraction}(b) (dotted). (b) Vertex proximity under three updates, from the uniform matrix. The dashed line is proportional to $1/t$. (c) Time to leave the identity vertex, as a function of the initial proximity $\eps_0$. The dashed line is proportional to $1/\eps_0$. (d) Sinkhorn sweeps that give a column-sum error of $10^{-3}$, as a function of the horizon $\tau$ of the target.}
\label{fig:main}
\end{figure*}

%% file: sec_sinkhorn.tex
\section{Truncated Sinkhorn}\label{sec:sinkhorn}

mHC stops after $t_{\max}=20$ sweeps, and \citet{xie2026mhc} attribute the drift of the composite backward gain (up to about $1.6$ over 60 layers) to this truncation. Several follow-ups remove the truncation (Section~\ref{sec:intro}), but none of them measures its effect. We ask two questions. How large is the error of a truncated projection? How does this error grow with depth?

Fix $K>0$ with Sinkhorn scaling $H=\diag(\mu^\star)K\diag(\nu^\star)\in\Bno$. We describe the iteration by its column scaling $\nu>0$. After a row normalization, the current matrix is $\diag(\mu(\nu))K\diag(\nu)$ with $\mu(\nu):=\one\oslash(K\nu)$. One sweep (columns, then rows) maps $\nu$ to
\begin{equation}\label{eq:sweep}
\Psi(\nu):=\one\oslash\big(K\T\mu(\nu)\big).
\end{equation}
The fixed point is $\nu^\star$, and $\Psi(s\nu)=s\Psi(\nu)$ for $s>0$.

\begin{proposition}[Local convergence factor]\label{prop:knight}
Let $\nu=\nu^\star\odot(\one+v)$. Then
\[
\Psi(\nu)=\nu^\star\odot\big(\one+H\T Hv+O(\|v\|^2)\big),
\]
and the column sums of $\diag(\mu(\nu))K\diag(\nu)$ are $\one+(I-H\T H)v+O(\|v\|^2)$. On $\one^\perp$ the map $H\T H$ is symmetric with norm $\sigma_2(H)^2$. Thus, in the linearized iteration $v_k=(H\T H)^kv_0$, the column-sum error $r_k:=\|(I-H\T H)v_k\|_\infty$ satisfies $r_k\le\sigma_2(H)^{2k}\|Qv_0\|_2$. This bound is tight if $Qv_0$ has a component along a top singular vector of $HQ$ \citep{knight2008sinkhorn}. With the horizon $\tau=1/(1-\sigma_2(H))$,
\begin{equation}\label{eq:sweeps-per-efold}
e^{-2k/(\tau-1)}\ \le\ \sigma_2(H)^{2k}\ \le\ e^{-2k/\tau}.
\end{equation}
\end{proposition}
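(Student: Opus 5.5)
The plan is to linearize the sweep map $\Psi$ of \eqref{eq:sweep} at its fixed point $\nu^\star$ by a first-order expansion of the two entrywise reciprocals, using only the fixed-point identities $\mu^\star=\one\oslash(K\nu^\star)$ and $\nu^\star=\one\oslash(K\T\mu^\star)$. These identities say exactly that $H=\diag(\mu^\star)K\diag(\nu^\star)$ has unit row and column sums.

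\textbf{Linearization.} Put $\nu=\nu^\star\odot(\one+v)$. Then $K\nu=K\nu^\star+K\diag(\nu^\star)v$. Divide entrywise by $K\nu^\star=\one\oslash\mu^\star$. This gives $K\nu=(K\nu^\star)\odot(\one+Hv)$, because $\diag(\mu^\star)K\diag(\nu^\star)=H$. Hence
\[
\mu(\nu)=\mu^\star\odot\big(\one-Hv+O(\|v\|^2)\big).
\]
In the same way,
\[
K\T\mu(\nu)=(K\T\mu^\star)\odot\big(\one-H\T Hv+O(\|v\|^2)\big),
\]
since $\diag(\nu^\star)K\T\diag(\mu^\star)=H\T$. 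Taking reciprocals gives $\Psi(\nu)=\nu^\star\odot(\one+H\T Hv+O(\|v\|^2))$.

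\textbf{Column sums.} The column sums of $\diag(\mu(\nu))K\diag(\nu)$ are $\nu\odot(K\T\mu(\nu))$. By the expansion above this equals $(\one+v)\odot(\one-H\T Hv)+O(\|v\|^2)=\one+(I-H\T H)v+O(\|v\|^2)$.

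\textbf{Spectral part.} $H\T H$ is symmetric. By (F3) it fixes $\one$ and preserves $\one^\perp$. Its restriction to $\one^\perp$ has norm $\|HQ\|_2^2=\sigma_2(H)^2$, and it is positive semidefinite.

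For the linearized iteration $v_k=(H\T H)^kv_0$, write $v_0=c\one+Qv_0$. The $\one$-component is fixed and is annihilated by $I-H\T H$. This reflects the scaling invariance $\Psi(s\nu)=s\Psi(\nu)$. So
\[
(I-H\T H)v_k=(I-H\T H)(H\T H)^kQv_0.
\]
The eigenvalues of $H\T H$ on $\one^\perp$ lie in $[0,\sigma_2^2]$. Hence $(I-H\T H)(H\T H)^k$ has norm at most $\max_{\lambda\in[0,\sigma_2^2]}(1-\lambda)\lambda^k\le\sigma_2^{2k}$ there. Using $\|\cdot\|_\infty\le\|\cdot\|_2$ gives $r_k\le\sigma_2^{2k}\|Qv_0\|_2$.

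For tightness, take $Qv_0$ to be a top right singular vector $u$ of $HQ$, which is an eigenvector of $H\T H$ with eigenvalue $\sigma_2^2$. Then $r_k=(1-\sigma_2^2)\sigma_2^{2k}\|u\|_\infty$. So the geometric factor $\sigma_2^2$ is attained, and the prefactor is fixed. I read the word ``tight'' in the statement in this rate sense. This is the only place where I expect some care: the claim is about the rate, not about equality of constants.

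\textbf{Horizon bounds.} With $\sigma_2=1-1/\tau$, the elementary inequalities $\log(1-x)\le-x$ and $\log(1-x)\ge-x/(1-x)$ at $x=1/\tau$ give
\[
-\frac{1}{\tau-1}\le\log\sigma_2\le-\frac1\tau .
\]
Multiplying by $2k$ and exponentiating yields \eqref{eq:sweeps-per-efold}. Here $\tau>1$ whenever $\sigma_2>0$, and the bound is trivial for $\sigma_2=0$.
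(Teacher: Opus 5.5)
Your proof is correct and matches the paper's argument step for step: the same first-order expansion of the two entrywise reciprocals via the fixed-point identities, the same removal of the $\one$-component, the same spectral bound on $\one^\perp$, and the same elementary logarithm inequalities for the horizon bounds. The one difference is that you check tightness only when $Qv_0$ is itself a top singular vector, whereas the paper projects onto the top eigenspace to handle any nonzero component $w$ there, obtaining $r_k\ge(1-\sigma_2(H)^2)\,\sigma_2(H)^{2k}\|w\|_2/\sqrt n$; this is a one-line extension of your computation.
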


Thus one $e$-fold reduction of the error costs between $(\tau-1)/2$ and $\tau/2$ sweeps. A long memory (Corollary~\ref{cor:fading}) needs mixers with $\tau\gg1$, and Sinkhorn projects exactly these mixers most slowly. The next result shows how the remaining errors grow with depth.

\begin{proposition}[Drift of the backward gain]\label{prop:drift}
For $l<L$, let $H_l$ be nonnegative and row-stochastic, with column sums in $[1-e_l,1+e_l]$ and $e_l\le\frac12$. (Any number of sweeps that ends with a row normalization gives a nonnegative row-stochastic matrix.) Then $H_{L:l}\one=\one$, every column sum of $H_{L:l}$ is in $[e^{-2\sum_ie_i},e^{\sum_ie_i}]$, and $\|H_{L:l}\|_2\le e^{\frac12\sum_ie_i}$.
\end{proposition}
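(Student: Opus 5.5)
The plan is to track the row sums and the column sums of the composite directly, using only nonnegativity, and then to get the spectral norm from the interpolation bound $\|A\|_2^2\le\|A\|_1\|A\|_\infty$ (the Schur test). No Sinkhorn-specific facts are needed beyond the stated hypotheses.

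\emph{Row sums.} Each $H_i$ satisfies $H_i\one=\one$. Applying this factor by factor, from the right, gives $H_{L:l}\one=H_{L-1}\cdots H_l\one=\one$. Products of nonnegative matrices are nonnegative, so $H_{L:l}$ is nonnegative and row-stochastic, and its maximal absolute row sum is $\|H_{L:l}\|_\infty=1$.

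\emph{Column sums.} Define the row vectors $s_k\T:=\one\T H_{L-1}\cdots H_k$ for $l\le k\le L$, with $s_L=\one$. The column sums of $H_{L:l}$ are the entries of $s_l$. We have the recursion $s_k\T=s_{k+1}\T H_k$, that is, $(s_k)_j=\sum_i(s_{k+1})_i(H_k)_{ij}$. Suppose every entry of $s_{k+1}$ lies in $[m,M]$ with $m\ge0$. Since $H_k\ge0$, each $(s_k)_j$ lies in $[m\,c_j,\,M\,c_j]$, where $c_j=\sum_i(H_k)_{ij}\in[1-e_k,1+e_k]$. By induction, starting from $m=M=1$, every entry of $s_l$ lies in $\big[\prod_i(1-e_i),\,\prod_i(1+e_i)\big]$. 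Two scalar inequalities then finish this step:
\begin{itemize}
\item $1+e\le e^{e}$ for all $e$;
\item $\log(1-e)\ge-2e$ on $[0,\tfrac12]$. The function $e\mapsto\log(1-e)+2e$ is concave, equals $0$ at $e=0$, and equals $2\cdot\tfrac12-\log 2>0$ at $e=\tfrac12$, so it is nonnegative on the whole interval.
\end{itemize}
These give the claimed interval $[e^{-2\sum_ie_i},\,e^{\sum_ie_i}]$. This inequality is the only place where the hypothesis $e_l\le\frac12$ is used, and it is the one point to check carefully.

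\emph{Spectral norm.} For a nonnegative matrix, $\|A\|_1$ is its maximal column sum and $\|A\|_\infty$ is its maximal row sum. The bound $\|A\|_2^2\le\|A\|_1\|A\|_\infty$ follows from Riesz--Thorin, or from the estimate $\|A\|_2^2=\rho(A\T A)\le\|A\T A\|_\infty\le\|A\T\|_\infty\|A\|_\infty$. Applying it to $A=H_{L:l}$ with $\|A\|_\infty=1$ and $\|A\|_1\le e^{\sum_ie_i}$ gives $\|H_{L:l}\|_2\le e^{\frac12\sum_ie_i}$.

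The parenthetical remark in the statement is immediate. A sweep sequence that ends with a row normalization multiplies the positive matrix $\exp W$ by positive diagonal scalings, so the result is nonnegative, and the final row normalization makes it row-stochastic. I do not expect a real obstacle anywhere. The only delicate points are to keep the direction of the recursion for $s_k$ consistent with the left-to-right order in $H_{L:l}=H_{L-1}\cdots H_l$, and to verify the elementary logarithm inequality.
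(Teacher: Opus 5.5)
Your proof is correct and follows the paper's argument. It uses the same induction that tracks column sums through the nonnegative factors, the same two scalar inequalities, and the same bound $\|A\|_2^2\le\|A\|_1\|A\|_\infty$. The only difference is that you apply this bound once to the composite, reusing your column-sum estimate, whereas the paper applies it to each factor $H_i$ and then uses submultiplicativity; both routes give $e^{\frac12\sum_ie_i}$.
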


\begin{remark}[The budget limits the horizon]\label{rem:budget}
Let $r_0$ be the initial relative error of a layer, and let $\gamma$ be a target for the accumulated error $\sum_le_l$. Proposition~\ref{prop:knight} suggests that a layer with horizon $\tau_l$ needs about $\frac{\tau_l}{2}\log\frac{Lr_0}{\gamma}$ sweeps. Conversely, $t_{\max}$ sweeps give the accuracy $\gamma/L$ only to layers with $\tau_l\lesssim2t_{\max}/\log(Lr_0/\gamma)$. For mHC, $t_{\max}=20$, $L=60$, the observed drift $1.6=e^{0.47}$ gives $\gamma\approx0.47$, and $r_0\approx1$; thus the limit is $\tau\lesssim8$ layers. This is a heuristic scaling law, because Proposition~\ref{prop:knight} is asymptotic. Remark~\ref{rem:hilbert} gives a worst-case bound in Hilbert's projective metric.
\end{remark}

%% file: sec_isometric.tex
\section{Isometric mixers}\label{sec:isometric}

Proposition~\ref{prop:rigidity} explains a pattern in recent follow-ups. A design that avoids collapse and keeps the gains bounded must give up nonnegativity, and such designs use orthogonal or spectrally constrained mixers \citep{sengupta2026jpmhc,liu2026shc,guo2026ohc}. Let $U\in\R^{n\times(n-1)}$ be an orthonormal basis of $\one^\perp$. Then
\begin{equation}\label{eq:O1}
\On=\big\{J/n+UZU\T:\ Z\in O(n-1)\big\}.
\end{equation}
Thus $\On$ is a compact Lie group isomorphic to $O(n-1)$. It has dimension $(n-1)(n-2)/2$, and it contains $\Sn$.

\begin{proposition}[Isometric mixers]\label{prop:isometric}
Let $H_l\in\On$ for all $l$.
\begin{enumerate}[leftmargin=1.4em,itemsep=0pt,topsep=1pt,label=(\alph*)]
\item The recursions \eqref{eq:two-channel} hold, and $H_{L:l}$ preserves the norm of every stream difference. Thus the horizon is infinite.
\item All singular values of the residual-path Jacobian are one, at every depth. This is dynamical isometry of the residual path \citep{pennington2017resurrecting}.
\item For $A\in\On$, $T_A\On=\{A\Omega:\ \Omega\T=-\Omega,\ \Omega\one=0\}$. For the Frobenius metric, the Riemannian gradient of $f$ is $AQ\skewop(A\T\nabla f(A))Q$, with $\skewop(X):=\frac12(X-X\T)$. The Cayley map $R_A(A\Omega):=A(I-\frac12\Omega)^{-1}(I+\frac12\Omega)$ is a retraction on $\On$.
\end{enumerate}
\end{proposition}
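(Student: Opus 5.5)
Part (a) is immediate from the structure of $\On$. Every $A\in\On$ satisfies $A\one=\one$, $A\T\one=\one$ (Proposition~\ref{prop:rigidity}(a)) and $A\T A=I$. So (F3) holds for $A$: it commutes with $Q$ and maps $\one^\perp$ into itself. The derivation of Theorem~\ref{thm:two-channel} uses only $H\one=\one$ and $H\T\one=\one$, so it carries over word for word and gives the recursions \eqref{eq:two-channel}. The composite $H_{L:l}$ is a product of orthogonal matrices, hence orthogonal. It fixes $\one$, so it maps $\one^\perp$ isometrically onto itself, and $\|H_{L:l}x^\perp\|_F=\|x^\perp\|_F$ columnwise. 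In the notation of Corollary~\ref{cor:fading}, $\rho_i=1$ and $\tau=\infty$.

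Part (b) then follows. By Theorem~\ref{thm:two-channel}, the residual-path Jacobian with block outputs held fixed is $x\mapsto H_{L:l}x$ on $\R^{n\times d}$. This is $I_d\otimes H_{L:l}$, an orthogonal operator for the Frobenius inner product, so all its singular values equal one.

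For part (c), I would use the parameterization \eqref{eq:O1}. The map $Z\mapsto J/n+UZU\T$ is an injective group homomorphism $O(n-1)\to O(n)$: one checks $(J/n+UZU\T)(J/n+UZ'U\T)=J/n+UZZ'U\T$, using $U\T\one=0$ and $U\T U=I$. It is also smooth with a closed image. The tangent space at $A$ is therefore $A$ times the Lie algebra $\{U\Theta U\T:\Theta\T=-\Theta\}$. These matrices are exactly the skew $\Omega$ with $\Omega\one=0$: if $\Omega$ is skew with $\Omega\one=0$, then also $\one\T\Omega=0$, so $\Omega=Q\Omega Q=U(U\T\Omega U)U\T$. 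The Frobenius-orthogonal projection of $\nabla f$ onto $T_A$ is $A\,\mathcal P(A\T\nabla f)$, where $\mathcal P$ is the projection onto the space $\{\Omega:\Omega\T=-\Omega,\ \Omega\one=0\}$, because $X\mapsto A\T X$ is an isometry. I claim $\mathcal P(X)=Q\skewop(X)Q$. The output is skew and annihilates $\one$. The residual $X-Q\skewop(X)Q$ is orthogonal to every such $\Omega$: since $\Omega=Q\Omega Q$, we have $\ip{X}{\Omega}=\ip{\skewop(X)}{\Omega}=\ip{Q\skewop(X)Q}{\Omega}$. This gives the stated formula $AQ\skewop(A\T\nabla f(A))Q$.

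For the Cayley retraction, the first step is that $I-\frac12\Omega$ is invertible because $\Omega$ is skew. The Cayley transform $C(\Omega)=(I-\frac12\Omega)^{-1}(I+\frac12\Omega)$ is orthogonal. It fixes $\one$ because $\Omega\one=0$ makes both factors fix $\one$. So $C(\Omega)\in\On$, and since $\On$ is a group, $AC(\Omega)\in\On$. Moreover $C(0)=I$, and $\dd C_0(\Omega)=\frac12\Omega+\frac12\Omega=\Omega$, so $R_A(0)=A$ and $\dd R_A(0)=\mathrm{id}$. These are the retraction axioms. Smoothness in $(A,\Omega)$ is clear.

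The only mildly delicate point is the projection formula: one must make sure that $\skewop$ and conjugation by $Q$ combine into the orthogonal projection. The inner-product identity above settles this.
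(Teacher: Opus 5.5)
Your proof is correct and follows the paper's argument essentially step by step. The identities $A\one=\one$ and $A\T\one=\one$ carry Theorem~\ref{thm:two-channel} over, and orthogonality of the composite gives (a) and (b); the parameterization \eqref{eq:O1} then gives the tangent space, the gradient, and the Cayley retraction, with the one cosmetic difference that you check the projection $X\mapsto Q\skewop(X)Q$ directly through the inner-product identity, where the paper observes that $\skewop$ and $X\mapsto QXQ$ are commuting orthogonal projections.
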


Thus $\On$ has no boundary and no viscosity, a permutation is an ordinary point of the group, and the Cayley retraction needs only one linear solve. The cost is the loss of nonnegativity: the mixed stream is no longer a convex combination of the streams. The orthogonal mixers of \citet{guo2026ohc} are in $SO(n)$ and do not fix $\one$. Such a mixer couples the mean and the differences, so the mean is no longer a residual network with unit gain. The whole stream follows a rotated identity map instead: no norm is lost, but the mean reaches deep layers unchanged only up to a rotation. \citet{guo2026ohc} give evidence that this coupling helps; Remark~\ref{rem:ohc} gives the details.

%% file: sec_numerics.tex
\section{Numerical experiments}\label{sec:numerics}

All experiments use $n=4$ streams, as mHC does. They compute exact projections to machine precision, with a Newton solve of the balancing equations that we check against long Sinkhorn runs. The experiments check rates, in minutes on a laptop CPU; they do not train models. Random logits are uniform on $[-M,M]$. Appendix~\ref{app:figures} gives larger figures and one more experiment.

\paragraph{Collapse (Figure~\ref{fig:main}(a)).}
The composite converges to $J/n$ at a geometric rate for every logit scale $M$. The rate is faster than the bound of Theorem~\ref{thm:contraction}(b), but it has the same form. For small $M$ the bound is almost exact. At $M=4$ the factor per layer is about $0.55$, which is a horizon of about two layers. Random sharp logits do not give long memories; only mixers near a permutation do.

\paragraph{Approach (Figure~\ref{fig:main}(b)).}
The loss is $\ip{C}{H}$ with $C=J-I+0.1\,\Xi$, where $\Xi$ has uniform entries on $[0,1]$, and the start is $J/n$. Under logit gradient descent with step $0.2$, the proximity decays as $1/t$, as \eqref{eq:approach} predicts. The log--log slope over the last two decades is $-1.05$, and $t\,\eps(H(t))\approx0.26$ at $t=3000$. Under the straight-through update the proximity decays as $e^{-ct}$ with a fitted $c=0.98$, as \eqref{eq:fr-attained} predicts for off-permutation costs near one.

\paragraph{Escape (Figure~\ref{fig:main}(c)).}
The loss prefers a transposition, and the start is near the identity. Over four decades of $\eps_0$, the escape time of logit gradient descent grows as $0.65/\eps_0$ (log--log slope $0.96$), as \eqref{eq:escape} predicts. For the straight-through update it grows as $1.03\log(1/\eps_0)+1.2$. Adam on the logits (step $0.02$) escapes in $1.4$ to $3.3$ time units over the same range, as Remark~\ref{rem:adam} predicts.

\paragraph{Sinkhorn budget (Figure~\ref{fig:main}(d)).}
The targets are $(1-\tau^{-1})P_\pi+\tau^{-1}J/n$ with horizon $\tau$, after random diagonal scalings. The number of sweeps for a column-sum error of $10^{-3}$ grows linearly in the horizon, with a slope of about $1.5$ sweeps per layer of horizon. A target with $\tau=5$ needs 13 sweeps, $\tau=10$ needs 23, and $\tau=50$ needs 81. With $t_{\max}=20$, only targets with $\tau\lesssim7$ reach this accuracy, in agreement with Remark~\ref{rem:budget}. The error curves follow the local law $\sigma_2^{2k}$ from the first sweeps (Figure~\ref{fig:sinkhorn}).

%% file: sec_conclusion.tex
\section{Conclusion}\label{sec:conclusion}

The doubly stochastic constraint of mHC makes the stream mean a residual network with unit gain, for every mixer and every depth. The difference channel contracts by $\sigma_2$ per layer, so the extra width is a fading memory of $O(\tau)$ layers. Logit gradient flow moves near the permutations only at rate $1/t$, but the straight-through update moves at an exponential rate. Sinkhorn projects the mixers with a long horizon most slowly, so a fixed budget of sweeps limits the horizon. Appendix~\ref{app:discussion} discusses these results, their limits, and open problems.\markmainend

%% file: back_matter.tex
\section*{AI use statement}\addcontentsline{toc}{section}{AI use statement}

In this work, we used generative AI tools to search the literature, to draft and check proofs, to write the code of the numerical experiments, and to edit the text, including the layout of the paper and the rewrite of the proofs in structured form. We checked the proofs of all results and the code of all experiments. We take responsibility for the final content of this work, including text, claims, or artifacts produced with the aid of generative AI.

%% file: app_roadmap.tex
\section*{Roadmap}\label{app:roadmap}\addcontentsline{toc}{section}{Roadmap of the appendix}

The appendix has seven parts. Table~\ref{tab:roadmap} lists them, and Figure~\ref{fig:dependency} shows how the results depend on each other.

\begin{center}\small
\captionof{table}{Roadmap of the appendix.}\label{tab:roadmap}
\begin{tabular}{@{}l>{\raggedright\arraybackslash}p{10.6cm}>{\raggedright\arraybackslash}p{3.2cm}@{}}
\toprule
part & contents & supports\\
\midrule
\ref{app:related} & More related work, and a table of the notation & all sections\\
\ref{app:extra} & Additional results and remarks: the truncated three-way identity, the constant $4n^3$, a worst-case bound for truncated Sinkhorn, convergence on $\On$, mean preservation versus full orthogonality, and the convex relaxation & Sections~\ref{sec:geometry}--\ref{sec:isometric}\\
\ref{app:figures} & Additional figures, with the details of all experiments & Section~\ref{sec:numerics}\\
\ref{app:aux} & Known results that the proofs use, with references & Appendix~\ref{app:proofs}\\
\ref{app:prelim} & Preliminary lemmas, with structured proofs, and a guide to the proof format & Appendix~\ref{app:proofs}\\
\ref{app:proofs} & Structured proofs of all results of Sections~\ref{sec:signal}--\ref{sec:isometric} and Appendix~\ref{app:extra} & all results\\
\ref{app:discussion} & Discussion: what the theory says about mHC, its limits, and open problems & Section~\ref{sec:conclusion}\\
\bottomrule
\end{tabular}
\end{center}

\begin{figure}[t]\centering
\includegraphics[width=\linewidth]{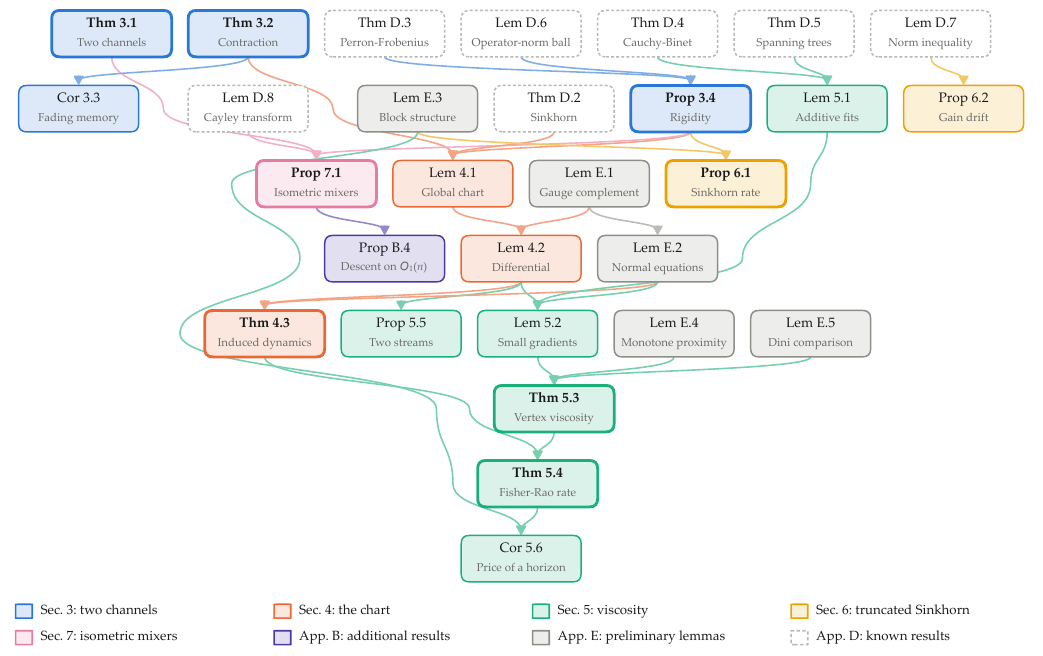}
\caption{Dependency graph of the results. An arrow from $X$ to $Y$ means that the proof of $Y$ cites $X$. We omit an arrow if other arrows already give a path from $X$ to $Y$. Bold frames mark the main results, and dashed frames mark the known results of Appendix~\ref{app:aux}. A script (\texttt{fig\_dependency.py}) builds the graph from the references in the structured proofs, so the graph and the proofs agree.}\label{fig:dependency}
\end{figure}

%% file: app_extra.tex
\section{More related work and notation}\label{app:related}

\subsection{More related work}

\paragraph{Hyper-connections and their follow-ups.}
Hyper-connections \citep{zhu2025hyper} belong to a family of designs that widen or densify the residual stream \citep{pagliardini2024denseformer,xiao2025muddformer,mak2025residual,heddes2025deepcrossattention,menghani2025laurel}. mHC \citep{xie2026mhc} constrains the mixer to the Birkhoff polytope. Its follow-ups replace the Sinkhorn loop by exact parameterizations of the polytope \citep{yang2026mhclite,zhou2026kromhc,lyubinin2026tbpmhc,dandachi2026gomhc,gu2026temper} or by an exact projection with implicit differentiation \citep{wang2026accelerating}. Other work measures how trained models use the streams \citep{alimaskina2026stream,peng2026ablate,zhao2026howdoes}. Orthogonal and spectral-sphere designs \citep{sengupta2026jpmhc,liu2026shc,guo2026ohc} give up nonnegativity. Implicit differentiation of the Sinkhorn fixed point, as in \citet{sengupta2026jpmhc,wang2026accelerating}, computes the differential that Lemma~\ref{lem:differential} identifies geometrically.

\paragraph{Concurrent work.}
\citet{guo2026ohc} appeared while we prepared this paper. They observe the two-channel decomposition and the contraction of a single layer. They also note that a doubly stochastic isometry is a permutation, and they identify the mean-preserving subgroup of $SO(n)$. Sections~\ref{sec:signal} and~\ref{sec:isometric} and Remark~\ref{rem:ohc} credit these points. The rates of Theorem~\ref{thm:contraction} and Corollary~\ref{cor:fading}, Proposition~\ref{prop:rigidity} except part (b), and the results of Sections~\ref{sec:geometry}--\ref{sec:sinkhorn} do not appear in that work.

\paragraph{Signal propagation.}
\citet{dong2021attention} and \citet{noci2022signal} study signal propagation and rank collapse in deep attention networks. \citet{biccari2026interior} apply contraction coefficients of Dobrushin type to row-stochastic attention rollouts. Our collapse results are deterministic, concern the stream dimension, and use both marginals of the mixer in an essential way. Dynamical isometry \citep{pennington2017resurrecting} is the property that Proposition~\ref{prop:isometric}(b) gives for isometric mixers.

\paragraph{Matrix scaling and its geometry.}
Matrix scaling and the Sinkhorn--Knopp algorithm have a large literature \citep{sinkhorn1964relationship,sinkhorn1967concerning,franklin1989scaling,knight2008sinkhorn,idel2016review,cuturi2013sinkhorn,altschuler2017near,peyre2019computational}. \citet{douik2019manifold} introduce the Fisher--Rao geometry of the positive doubly stochastic matrices and the Sinkhorn retraction. \citet{yoshizawa2026infogeom} studies degeneracies of the Fisher metric at the boundary of the Birkhoff polytope from a different point of view. A separate line of work views the Sinkhorn \emph{iteration} as mirror descent. Our object is different: the learning dynamics of logits \emph{through} a Sinkhorn layer.

\paragraph{Learning permutations.}
PermELearn \citep{helmbold2009learning} learns permutations by entropic mirror descent, with Sinkhorn balancing as the Bregman projection. \citet{adams2011ranking} and \citet{mena2018learning} learn permutations with Sinkhorn-normalized matrices. \citet{liu2026premature} analyzes premature commitment to a vertex under annealed Sinkhorn.

\paragraph{Optimization.}
Sections~\ref{sec:geometry} and~\ref{sec:viscosity} use four areas of optimization. They are mirror descent \citep{nemirovski1983problem,beck2003mirror}, natural gradient \citep{amari1998natural,amari2000methods}, the replicator equation as the Fisher--Rao flow on the simplex \citep{shahshahani1979new,hofbauer1998evolutionary}, and the rates of softmax and natural policy gradient \citep{mei2020global,li2021softmax}. Orthogonal recurrent networks \citep{arjovsky2016unitary} and Riemannian optimization with orthogonality constraints \citep{wen2013feasible,absil2008optimization,boumal2023introduction} give the background of Section~\ref{sec:isometric}.

\subsection{Notation}

\begin{center}\small
\captionof{table}{Notation.}\label{tab:notation}
\begin{tabular}{@{}lp{11.8cm}@{}}
\toprule
symbol & meaning\\
\midrule
$n$, $d$, $L$ & number of streams, model width, depth\\
$x_l\in\R^{n\times d}$; $\bar x_l$, $x_l^\perp$ & stream at layer $l$; its stream mean and its stream differences\\
$H_l$, $\hpre_l$, $\hpost_l$ & mixer, read-out, and write-in of layer $l$; $u_l=\mathcal F_l(\hpre_lx_l)$ is the block output\\
$H_{L:l}=H_{L-1}\cdots H_l$ & composite mixer (the Jacobian of the residual path)\\
$\one$, $J=\one\one\T$, $Q=I-J/n$ & all-ones vector, all-ones matrix, orthogonal projector onto $\one^\perp$\\
$\Bn$, $\Bno$, $\Sn$ & Birkhoff polytope, its positive part, permutation matrices $P_\pi$\\
$\Cn$, $\On$ & mean-preserving nonexpansive mixers; collapse-free elements of $\Cn$ ($\On\cong O(n-1)$)\\
$\delta(H)$, $\sigma_k(H)$ & minimum entry; $k$-th singular value, with $\sigma_2(H)=\|HQ\|_2$\\
$\eps_\pi(H)$, $\eps(H)$ & largest entry off the support of $\pi$; vertex proximity $\min_\pi\eps_\pi(H)$\\
$\tau=1/(1-\sigma_2)$ & horizon\\
$\Tsp$, $\gauge=\Tsp^\perp$ & matrices with zero row and column sums (the tangent space); gauge $\{a\one\T+\one b\T\}$\\
$g_H$, $\Pfr_H$, $\gradFR f$ & Fisher--Rao metric, $g_H$-orthogonal projection onto $\Tsp$, Fisher--Rao gradient\\
$\SK$, $\Phi(W)=\SK(\exp W)$ & Sinkhorn projection; Sinkhorn-logit chart\\
$A_H=\Pfr_H\circ(H\odot\cdot)$ & inverse Fisher--Rao tensor, $\dd\Phi_W=A_H$\\
$R_H(\xi)=\SK(H\odot e^{\xi\oslash H})$ & Sinkhorn retraction\\
$H^{(k)}$, $\eta$, $G$ & $k$-th training iterate; step size; bound on $\|\nabla f\|_\infty$\\
$\mu,\nu$; $\Psi$ & row and column scaling vectors; one Sinkhorn sweep \eqref{eq:sweep}\\
$e_l$, $r_k$ & column-sum error of layer $l$; linearized column-sum error after $k$ sweeps\\
$\odot$, $\oslash$; $\ip{\cdot}{\cdot}$; $\|\cdot\|_\infty$ & entrywise product and quotient; Frobenius inner product; entrywise maximum norm\\
\bottomrule
\end{tabular}
\end{center}

\section{Additional results and remarks}\label{app:extra}

\subsection{Truncated projections}

\begin{remark}[Truncated three-way identity]\label{rem:truncated-three-way}
Replace $\SK$ by the truncated projection $\SK_{t_{\max}}$ in \eqref{eq:three-way}. Then each of the three expressions differs from the exact mirror step only by the truncation error of Section~\ref{sec:sinkhorn}. The map $K\mapsto\SK_{t_{\max}}(D_1KD_2)$ does not depend on $D_2$, because it starts with a column normalization. It depends on $D_1$, through the first column normalization. Thus the three truncated iterates are close to each other, but they are not equal.
\end{remark}

\subsection{The constant in Theorem~\ref{thm:viscosity}}

\begin{remark}[The constant $4n^3$]\label{rem:constant}
The proof of Theorem~\ref{thm:viscosity} gives the constant $4n^2(n-1)$, and we do not claim that this constant is sharp. Lemma~\ref{lem:additive} is sharp over all positive weights. But the proof uses it with doubly stochastic weights. For such weights, numerical searches give smaller fitted values: about $3.9\,\|C\|_\infty$ for $n=3$ and $n=4$, against $2n-1=5$ and $7$. The exponent is sharp: the rate $1/t$ is attained for $n=2$ (Proposition~\ref{prop:n2}).
\end{remark}

\subsection{A worst-case bound for truncated Sinkhorn}

\begin{remark}[Bounded logits]\label{rem:hilbert}
The Sinkhorn map contracts Hilbert's projective metric, and this fact gives a global, non-asymptotic bound. Each half-sweep multiplies the projective distance between scaling vectors by at most Birkhoff's coefficient $\kappa(K)=(\sqrt{\theta(K)}-1)/(\sqrt{\theta(K)}+1)$ \citep{birkhoff1957extensions}. Here $\theta(K)=\max_{i,j,k,l}K_{ik}K_{jl}/(K_{jk}K_{il})$ is the largest cross-ratio. \citet{franklin1989scaling} use this fact to prove linear convergence of Sinkhorn--Knopp in that metric \citep[Theorems~4.1--4.2]{peyre2019computational}. For $K=\exp W$ with $\|W\|_\infty\le M$ we have $\theta(K)\le e^{4M}$, and therefore
\begin{equation}\label{eq:tanh}
\kappa(K)\le\frac{e^{2M}-1}{e^{2M}+1}=\tanh M .
\end{equation}
This is a worst-case factor of $\tanh(M)^2$ per sweep. Sharp logits (large $M$) are necessary to reach mixers near the vertices, because $1-\sigma_2\ge e^{-4M}$ by Theorem~\ref{thm:contraction}(d). By \eqref{eq:tanh}, the same sharpness slows the projection in the worst case: $1-\tanh(M)^2\approx4e^{-2M}$. For random logits the truncation error is much smaller than this worst case (Figure~\ref{fig:sinkhorn}). The local law of Proposition~\ref{prop:knight} is the relevant one, and it is tight.
\end{remark}

\subsection{Isometric mixers}

\begin{proposition}[Convergence on $\On$]\label{prop:rgd-conv}
Let $f$ be twice continuously differentiable on a neighborhood of $\On$, and write $\gradFR_{\mathcal O}f$ for its Riemannian gradient on $\On$ (Proposition~\ref{prop:isometric}(c)). For a sufficiently small constant step $\eta$, Riemannian gradient descent $A^{(k+1)}=R_{A^{(k)}}(-\eta\,\gradFR_{\mathcal O}f(A^{(k)}))$ satisfies $\min_{k<N}\|\gradFR_{\mathcal O}f(A^{(k)})\|_F=O(N^{-1/2})$ \citep[Chapter~4]{boumal2023introduction}.
\end{proposition}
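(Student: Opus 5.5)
The plan is to reduce the statement to the standard nonconvex convergence theorem for Riemannian gradient descent with a retraction on a compact manifold \citep[Chapter~4]{boumal2023introduction}. That theorem needs three ingredients. First, the lift $f\circ R_A$ must satisfy a uniform second-order upper bound,
\[
f(R_A(\xi))\le f(A)+\ip{\gradFR_{\mathcal O}f(A)}{\xi}+\tfrac{L}{2}\|\xi\|_F^2
\]
for all $A\in\On$ and all tangent $\xi$ with $\|\xi\|_F\le r$. Second, $f$ must be bounded below on $\On$. Third, the step must satisfy $\eta\le 1/L$, and the steps must stay in the ball of radius $r$. Given these, the usual descent argument gives
\[
f(A^{(k+1)})\le f(A^{(k)})-\tfrac{\eta}{2}\|\gradFR_{\mathcal O}f(A^{(k)})\|_F^2 .
\]
Telescoping then gives $\min_{k<N}\|\gradFR_{\mathcal O}f(A^{(k)})\|_F^2\le 2(f(A^{(0)})-\inf f)/(\eta N)$, which is the claimed $O(N^{-1/2})$ rate.

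\textbf{Step 1: $\On$ is a compact embedded submanifold and $R$ is a retraction.} By \eqref{eq:O1}, $\On$ is the image of $O(n-1)$ under the affine isometric embedding $Z\mapsto J/n+UZU\T$. It is therefore a compact embedded submanifold of $\R^{n\times n}$. Proposition~\ref{prop:isometric}(c) gives its tangent spaces, its Riemannian gradient, and the fact that the Cayley map is a retraction. I would also check that $R$ is smooth on the whole tangent bundle. For skew $\Omega$, the matrix $I-\frac12\Omega$ is invertible, so $R_A$ is defined everywhere. It also stays in $\On$: the Cayley transform of a skew $\Omega$ with $\Omega\one=0$ is orthogonal and fixes $\one$, and $A$ fixes $\one$ as well.

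\textbf{Step 2: lower bound and the constant $L$.} Since $f$ is continuous and $\On$ is compact, $f$ is bounded below on $\On$. For the constant $L$, I would use Boumal's lemma for compact submanifolds: if $f$ is $C^2$ on a neighborhood and $R$ is a smooth retraction, then the second-order bound holds globally, for all $\xi$ in the tangent bundle, with some finite $L$. The main obstacle sits here. One has to make sure that $R_A(\xi)$ stays inside the neighborhood on which $f$ is $C^2$. This is automatic, because $R_A(\xi)\in\On$ for every $\xi$.

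The underlying reasoning is a compactness argument. Consider $g(A,\xi)=f(R_A(\xi))$. For small $\|\xi\|_F$, a Taylor expansion of $g$ along $t\mapsto R_A(t\xi)$ bounds the second derivative by $\|\nabla^2 f\|$ and $\|\nabla f\|$ on $\On$, combined with the first and second derivatives of $R$. Both are bounded uniformly on the compact set of pairs $(A,\xi)$ with $\|\xi\|_F\le 1$. For large $\|\xi\|_F$, the bound holds trivially, because $f$ is bounded on $\On$ and the gradient is bounded. Choosing $L$ as the larger of the two constants gives the global bound.

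\textbf{Step 3: conclusion.} I would take $\eta\le 1/L$ and plug $\xi=-\eta\,\gradFR_{\mathcal O}f(A^{(k)})$ into the second-order bound. This yields the sufficient decrease with factor $\eta(1-L\eta/2)\ge\eta/2$. Summing over $k<N$ finishes the proof.
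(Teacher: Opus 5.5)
Your proposal is correct and follows essentially the same route as the paper: a uniform second-order bound on the pullbacks $f\circ R_A$ over the compact group $\On$, followed by the standard sufficient-decrease and telescoping argument of \citet[Chapter~4]{boumal2023introduction}, which gives $\min_{k<N}\|\gradFR_{\mathcal O}f(A^{(k)})\|_F^2\le 2(f(A^{(0)})-\min f)/(\eta N)$. The only difference is that the paper bounds the pullback Hessians near the origin, so that a small step keeps each update in that ball, whereas you extend the bound to the whole tangent bundle using $R_A(\xi)\in\On$ and the boundedness of $f$; this is a harmless refinement.
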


Three properties separate $\On$ from the Sinkhorn-logit design on $\Bno$. First, $\On$ has no boundary, so it has no viscosity. A permutation is an ordinary point, and no loss is flat there for structural reasons (compare \eqref{eq:flat}). Second, there is no projection loop: the Cayley retraction is one $(n-1)\times(n-1)$ linear solve, so Section~\ref{sec:sinkhorn} does not apply. Third, the number of parameters is $(n-1)(n-2)/2$, not $(n-1)^2$. The price is that entries of $A$ can be negative. Then the $\ell_1$ row-sum diagnostic of \citet{xie2026mhc} can exceed one, although every $\ell_2$ gain is one. The read-out and the write-in in mHC stay nonnegative.

\begin{remark}[Mean preservation versus full orthogonality]\label{rem:ohc}
The orthogonal hyper-connections of \citet{guo2026ohc} use mixers in $SO(n)$ that do not fix $\one$. The authors identify the mean-preserving subgroup $\{A\in SO(n):A\one=\one\}\cong SO(n-1)$. With an intervention experiment, they argue that the value of $SO(n)$ is the coupling between the mean and the differences that this subgroup forbids. Theorem~\ref{thm:two-channel} makes the trade explicit.

In the basis $[\one/\sqrt n,\ U]$, a general orthogonal mixer has off-diagonal blocks that rotate the mean into the differences and back. Thus the stream mean is no longer a residual network with unit gain. Instead the whole stream obeys a \emph{rotated} identity map, $x_L=H_{L:l}x_l+\sum_iH_{L:i+1}(\hpost_i)\T u_i$ with $H_{L:l}$ orthogonal. No norm is lost in either direction. But mHC restores one property: the mean signal of layer $l$ reaches layer $L$ unchanged. Under a general orthogonal mixer, this property holds only up to an orthogonal change of coordinates. It is an empirical question if this property is worth its cost. The evidence of \citet{guo2026ohc} favors the coupled version. By Proposition~\ref{prop:rigidity}, the question does not arise for nonnegative designs, because their only collapse-free mixers are the permutations.
\end{remark}

\begin{remark}[The convex relaxation]\label{rem:convex-relaxation}
The set $\Cn=\conv\On$ of mean-preserving nonexpansive mixers is convex (Proposition~\ref{prop:rigidity}(a)), and its Frobenius projection is explicit: $A\mapsto J/n+U\,\mathrm{clip}(U\T AU)\,U\T$. Here $\mathrm{clip}$ replaces every singular value larger than one by one. Thus projected gradient descent on $\Cn$ is available. But the interior of $\Cn$ consists of contractions, and they collapse exactly as the interior of $\Bn$ does. The difference is in the collapse-free part. For $\Cn$ it is the connected manifold $\On$, on which Riemannian methods can move; for $\Bn$ it is the finite set $\Sn$. The spectral-sphere design of \citet{liu2026shc} is a relaxation of the same kind for the full orthogonal group.
\end{remark}

\section{Additional figures}\label{app:figures}

All experiments use $n=4$ and exact projections. A warm-started Newton solve of the balancing equations computes each projection, and long Sinkhorn runs check it (maximum difference $1.2\times10^{-12}$). The supplementary material contains the code. The script \texttt{figs\_appendix.py} computes Figures~\ref{fig:collapse}, \ref{fig:sinkhorn}, and~\ref{fig:channels}. \texttt{escape.py} computes the escape times, \texttt{figs.py} computes the approach curves and the Sinkhorn budget, \texttt{figs\_main.py} draws Figures~\ref{fig:main} and~\ref{fig:viscosity}, and \texttt{fig\_teaser.py} computes and draws Figure~\ref{fig:teaser}.

\begin{figure}[h]
\centering
\includegraphics[width=0.92\linewidth]{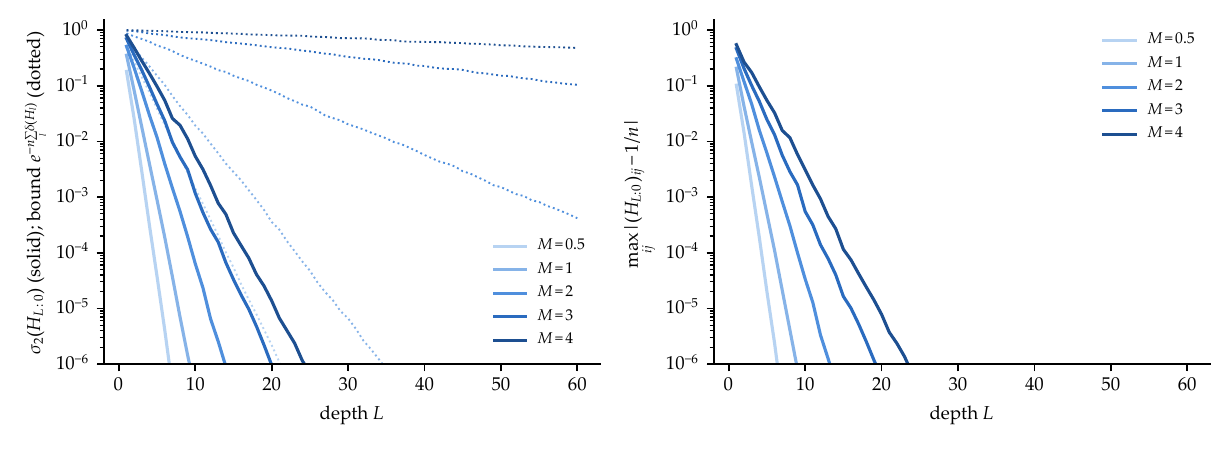}
\caption{Collapse of composite mixers (Theorem~\ref{thm:contraction}). Left: $\sigma_2$ of the composite of independent projected random logits of scale $M$ (median of 30 draws, solid), and the bound $\exp(-n\sum_l\delta(H_l))$ (dotted). Right: largest entrywise distance of the composite to $J/n$. Even for $M=4$, the difference channel loses six orders of magnitude within 25 layers.}
\label{fig:collapse}
\end{figure}

\begin{figure}[h]
\centering
\includegraphics[width=0.92\linewidth]{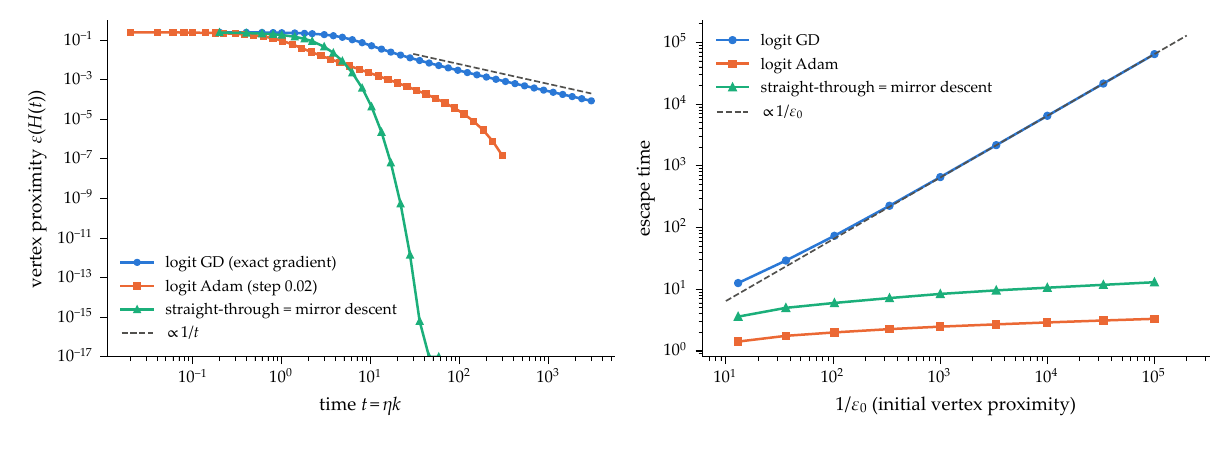}
\caption{Vertex viscosity (Theorems~\ref{thm:viscosity} and~\ref{thm:fr-rate}). Left: vertex proximity for the loss $\ip{C}{H}$, $C=J-I+0.1\,\Xi$, from the uniform matrix. The updates are exact logit gradient descent (step $0.2$), Adam on the logits (step $0.02$), and the straight-through update (step $0.2$). The dashed line is proportional to $1/t$. Right: time to leave the identity vertex for a transposition vertex, as a function of the initial proximity $\eps_0$, under the same three updates. The start is $W=\log(1/\eps_0)I$, and the escape time is the first time at which every entry on the transposition is larger than $0.5$ and every other entry is smaller than $0.25$. The dashed line is proportional to $1/\eps_0$.}
\label{fig:viscosity}
\end{figure}

\begin{figure}[h]
\centering
\includegraphics[width=0.92\linewidth]{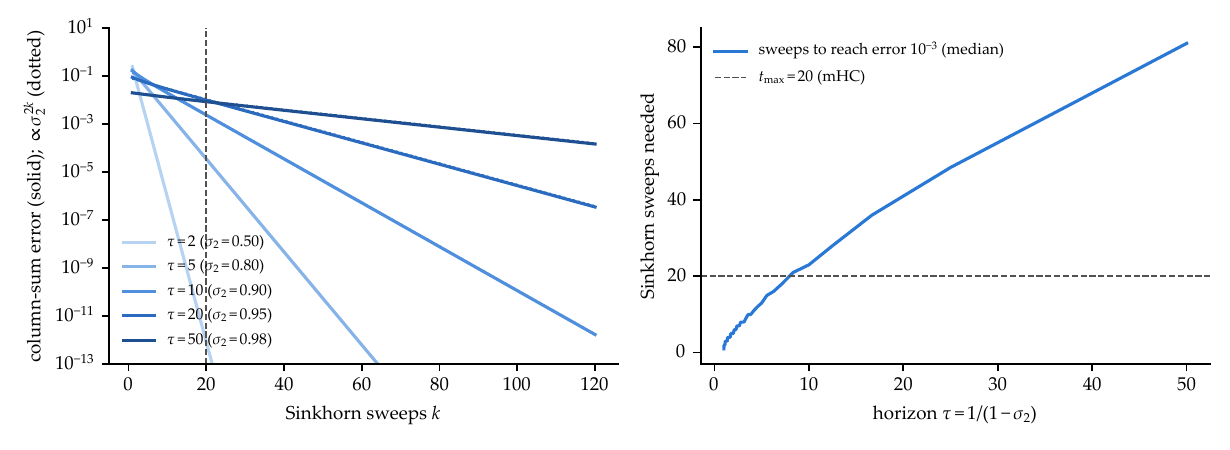}
\caption{Truncated Sinkhorn and the horizon (Proposition~\ref{prop:knight}). Left: column-sum error of the row-normalized iterate against the number of sweeps, for targets $(1-\tau^{-1})P_\pi+\tau^{-1}J/n$ with horizon $\tau$ and a 4-cycle $\pi$. Each run starts from a random diagonal scaling of the target (solid). The dotted lines show the local law $\propto\sigma_2^{2k}$, and the vertical line is $t_{\max}=20$. Right: median number of sweeps (20 random scalings) for a column-sum error of $10^{-3}$, as a function of $\tau$.}
\label{fig:sinkhorn}
\end{figure}

\begin{figure}[h]
\centering
\includegraphics[width=0.5\linewidth]{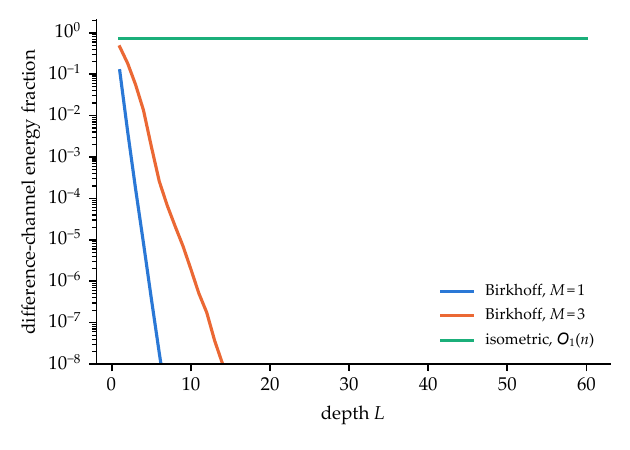}
\caption{Two channels (Theorem~\ref{thm:two-channel} and Proposition~\ref{prop:isometric}). Fraction of the energy of the residual-path signal $H_{L:0}x$ in the difference channel, for a random Gaussian input $x\in\R^{4\times64}$. The mixers are projected random logits ($M=1,3$) or random isometric mixers in $\On$ (median of 10 draws). Under doubly stochastic mixing the energy of the differences decays geometrically. Under isometric mean-preserving mixing it stays constant, because every factor is an isometry of both channels.}
\label{fig:channels}
\end{figure}

%% file: app_aux.tex
\section{Auxiliary results}\label{app:aux}

This appendix collects the known results that the proofs use, in the form in which we use them. Nothing in this appendix is new.

\begin{theorem}[Birkhoff--von Neumann]\label{aux:bvn}
$\Bn=\conv\Sn$, and the permutation matrices are exactly the extreme points of $\Bn$ \citep{birkhoff1946tres}.
\end{theorem}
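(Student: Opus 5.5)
The statement is the Birkhoff--von Neumann theorem, $\Bn=\conv\Sn$ with the permutation matrices as exactly the extreme points. I would prove the two inclusions and the extreme-point claim separately.

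\emph{Permutations are extreme and lie in $\Bn$.} Each $P_\pi$ is nonnegative with one $1$ in every row and column, so $P_\pi\in\Bn$. Suppose $P_\pi=\tfrac12(A+B)$ with $A,B\in\Bn$. Wherever $P_\pi$ has a zero, the nonnegative entries of $A$ and $B$ must vanish. Row sums then force $A_{i\pi(i)}=B_{i\pi(i)}=1$, so $A=B=P_\pi$. Since $\Bn$ is convex, this also gives $\conv\Sn\subseteq\Bn$.

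\emph{Every $H\in\Bn$ is a convex combination of permutations.} I would induct on the number of nonzero entries of $H$, using a Hall's-theorem step. Consider the bipartite graph on rows and columns with an edge $(i,j)$ whenever $H_{ij}>0$. This graph has a perfect matching. Take any set $S$ of $k$ rows; its total mass is $k$. All of that mass lies in the columns of the neighborhood $N(S)$, and each column carries mass at most $1$, so $|N(S)|\ge k$. Hall's theorem then gives a permutation $\pi$ with $H_{i\pi(i)}>0$ for all $i$.

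Let $\lambda=\min_i H_{i\pi(i)}$. If $\lambda=1$, then $H=P_\pi$ and we are done. Otherwise $H'=(H-\lambda P_\pi)/(1-\lambda)$ lies in $\Bn$ and has strictly fewer nonzero entries. By induction, $H=\lambda P_\pi+(1-\lambda)H'$ is a convex combination of permutations. The recursion terminates after at most $n^2$ steps.

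\emph{Extreme points are exactly the permutations.} A point of $\conv\Sn$ that is not in $\Sn$ is a nontrivial convex combination of other points, so it is not extreme. Together with the first step, the extreme points are exactly $\Sn$.

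The only nontrivial step is the Hall-condition verification via mass counting. Everything else is bookkeeping.
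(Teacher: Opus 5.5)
Your proof is correct and complete. The mass-counting check of Hall's condition is right: $k=\sum_{i\in S}\sum_jH_{ij}=\sum_{j\in N(S)}\sum_{i\in S}H_{ij}\le|N(S)|$. The peeled matrix $H'=(H-\lambda P_\pi)/(1-\lambda)$ stays in $\Bn$, and it loses at least one nonzero entry without gaining any. The extreme-point argument is also sound, and your midpoint reasoning works verbatim for any weight $t\in(0,1)$.

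The paper does not prove this statement. It lists the theorem in Appendix~\ref{app:aux} as a known result with a citation to \citet{birkhoff1946tres}, so there is no paper proof to follow or deviate from. Your Hall-theorem peeling argument is the standard one. Compared with the citation, it is self-contained modulo Hall's theorem and constructive. It gives a decomposition into at most $n^2-n+1$ permutations; Carath\'eodory improves this to $(n-1)^2+1$, since $\dim\Bn=(n-1)^2$.

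Your key move is to subtract a multiple of a doubly stochastic matrix and renormalize. The paper uses the same move in the Doeblin splitting $R=(H-\delta J)/(1-n\delta)$ in the proof of Theorem~\ref{thm:contraction}.
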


\begin{theorem}[Sinkhorn]\label{aux:sinkhorn}
Let $K\in\R^{n\times n}$ have positive entries \citep{sinkhorn1964relationship,sinkhorn1967concerning}.
\begin{enumerate}[leftmargin=1.6em,itemsep=1pt,label=(\alph*)]
\item There are positive diagonal matrices $D_1,D_2$ such that $D_1KD_2$ is doubly stochastic. The matrix $D_1KD_2$ is unique, and the pair $(D_1,D_2)$ is unique up to $(sD_1,s^{-1}D_2)$ with $s>0$.
\item The alternating normalization of rows and columns, started at $K$, converges to $D_1KD_2$.
\end{enumerate}
\end{theorem}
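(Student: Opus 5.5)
The plan is to prove both parts through one convex potential in logarithmic scaling coordinates. For $\alpha,\beta\in\R^n$, set
\[
F(\alpha,\beta):=\sum_{i,j}K_{ij}e^{\alpha_i+\beta_j}-\sum_i\alpha_i-\sum_j\beta_j .
\]
The gradient conditions $\partial F/\partial\alpha_i=0$ and $\partial F/\partial\beta_j=0$ say exactly that $\diag(e^\alpha)K\diag(e^\beta)$ has unit row sums and unit column sums. So it is enough to show three things: $F$ has a critical point, the critical points are unique up to the gauge, and the alternating normalization converges to one of them.

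\emph{Existence.} $F$ is convex, because it is a sum of exponentials of linear forms plus a linear term. It is invariant under $(\alpha+s\one,\beta-s\one)$, so I fix the gauge by $\alpha_1=0$. For coercivity, write $k:=\min_{ij}K_{ij}>0$ and split the linear part evenly over the $n^2$ cells, using $\sum_i\alpha_i+\sum_j\beta_j=\frac1n\sum_{i,j}(\alpha_i+\beta_j)$. This gives
\[
F\ \ge\ \sum_{i,j}\Big(k\,e^{\alpha_i+\beta_j}-\tfrac1n(\alpha_i+\beta_j)\Big).
\]
Each summand is bounded below and tends to $+\infty$ as $|\alpha_i+\beta_j|\to\infty$. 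Hence on a sublevel set every sum $\alpha_i+\beta_j$ is bounded. With $\alpha_1=0$, every $\beta_j$ is then bounded, and through the sums $\alpha_i+\beta_1$ every $\alpha_i$ is bounded as well. So the sublevel sets in the gauge-fixed slice are compact, a minimizer exists, and it is a critical point. This proves the existence claim of (a).

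\emph{Uniqueness.} Write $F$ as a function of the vector $z_{ij}:=\alpha_i+\beta_j$. As such it is strictly convex, because $\sum_{ij}K_{ij}e^{z_{ij}}$ is strictly convex and the remaining term is linear in $z$. So any two minimizers give the same $z$. The same $z$ gives the same matrix $\diag(e^\alpha)K\diag(e^\beta)$. It also forces the pairs to differ by $(s\one,-s\one)$, since $\alpha_i+\beta_j=\alpha'_i+\beta'_j$ for all $i,j$ implies $\alpha-\alpha'=s\one$ and $\beta-\beta'=-s\one$. Every doubly stochastic scaling is a critical point, hence a minimizer, so this is the uniqueness in (a).

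\emph{Convergence (b).} Normalizing the rows is the exact minimization of $F$ over $\alpha$ with $\beta$ fixed, since $\alpha_i=-\log\sum_jK_{ij}e^{\beta_j}$; normalizing the columns is the same over $\beta$. Thus the iteration is exact block coordinate descent, and $F$ is nonincreasing along it. After re-fixing the gauge, which leaves both the matrix and $F$ unchanged, the iterates stay in a compact sublevel set. Hence they have limit points, and the values of $F$ converge.

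The step I expect to be the main obstacle is showing that every limit point is a critical point. Let $T$ be one full sweep. It is continuous, and $F(T(x))<F(x)$ unless $x$ is fixed by both block minimizations. Suppose a subsequence converges to $x^\ast$. Then $F(x^\ast)=F(T(x^\ast))$ by continuity and the convergence of $F$. So $x^\ast$ is fixed by both blocks, which means both marginals are one and $x^\ast$ is a critical point. By the uniqueness in (a), all limit points give the same matrix $D_1KD_2$, and therefore the whole sequence of matrices converges to it. As a fallback that also gives a rate, I would use the contraction of Hilbert's projective metric by Birkhoff's coefficient $\kappa(K)<1$ (Remark~\ref{rem:hilbert}).
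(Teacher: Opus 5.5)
The paper does not prove this theorem. It imports it from \citet{sinkhorn1964relationship,sinkhorn1967concerning} as a black box, and the appendix states that nothing in it is new. Your proof is correct and complete, so it necessarily takes a route the paper does not write out.

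The individual steps hold:
\begin{itemize}
\item The splitting $\sum_i\alpha_i+\sum_j\beta_j=\frac1n\sum_{i,j}(\alpha_i+\beta_j)$ gives compact sublevel sets on the slice $\alpha_1=0$.
\item Strict convexity in the variables $z_{ij}=\alpha_i+\beta_j$ gives uniqueness of the scaled matrix, and of the pair up to $(s\one,-s\one)$.
\item The identity $F(T(x^\ast))=F(x^\ast)$ at a limit point forces both marginals to be one.
\end{itemize}
The only detail you leave implicit is that the half-sweep iterates also converge. This follows because row normalization is continuous and fixes $D_1KD_2$.

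The paper does contain two other routes to this statement.

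\emph{Primal route.} Step \sref{1}{7} of the proof of Theorem~\ref{thm:induced} argues in the primal. The function $\KL(\cdot\,\|\,K)$ is strictly convex on the compact set $\Bn$, its minimizer is interior, and the Lagrange conditions make it a diagonal scaling of $K$. Read on its own, this already gives (a), because every doubly stochastic scaling of $K$ satisfies the same conditions. Your $F$ is exactly the Lagrange dual of that problem: with multipliers $a,b$ for the row and column constraints, the dual function is $\sum_{ij}K_{ij}-F(a,b)$. The primal route gets existence for free from compactness, but it says nothing about the iteration. Your dual route pays with the coercivity estimate. In exchange it identifies each half-sweep as an exact block minimization of $F$, and that is what yields (b). Neither route gives a rate.

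\emph{Hilbert-metric route.} The contraction of Hilbert's projective metric (Theorem~\ref{aux:hilbert}, used in Remark~\ref{rem:hilbert}) gives convergence with a global linear factor $\kappa(K)^2<1$ per sweep. Through the contraction principle on rays it also gives existence and uniqueness. The price is Birkhoff's contraction theorem. Proposition~\ref{prop:knight} then supplies the sharp local factor $\sigma_2(D_1KD_2)^2$.
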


\begin{theorem}[Perron--Frobenius]\label{aux:pf}
Let $N\in\R^{n\times n}$ be nonnegative.
\begin{enumerate}[leftmargin=1.6em,itemsep=1pt,label=(\alph*)]
\item If $N$ is irreducible, then its spectral radius is a simple eigenvalue \citep[Chapter~1]{seneta1981nonnegative}, \citep[Chapter~8]{horn2013matrix}.
\item If $N$ is symmetric and reducible, then there is a nonempty proper subset $\mathcal S\subset[n]$ with $N_{jk}=0$ for all $j\in\mathcal S$ and $k\notin\mathcal S$. (A matrix is reducible if a simultaneous permutation of its rows and columns gives a block upper triangular form. For a symmetric matrix the off-diagonal block is then zero.)
\end{enumerate}
\end{theorem}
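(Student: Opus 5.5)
For part (b) I would argue directly from the definition of reducibility. Suppose a permutation matrix $P$ gives
\[
PNP\T=\begin{pmatrix}B&C\\0&D\end{pmatrix},
\]
with square diagonal blocks of sizes $r$ and $n-r$, where $0<r<n$. Let $\mathcal S\subset[n]$ be the set of original indices that $P$ sends to the last $n-r$ positions. Then $\mathcal S$ is nonempty and proper. The zero lower-left block says exactly that $N_{jk}=0$ for $j\in\mathcal S$ and $k\notin\mathcal S$. Symmetry gives $C=D\T$-type vanishing as well: $N_{kj}=N_{jk}=0$, so $C=0$. This is the parenthetical remark, and it is not needed for the displayed conclusion. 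This part is bookkeeping only.

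For part (a) I would follow the classical route: reduce to positive matrices, then use an extremal argument. First, irreducibility implies $M:=(I+N)^{n-1}>0$ entrywise. The reason is that $(M)_{ij}>0$ exactly when the directed graph of $N$ has a path of length at most $n-1$ from $i$ to $j$, and irreducibility is strong connectivity of that graph. Second, I would show that $\rho:=\rho(N)$ has a positive eigenvector, using the Collatz--Wielandt function $r(x):=\min_{i:x_i>0}(Nx)_i/x_i$ on the simplex. The maximizer $z$ of $r(Mx)$ over the compact set $M\cdot\text{simplex}$ satisfies $Nz=r^\star z$; if equality failed, applying $M$ would strictly increase $r$. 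Also $z=Mx>0$, and $r^\star=\rho$, since $|Ny|\le N|y|$ gives $|\lambda|\le r(|y|)\le r^\star$ for every eigenpair $(\lambda,y)$.

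Third, simplicity. Geometric simplicity: if $y$ is a real eigenvector for $\rho$ that is not a multiple of $z$, choose $t$ so that $z-ty\ge0$ has a zero entry. This vector is a nonnegative nonzero $\rho$-eigenvector, so $M(z-ty)=(1+\rho)^{n-1}(z-ty)$. But the left side is strictly positive, a contradiction. Algebraic simplicity: I would use $\frac{d}{d\lambda}\det(\lambda I-N)=\tr\operatorname{adj}(\lambda I-N)$ at $\lambda=\rho$. The adjugate has rank one and equals $c\,zw\T$, where $w>0$ is the left Perron vector; this comes from applying the same argument to $N\T$. So it suffices that $c\ne0$, or directly that every principal submatrix $N'$ has $\rho(N')<\rho$, which makes each principal minor of $\rho I-N$ positive.

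I expect the main obstacle to be this algebraic-simplicity step, specifically the strict inequality $\rho(N')<\rho$ for proper principal submatrices. I would prove it by extending a Perron vector $x'\ge0$ of $N'$ by zeros to $x$. Then $Nx\ge\rho(N')x$, so $\rho(N')\le r(x)\le\rho$. Equality would force $x$ to be a $\rho$-eigenvector with a zero entry, which contradicts positivity as in the geometric-simplicity step. Since the paper only cites this theorem, in practice I would simply refer to \citet{seneta1981nonnegative} and \citet{horn2013matrix} for (a) and give the short argument above for (b).
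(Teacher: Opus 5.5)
Your proposal is correct and follows the paper. The paper handles (b) by exactly your bookkeeping, and you are right that the zero lower-left block already gives the displayed conclusion, with symmetry only adding that the other off-diagonal block vanishes; for (a) it simply cites Seneta and Horn--Johnson, whose standard proof via $(I+N)^{n-1}>0$ and the Collatz--Wielandt function you reproduce.

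The step you flag as the main obstacle is not one. Since $\rho I-N$ has rank $n-1$, its adjugate is nonzero, so $c\ne0$ automatically, and $\tr\operatorname{adj}(\rho I-N)=c\,w\T z\ne0$ because $w,z>0$. This makes the principal-submatrix detour unnecessary, which is just as well, because it would also need the weak Perron--Frobenius theorem for the possibly reducible $N'$.
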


\begin{theorem}[Cauchy--Binet formula and Cramer's rule]\label{aux:cb}
Let $X,Y\in\R^{m\times p}$ with $m\ge p$, and let $D=\diag(\omega)\in\R^{m\times m}$. Then
\[
\det(X\T DY)=\sum_{S\subseteq[m],\ |S|=p}\omega_S\,\det(X_S)\det(Y_S),\qquad \omega_S:=\prod_{s\in S}\omega_s ,
\]
where $X_S$ is the $p\times p$ submatrix of $X$ with the rows in $S$ \citep[Section~0.8.7]{horn2013matrix}. If $X\in\R^{p\times p}$ is nonsingular and $Xz=y$, then $z_r=\det(X^{(r)})/\det(X)$, where $X^{(r)}$ is $X$ with its $r$-th column replaced by $y$ (Cramer's rule).
\end{theorem}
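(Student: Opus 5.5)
This is a classical result cited from \citet{horn2013matrix}, so the plan is the standard one, in two independent parts.

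For the Cauchy--Binet formula, write $X\T DY=\sum_{s=1}^m\omega_s\,x_s y_s\T$, where $x_s\T$ and $y_s\T$ are the $s$-th rows of $X$ and $Y$. The determinant is multilinear in the columns of $X\T DY$. Column $r$ of $X\T DY$ is $\sum_s\omega_s (y_s)_r\,x_s$. Expanding each column over $s$ writes $\det(X\T DY)$ as a sum over maps $r\mapsto s_r$ from $[p]$ to $[m]$ of $\prod_r\omega_{s_r}(y_{s_r})_r\cdot\det[x_{s_1},\dots,x_{s_p}]$. Any map that repeats an index gives two equal columns and hence a zero determinant, so only injective maps survive. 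I would group the injective maps by their image $S$ with $|S|=p$. Inside one group, the map is an ordering of $S$ composed with a permutation $\sigma\in\mathfrak S_p$. Reordering the columns of $\det[x_{s_1},\dots,x_{s_p}]$ into the increasing order of $S$ gives the factor $\operatorname{sgn}\sigma\,\det(X_S)$. The weight $\prod_r\omega_{s_r}=\omega_S$ does not depend on $\sigma$. The remaining sum $\sum_\sigma\operatorname{sgn}\sigma\prod_r(y_{s_{\sigma(r)}})_r$ is then exactly the Leibniz expansion of $\det(Y_S)$. This sign bookkeeping in the regrouping is the only delicate step: I must check that the sign from reordering the $X$-columns and the sign in the Leibniz sum for $Y_S$ are the same permutation sign, so that they combine into $\det(X_S)\det(Y_S)$ and do not cancel. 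If $D$ were absent the statement would reduce to the usual Cauchy--Binet formula, and $D$ only contributes the scalar $\omega_S$ to each term.

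For Cramer's rule, write $y=Xz=\sum_k z_k X_{:,k}$. Then $X^{(r)}$ is $X$ with column $r$ replaced by this combination. By linearity of the determinant in column $r$, $\det X^{(r)}=\sum_k z_k\det(X\text{ with column }r\text{ replaced by }X_{:,k})$. Every term with $k\ne r$ has a repeated column and vanishes. This leaves $\det X^{(r)}=z_r\det X$, and dividing by $\det X\neq0$ gives the formula.

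The main obstacle is the sign bookkeeping in the Cauchy--Binet step. The Cramer step needs no tool beyond multilinearity and alternation of the determinant.
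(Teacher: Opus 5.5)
Your proposal is correct. The paper gives no proof of its own here: it cites Horn and Johnson, Section~0.8.7, and your multilinearity-and-alternation argument is the standard proof of both parts. The sign step you flag does close without cancellation. With $S=\{t_1<\dots<t_p\}$ and $s_r=t_{\sigma(r)}$, the matrix $[x_{s_1},\dots,x_{s_p}]$ is $X_S\T$ with its columns permuted by $\sigma$, so its determinant is $\operatorname{sgn}\sigma\,\det X_S$. The leftover sum $\sum_\sigma\operatorname{sgn}\sigma\prod_r Y_{t_{\sigma(r)},r}$ is then exactly the Leibniz expansion of $\det Y_S$. A shorter route is to apply the unweighted formula to $X$ and $DY$, since $\det((DY)_S)=\omega_S\det Y_S$.
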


\begin{theorem}[Spanning trees and incidence matrices]\label{aux:tree}
Let $(V,E)$ be a connected graph, and give its edges arbitrary orientations. Let $B\in\R^{|V|\times|E|}$ be the signed incidence matrix: $B_{ve}=+1$ if $v$ is the head of $e$, $B_{ve}=-1$ if $v$ is the tail of $e$, and $B_{ve}=0$ otherwise. Let $B_0$ be $B$ with one row removed. For a set $T\subseteq E$ of $|V|-1$ edges, the square submatrix of $B_0$ with the columns in $T$ is nonsingular if and only if $T$ is a spanning tree of the graph. In that case its determinant is $\pm1$ \citep[Chapter~2]{bapat2014graphs}.
\end{theorem}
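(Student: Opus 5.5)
We argue directly by linear algebra on the columns of $B_0$, treating the two directions of the equivalence separately and getting the determinant value $\pm1$ from the spanning-tree direction by induction. Throughout, $r$ denotes the vertex whose row is removed. We use one basic fact: every column of $B$ has exactly one $+1$ and one $-1$, so the rows of $B$ sum to zero.

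\emph{$T$ not a spanning tree $\Rightarrow$ singular.} Suppose $|T|=|V|-1$ and $T$ is not a spanning tree. Then the subgraph $(V,T)$ contains a cycle. Indeed, a forest on $|V|$ vertices with $|V|-1$ edges is connected, hence a spanning tree, contrary to assumption.
\begin{itemize}
\item Traverse the cycle $e_1,\dots,e_k$ in one direction, and set $s_m=+1$ if $e_m$ is oriented along the traversal and $s_m=-1$ otherwise.
\item Then $\sum_m s_m B_{\cdot e_m}=0$, because each vertex of the cycle is entered once and left once, so its contributions cancel.
\item This is a nontrivial vanishing combination of columns of $B$, hence also of $B_0$. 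So the square submatrix of $B_0$ on the columns $T$ is singular.
\end{itemize}

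\emph{$T$ a spanning tree $\Rightarrow$ $\det=\pm1$.} We induct on $|V|$.
\begin{itemize}
\item \emph{Base case.} For $|V|=1$ the matrix is empty, with determinant $1$. For $|V|=2$ it is the $1\times1$ matrix $(\pm1)$.
\item \emph{Finding a leaf.} In general, a tree with at least two vertices has at least two leaves, so some leaf $v\ne r$ exists. Let $e$ be the unique edge of $T$ at $v$.
\item \emph{Expanding.} In the submatrix $M=(B_0)_{\cdot T}$, the row of $v$ has a single nonzero entry, $\pm1$, in column $e$. Laplace expansion along this row gives $\det M=\pm\det M'$. Here $M'$ is $M$ with the row of $v$ and the column $e$ deleted.
\item \emph{Induction step.} The matrix $M'$ is exactly the reduced incidence matrix, with row $r$ removed, of the tree $T\setminus\{e\}$ on the vertex set $V\setminus\{v\}$. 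The orientations are inherited, and since we delete both the row $v$ and the edge $e$, $M'$ is the reduced incidence matrix of that smaller tree. By induction $\det M'=\pm1$, so $\det M=\pm1$, and in particular $M$ is nonsingular.
\end{itemize}

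The two directions together give the equivalence and the value of the determinant. The only point needing care is the choice of a leaf different from the deleted vertex $r$; the existence of two leaves guarantees it. Connectivity of the ambient graph $(V,E)$ is used only to ensure that spanning trees exist; the argument itself concerns the subgraph $(V,T)$ alone.
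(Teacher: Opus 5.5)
Your proof is correct and complete. If $T$ is not a spanning tree, the signed cycle sum gives a column dependency. If $T$ is a spanning tree, expanding along the row of a leaf $v\neq r$ reduces the determinant, up to sign, to the reduced incidence matrix of the tree $T\setminus\{e\}$ on $V\setminus\{v\}$. Induction then gives $\det=\pm1$, and your choice of a leaf different from the deleted vertex is exactly the point that needs care.

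The paper does not prove this statement. It quotes it as a known result from Chapter~2 of Bapat, so there is no in-paper argument to compare against. The standard textbook route, as in the cited chapter, separates your two conclusions. Nonsingularity comes from a rank count: the left kernel of the $|V|\times(|V|-1)$ incidence matrix of the connected graph $(V,T)$ is spanned by $\one$, so its columns are independent. Its rows sum to zero, so deleting any one row keeps full rank. The value $\pm1$ comes from total unimodularity of $B$: every square submatrix has a zero column, or a column with a single nonzero entry to expand along, or rows that sum to zero.

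Your leaf induction is shorter and gives both facts at once. That is exactly what the proof of Lemma~\ref{lem:additive} uses: $\det X_S\neq0$ if and only if $S$ is a spanning tree, and $\det(X_S)^2=1$ for every tree. The rank-plus-unimodularity route proves more, namely that every square minor of $B$ lies in $\{0,\pm1\}$.
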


\begin{lemma}[Operator-norm ball]\label{aux:convO}
The closed unit ball of the operator norm on $\R^{m\times m}$ is the convex hull of the orthogonal group $O(m)$.
\end{lemma}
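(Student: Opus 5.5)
The statement is Lemma~\ref{aux:convO}: the closed unit ball of the operator norm on $\R^{m\times m}$ is $\conv O(m)$. The plan has two inclusions, and I will prove each directly from the singular value decomposition.

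\textbf{Easy inclusion.} Every orthogonal matrix has $\|Z\|_2=1$. The operator-norm ball is convex, because a norm is convex. Therefore $\conv O(m)$ lies in the ball.

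\textbf{Reverse inclusion.} Let $\|A\|_2\le1$, and write $A=U\Sigma V\T$ with $U,V\in O(m)$ and $\Sigma=\diag(s_1,\dots,s_m)$, where $s_i\in[0,1]$. The map $X\mapsto UXV\T$ is linear and maps $O(m)$ onto itself. So it suffices to show that $\Sigma$ is a convex combination of orthogonal matrices.

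\textbf{Key step: writing $\Sigma$ as an average.} The set of diagonal matrices with entries in $[-1,1]$ is the cube $[-1,1]^m$. Its vertices are the sign matrices $\diag(\pm1,\dots,\pm1)$, and these are orthogonal. Since a cube is the convex hull of its vertices, every $\Sigma$ with $0\le s_i\le1$ is a convex combination of sign matrices. To make this explicit, write each $s_i=\lambda_i\cdot1+(1-\lambda_i)\cdot(-1)$ with $\lambda_i=(1+s_i)/2$. Then take the product distribution over independent signs $\epsilon_i\in\{\pm1\}$ with $\Pr[\epsilon_i=1]=\lambda_i$. This gives $\Sigma=\mathbb E\,\diag(\epsilon)$, a finite convex combination of $2^m$ sign matrices.

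\textbf{Conclusion.} Conjugating back gives $A=\sum_k p_k\,U D_k V\T$, where each $UD_kV\T\in O(m)$.

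There is no real obstacle here. The only point to check is that the cube's vertices are exactly orthogonal matrices, which is immediate because each is a diagonal $\pm1$ matrix.
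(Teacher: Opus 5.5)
Your proof is correct and is essentially the paper's proof: both take an SVD and write the diagonal factor as a convex combination of the $2^m$ sign matrices, using the product weights $\prod_i\frac{1+\epsilon_i s_i}{2}$ (your independent-sign expectation), and then conjugate back by the orthogonal factors. The easy inclusion is argued the same way as well, from $\|Z\|_2=1$ and the convexity of the ball.
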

\begin{proof}
\begin{steps}
\step{1}{1} $\conv O(m)$ is a subset of the ball.
\pf Orthogonal matrices have norm one, and the ball is convex.
\step{1}{2} \Assume $\|B\|_2\le1$. \Prove $B\in\conv O(m)$.
\pf Write $B=V\Sigma Z\T$ with $V,Z\in O(m)$ and $\Sigma=\diag(\sigma_1,\dots,\sigma_m)$, $\sigma_i\in[0,1]$. For a sign vector $s\in\{\pm1\}^m$, let $\lambda_s:=\prod_i\frac{1+s_i\sigma_i}{2}\ge0$. Then $\sum_s\lambda_s=1$ and $\sum_s\lambda_s\,s_i=\sigma_i$ for every $i$. Thus $\Sigma=\sum_s\lambda_s\diag(s)$, and $B=\sum_s\lambda_s\,V\diag(s)Z\T$ is a convex combination of orthogonal matrices.
\qedstep{1}{3}
\pf By \sref{1}{1} and \sref{1}{2}.
\end{steps}
\end{proof}

\begin{lemma}[A norm inequality]\label{aux:schur}
For $A\in\R^{n\times n}$, $\|A\|_2^2\le\|A\|_1\|A\|_\infty$, where $\|A\|_1$ is the largest absolute column sum and $\|A\|_\infty$ is the largest absolute row sum \citep[Chapter~5]{horn2013matrix}.
\end{lemma}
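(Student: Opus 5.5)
The plan is to prove $\|A\|_2^2\le\|A\|_1\|A\|_\infty$ through the spectral radius of $A\T A$. Since $\|A\|_2^2=\|A\T A\|_2=\rho(A\T A)$, it suffices to bound this spectral radius by an induced norm. The spectral radius of a matrix is at most any operator norm induced by a vector norm, because $Bv=\lambda v$ with $v\ne0$ gives $|\lambda|\,\|v\|\le\|B\|\,\|v\|$. We choose the $\ell_\infty$-induced norm, that is, the maximum absolute row sum.

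The key steps are as follows.
\begin{enumerate}
\item Write $\|A\|_2^2=\rho(A\T A)$.
\item Bound $\rho(A\T A)\le\|A\T A\|_\infty$ by the eigenvector argument above, applied with the $\ell_\infty$ vector norm.
\item Use submultiplicativity of the induced norm: $\|A\T A\|_\infty\le\|A\T\|_\infty\|A\|_\infty$.
\item Observe that the largest absolute row sum of $A\T$ is the largest absolute column sum of $A$, so $\|A\T\|_\infty=\|A\|_1$.
\end{enumerate}
Chaining these gives $\|A\|_2^2\le\|A\|_1\|A\|_\infty$.

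The only point that needs care is the eigenvalue bound in step 2. Here $A\T A$ is symmetric positive semidefinite, so its largest eigenvalue $\rho$ is real and nonnegative and has a real eigenvector. We therefore take $v$ real, and then $\rho\,\|v\|_\infty=\|A\T Av\|_\infty\le\|A\T A\|_\infty\|v\|_\infty$ gives the bound directly. Nothing else is delicate; this is the standard Schur-type argument.
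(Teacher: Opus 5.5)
Your argument is correct: $\|A\|_2^2=\rho(A^{\top}A)\le\|A^{\top}A\|_\infty\le\|A^{\top}\|_\infty\|A\|_\infty=\|A\|_1\|A\|_\infty$ is the standard argument behind the paper's citation of Horn and Johnson. The paper states the lemma as a known auxiliary result and gives no proof of its own. Nothing is missing, and your remark that the eigenvector can be taken real because $A^{\top}A$ is symmetric covers the one detail that needs care.
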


\begin{lemma}[Cayley transform]\label{aux:cayley}
Let $\Omega\in\R^{n\times n}$ be skew-symmetric. Then $I-\frac12\Omega$ is invertible, the matrix $C(\Omega):=(I-\frac12\Omega)^{-1}(I+\frac12\Omega)$ is orthogonal, and $\Omega\mapsto C(\Omega)$ is smooth with $C(0)=I$ and $\frac{\dd}{\dd t}C(t\Omega)|_{t=0}=\Omega$ \citep{wen2013feasible}.
\end{lemma}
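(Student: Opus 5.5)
The statement is the Cayley transform lemma (Lemma~\ref{aux:cayley}). I would prove it directly from skew-symmetry, in three short steps.

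\textbf{Invertibility.} A skew-symmetric $\Omega$ satisfies $x\T\Omega x=0$ for every real $x$, so its eigenvalues are purely imaginary. Hence $1\pm\frac12\lambda\ne0$ for every eigenvalue $\lambda$, and both $I-\frac12\Omega$ and $I+\frac12\Omega$ are invertible. Equivalently, $\|(I-\frac12\Omega)x\|_2^2=\|x\|_2^2+\frac14\|\Omega x\|_2^2$, which gives injectivity directly, since the cross term is $x\T\Omega x=0$.

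\textbf{Orthogonality.} Write $M_\pm:=I\pm\frac12\Omega$. These two matrices commute, since both are polynomials in $\Omega$, and $M_\pm\T=M_\mp$. Then
\[
C\T C=M_+\T M_-^{-\top}M_-^{-1}M_+=M_-(M_+M_-)^{-1}M_+ .
\]
Because $M_+$ and $M_-$ commute, $M_+M_-=M_-M_+$. Therefore $M_-(M_-M_+)^{-1}M_+=M_-M_+^{-1}M_-^{-1}M_+$. Commutativity also passes to the inverses, so this product equals $I$.

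\textbf{Smoothness and derivative.} The map $\Omega\mapsto C(\Omega)$ is smooth because matrix inversion is smooth on the open set of invertible matrices. Clearly $C(0)=I$. To differentiate $C(t\Omega)$ at $t=0$, I would use $\frac{\dd}{\dd t}M^{-1}=-M^{-1}\dot M M^{-1}$ together with the product rule:
\[
\frac{\dd}{\dd t}C(t\Omega)\Big|_{t=0}=\tfrac12\Omega+\tfrac12\Omega=\Omega .
\]

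No step is a real obstacle. The only point that needs care is the commutation argument in the orthogonality step.
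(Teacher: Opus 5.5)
Your proof is correct. The paper gives no proof of this lemma: it lists it among the known auxiliary results and cites \citet{wen2013feasible}. Your argument therefore replaces a citation rather than competing with an existing proof, and it is the standard direct verification.

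Two small points.

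First, in the orthogonality step, $M_+^{-1}M_-^{-1}=(M_-M_+)^{-1}$, not $(M_+M_-)^{-1}$. This slip is harmless because you invoke commutativity right afterwards. You can also shorten the step to $C^{\top}C=M_-M_+^{-1}M_-^{-1}M_+=M_-M_-^{-1}M_+^{-1}M_+=I$, since $M_+^{-1}$ and $M_-^{-1}$ commute.

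Second, in the invertibility step, going from $x^{\top}\Omega x=0$ for real $x$ to purely imaginary eigenvalues needs complex eigenvectors. Polarization gives $v^{*}\Omega v=2i\,x^{\top}\Omega y$ for $v=x+iy$. Your norm identity $\|(I-\frac12\Omega)x\|_2^2=\|x\|_2^2+\frac14\|\Omega x\|_2^2$ avoids this issue and suffices on its own. Invertibility of $M_+=M_-^{\top}$ then follows from that of $M_-$.

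Where the paper uses the lemma, in the Cayley retraction of Proposition~\ref{prop:isometric}(c), it also needs $C(\Omega)\one=\one$ when $\Omega\one=0$. The paper proves that fact separately, so your argument does not need to cover it.
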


\begin{theorem}[Hilbert's projective metric]\label{aux:hilbert}
Let $K>0$, and let $\theta(K):=\max_{i,j,k,l}K_{ik}K_{jl}/(K_{jk}K_{il})$ and $\kappa(K):=(\sqrt{\theta(K)}-1)/(\sqrt{\theta(K)}+1)$. The map $x\mapsto Kx$ multiplies Hilbert's projective metric on the positive orthant by at most $\kappa(K)$ \citep{birkhoff1957extensions}. Thus the Sinkhorn scaling vectors converge linearly in this metric, with a factor of at most $\kappa(K)^2$ per sweep \citep{franklin1989scaling}, \citep[Theorems~4.1--4.2]{peyre2019computational}.
\end{theorem}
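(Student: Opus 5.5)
The statement just before the cut is Theorem~\ref{aux:hilbert}: Birkhoff's contraction of Hilbert's projective metric, and the resulting linear convergence of the Sinkhorn scaling vectors. It is a known result, so the plan reconstructs the standard argument.

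Let $d_H(x,y):=\log\max_i(x_i/y_i)+\log\max_j(y_j/x_j)$ be Hilbert's projective metric on the positive orthant. The argument has three steps.

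\textbf{Step 1: reduce to a two-point computation.} Fix $x,y>0$, not proportional. After rescaling, assume $m:=\min_i x_i/y_i$ and $M:=\max_i x_i/y_i$ satisfy $my\le x\le My$. Then $x-my\ge0$ and $My-x\ge0$, and both are nonzero. Set $u:=K(x-my)$ and $v:=K(My-x)$. Since $K>0$, both are positive vectors, and $Ku+Kv$-type bookkeeping gives
\[
u+v=(M-m)Ky .
\]
Write $Kx=mKy+u$. Then the ratio $(Kx)_k/(Ky)_k$ is a Möbius function of the ratio $u_k/v_k$, and that function is monotone. It follows that $d_H(Kx,Ky)$ is determined by the extreme values of $u_k/v_k$. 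The result has the form
\[
d_H(Kx,Ky)=\log\frac{(m+M\lambda_{\max})(1+\lambda_{\min})}{(1+\lambda_{\max})(m+M\lambda_{\min})},\qquad \lambda:=u_k/v_k .
\]

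\textbf{Step 2: bound the spread of $u_k/v_k$ by the cross-ratio.} The vectors $u$ and $v$ are images under $K$ of nonnegative nonzero vectors. For any nonnegative nonzero $p,q$,
\[
\frac{(Kp)_k\,(Kq)_l}{(Kp)_l\,(Kq)_k}\le\theta(K).
\]
To see this, expand both products as sums over pairs of indices and compare term by term, using $K_{ks}K_{lt}\le\theta K_{ls}K_{kt}$. This gives $\lambda_{\max}/\lambda_{\min}\le\theta$.

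\textbf{Step 3: the one-variable optimization.} Maximize the ratio
\[
\frac{d_H(Kx,Ky)}{\log(M/m)}
\]
over $m<M$ and over $\lambda_{\min}\le\lambda_{\max}$ with $\lambda_{\max}\le\theta\lambda_{\min}$. I expect this to be the main obstacle. The standard route is Bushell's argument. Use the substitution $M/m=e^{s}$, and then show that the supremum of the resulting difference quotient equals $\tanh(\tfrac14\log\theta)$, which is $\kappa(K)$. The mechanism is a concavity (mean-value) argument for the function
\[
t\mapsto\log\frac{e^{t}\lambda+1}{\lambda+1}.
\]
This step carries the whole quantitative content, and I would follow \citet{birkhoff1957extensions} closely here.

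\textbf{Sinkhorn consequence.} One sweep is $\nu\mapsto\one\oslash(K\T(\one\oslash K\nu))$. Entrywise inversion is an isometry of $d_H$. The map $K$ contracts $d_H$ by $\kappa(K)$, and $K\T$ contracts it by $\kappa(K\T)=\kappa(K)$, since $\theta(K\T)=\theta(K)$. So one sweep contracts by $\kappa(K)^2$. The fixed point is $\nu^\star$ from Theorem~\ref{aux:sinkhorn}, and iterating gives linear convergence, as in \citet{franklin1989scaling}.
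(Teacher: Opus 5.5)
Your outline is correct, and it is the standard argument: Birkhoff's contraction estimate \citep{birkhoff1957extensions}, followed by the composition of two contractions with the isometric entrywise inversion, as in \citet{franklin1989scaling}. The paper gives no proof of its own here and only cites these sources, so you are reconstructing exactly the argument it relies on. The step you deferred closes in one line, and the mechanism is a bound on the derivative of a difference, not concavity (the function $t\mapsto\log\frac{e^t\lambda+1}{\lambda+1}$ is convex). With $\ell(z):=e^z/(1+e^z)$, the numerator $\log\frac{(1+e^{s}\lambda_{\max})(1+\lambda_{\min})}{(1+\lambda_{\max})(1+e^{s}\lambda_{\min})}$ equals $\int_0^{s}\big[\ell(r+\log\lambda_{\max})-\ell(r+\log\lambda_{\min})\big]\,\mathrm{d}r$. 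Because $\ell$ is increasing and $\log\lambda_{\max}-\log\lambda_{\min}\le\log\theta$, the integrand is at most $\max_z\big[\ell(z+\log\theta)-\ell(z)\big]=\ell(\tfrac12\log\theta)-\ell(-\tfrac12\log\theta)=\tanh(\tfrac14\log\theta)=\kappa(K)$.
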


\begin{theorem}[Knight]\label{aux:knight}
Let $K>0$ have the doubly stochastic scaling $H$. The Sinkhorn--Knopp iteration converges linearly, with asymptotic rate $\sigma_2(H)^2$ per sweep \citep{knight2008sinkhorn}; see also \citet[Section~7.2]{idel2016review}. Proposition~\ref{prop:knight} gives a self-contained derivation.
\end{theorem}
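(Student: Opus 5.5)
The statement is Knight's theorem: for $K>0$ with doubly stochastic scaling $H$, Sinkhorn--Knopp converges linearly at asymptotic rate $\sigma_2(H)^2$ per sweep. I would derive it self-containedly, as the paper promises in Proposition~\ref{prop:knight}. The approach is to linearize one sweep $\Psi$ of \eqref{eq:sweep} at its fixed point $\nu^\star$ and read off the spectral radius of the linearization on the relevant subspace. Global convergence itself I take from Theorem~\ref{aux:sinkhorn}(b), so only the rate needs proof.

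\emph{Linearization.} Write $\nu=\nu^\star\odot(\one+v)$ and use $H=\diag(\mu^\star)K\diag(\nu^\star)$ with $H\one=H\T\one=\one$.
\begin{enumerate}
\item[(i)] Row step: $K\nu=\one\oslash\mu^\star+\diag(\mu^\star)^{-1}Hv$, so $\mu(\nu)=\mu^\star\odot(\one-Hv+O(\|v\|^2))$.
\item[(ii)] Column step: $K\T\mu(\nu)=\diag(\nu^\star)^{-1}(\one-H\T Hv+O(\|v\|^2))$, so $\Psi(\nu)=\nu^\star\odot(\one+H\T Hv+O(\|v\|^2))$.
\end{enumerate}
The same computation shows that the column sums of $\diag(\mu(\nu))K\diag(\nu)$ are $\one+(I-H\T H)v+O(\|v\|^2)$. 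So the column-sum error is a linear image of $v$ that kills the direction $\one$.

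\emph{Spectral step.} The homogeneity $\Psi(s\nu)=s\Psi(\nu)$ means the direction $v\propto\one$ is a pure gauge: it is a fixed direction of $H\T H$ with eigenvalue $1$ and carries no error. The meaningful iteration is therefore the induced map on $\one^\perp$, equivalently the iteration in Hilbert's projective metric. By (F3), $H\T H$ commutes with $Q$ and maps $\one^\perp$ into itself. It is symmetric positive semidefinite there, with norm $\|HQ\|_2^2=\sigma_2(H)^2$. Hence the derivative of the projective sweep at the fixed point has spectral radius exactly $\sigma_2(H)^2$. The second-order term needs a little care to pass to the quotient; I would normalize $\nu$ by a fixed coordinate, or by $\prod_j\nu_j=1$, so that the normalization map is smooth.

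\emph{Asymptotic rate.} By the standard result for $C^1$ maps at a fixed point, a spectral radius $\rho<1$ gives local linear convergence at asymptotic rate $\rho$. Here $\rho=\sigma_2(H)^2<1$, by Proposition~\ref{prop:rigidity}(c) since $H\in\Bno$. Combined with global convergence, $\limsup_k\|Q\log(\nu_k\oslash\nu^\star)\|^{1/k}\le\sigma_2(H)^2$, and the column-sum error inherits this rate. For equality, take a start whose error has a component along a top eigenvector of $H\T H$ on $\one^\perp$; that component decays exactly like $\sigma_2^{2k}$ in the linearization.

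\emph{Main obstacle.} The hard step is exactness in the nonlinear iteration, that is, showing this component is not annihilated and higher-order terms do not make the true rate strictly faster. For a generic start, the linearization's dominant mode governs the decay. I would make this rigorous by a Hartman--Grobman-free argument: project the error $Qv_k$ on the top eigenspace and show the ratio of successive projections tends to $\sigma_2^2$ because the $O(\|v\|^2)$ terms are summable along a geometrically converging sequence.
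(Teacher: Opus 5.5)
Your proposal is correct and matches the paper's derivation in Proposition~\ref{prop:knight}: you linearize one sweep at $\nu^\star$ to get $\Psi(\nu)=\nu^\star\odot(\one+H\T Hv+O(\|v\|^2))$, treat $\one$ as the scaling gauge, and read off the norm $\sigma_2(H)^2$ of $H\T H$ on $\one^\perp$. Your extra steps go beyond the paper, which proves the bound and its tightness only for the linearized iteration and cites Knight for the nonlinear statement; these steps are quotienting out the scaling, using the local-convergence theorem for $C^1$ maps together with global convergence, and the summability argument for exactness. Note that summability only shows that $\sigma_2(H)^{-2k}$ times the top-eigenspace component of $Qv_k$ converges, and you still need that limit to be nonzero. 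This holds, for example, for starts near $\nu^\star$ whose top component is a fixed fraction of $\|Qv_0\|_2$.
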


\section{Preliminary lemmas}\label{app:prelim}

\paragraph{How to read the proofs.}
We write the proofs in the structured style of \citet{lamport1995write,lamport2012write}. A proof is a list of numbered steps, and $\pflab{k}{j}$ is the $j$-th step at level $k$. Each step states a claim, and a proof of the claim follows it. The proof of a step is a short paragraph or a list of steps at the next level. The last step of each list, \textsc{q.e.d.}, proves the goal of the list from the earlier steps. \textsc{Assume} and \textsc{Prove} state the hypothesis and the goal of a step. \textsc{Suffices} reduces the current goal to a new goal. \textsc{Let} introduces a symbol. A reader who wants only the main idea can read the level-1 steps and skip their proofs.

\begin{lemma}[Gauge complement]\label{lem:gauge-complement}
Let $H\in\Bno$.
\begin{enumerate}[leftmargin=1.6em,itemsep=1pt,label=(\alph*)]
\item The $g_H$-orthogonal complement of $\Tsp$ in $\R^{n\times n}$ is $H\odot\gauge$, and $\R^{n\times n}=\Tsp\oplus(H\odot\gauge)$.
\item For every $\xi\in\R^{n\times n}$, the affine space $\xi+H\odot\gauge$ meets $\Tsp$ in exactly one point, namely $\Pfr_H(\xi)$. This point is given by \eqref{eq:FRproj}.
\item $\ker\Pfr_H=H\odot\gauge$.
\end{enumerate}
\end{lemma}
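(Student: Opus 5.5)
The plan is a direct linear-algebra argument built on one orthogonality computation. For $\xi\in\Tsp$ and $a,b\in\R^n$, we compute
\[
g_H\big(\xi,\,H\odot(a\one\T+\one b\T)\big)=\ip{\xi}{a\one\T+\one b\T}=a\T\xi\one+\one\T\xi b=0 .
\]
The division by $H$ in \eqref{eq:FR} cancels the factor $H$ here, which is the only step that uses the form of the metric. Thus $H\odot\gauge$ is $g_H$-orthogonal to $\Tsp$.

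Next, we compare dimensions for (a). Because $H>0$, the map $X\mapsto H\odot X$ is a linear bijection, so $\dim(H\odot\gauge)=\dim\gauge=2n-1$. Also $\dim\Tsp=(n-1)^2$, and $(n-1)^2+(2n-1)=n^2$. The metric $g_H$ is positive definite on $\R^{n\times n}$ because $H>0$. Hence a subspace that is orthogonal to $\Tsp$ and has dimension $n^2-\dim\Tsp$ is the full orthogonal complement. This gives $\R^{n\times n}=\Tsp\oplus(H\odot\gauge)$, which is (a).

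For (b), we use the decomposition from (a). Write $\xi=\Pfr_H(\xi)+\zeta$ with $\zeta\in H\odot\gauge$. Then $\Pfr_H(\xi)=\xi-\zeta$ lies in $(\xi+H\odot\gauge)\cap\Tsp$. Now suppose two points of this affine space lie in $\Tsp$. Their difference lies in $\Tsp\cap(H\odot\gauge)=\{0\}$, so the intersection point is unique. To obtain \eqref{eq:FRproj}, we impose $\xi-H\odot(a\one\T+\one b\T)\in\Tsp$ as zero row and column sums. Row $i$ gives $\sum_j\xi_{ij}-a_i\sum_jH_{ij}-\sum_jH_{ij}b_j=0$, and $\sum_jH_{ij}=1$ reduces this to the first equation. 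The columns give the second equation in the same way, using $H\T\one=\one$. Thus every solution $(a,b)$ of \eqref{eq:FRproj} yields $\Pfr_H(\xi)$. Solutions exist by existence of the intersection point, and they may be nonunique, since $(a+s\one,b-s\one)$ gives the same point.

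Part (c) is immediate from (a), because the kernel of an orthogonal projection is the orthogonal complement of its range.

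We expect no real obstacle. The only points needing care are the dimension count and the strict positivity of $H$, which is used both for the bijection $X\mapsto H\odot X$ and for the definiteness of $g_H$.
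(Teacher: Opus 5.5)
Your proof is correct and follows essentially the same route as the paper: the identity $g_H(\xi,H\odot s)=\ip{\xi}{s}$ together with the dimension count $(n-1)^2+(2n-1)=n^2$ gives (a). The row- and column-sum conditions with $H\one=H\T\one=\one$ give \eqref{eq:FRproj}, and (c) follows from the direct sum. The only cosmetic difference is that the paper gets the full complement in one step from $\gauge=\Tsp^\perp$, whereas you prove the inclusion directly and close it by counting dimensions.
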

\begin{proof}
\begin{steps}
\step{1}{1} A matrix $\xi$ is $g_H$-orthogonal to $\Tsp$ if and only if $\xi\in H\odot\gauge$.
\pf By \eqref{eq:FR}, $g_H(\xi,\zeta)=\ip{\xi\oslash H}{\zeta}$. Thus $\xi$ is $g_H$-orthogonal to $\Tsp$ if and only if $\xi\oslash H\in\Tsp^\perp=\gauge$, that is, $\xi\in H\odot\gauge$.
\step{1}{2} $\R^{n\times n}=\Tsp\oplus(H\odot\gauge)$, and the sum is $g_H$-orthogonal.
\pf The map $s\mapsto H\odot s$ is injective because $H>0$. Thus $\dim(H\odot\gauge)=\dim\gauge=2n-1$, and $\dim\Tsp+\dim(H\odot\gauge)=(n-1)^2+2n-1=n^2$. By \sref{1}{1} the two subspaces are $g_H$-orthogonal, so they meet only in $0$.
\step{1}{3} Let $\xi\in\R^{n\times n}$. Then $\Pfr_H(\xi)=\xi-H\odot(a\one\T+\one b\T)$, where $(a,b)$ solves \eqref{eq:FRproj}.
\pf By \sref{1}{2}, $\xi=\Pfr_H(\xi)+H\odot s$ for a unique $s=a\one\T+\one b\T\in\gauge$. The condition $\Pfr_H(\xi)\in\Tsp$ says that the row sums and the column sums of $\xi-H\odot s$ are zero:
\[
\textstyle\sum_j\xi_{ij}-\sum_jH_{ij}(a_i+b_j)=0,\qquad \sum_i\xi_{ij}-\sum_iH_{ij}(a_i+b_j)=0 .
\]
The rows and the columns of $H$ sum to one, so these equations are \eqref{eq:FRproj}.
\qedstep{1}{4}
\pf Steps \sref{1}{1} and \sref{1}{2} give (a). By \sref{1}{2}, the affine space $\xi+H\odot\gauge$ meets $\Tsp$ in exactly one point, and by \sref{1}{3} this point is $\Pfr_H(\xi)$; this gives (b). By \sref{1}{2}, $\Pfr_H(\xi)=0$ if and only if $\xi\in H\odot\gauge$; this gives (c).
\end{steps}
\end{proof}

\begin{lemma}[Balancing equations are normal equations]\label{lem:fit-normal}
Let $H\in\Bno$, $C\in\R^{n\times n}$, and $(a,b)\in\R^n\times\R^n$. The following are equivalent.
\begin{enumerate}[leftmargin=1.6em,itemsep=1pt,label=(\roman*)]
\item $a_i+\sum_jH_{ij}b_j=\sum_jH_{ij}C_{ij}$ for all $i$, and $b_j+\sum_iH_{ij}a_i=\sum_iH_{ij}C_{ij}$ for all $j$.
\item $(a,b)$ minimizes $F(a,b):=\sum_{i,j}H_{ij}\,(C_{ij}-a_i-b_j)^2$.
\end{enumerate}
Consequently $\Pfr_H(H\odot C)=H\odot(C-m)$, where $m=a\one\T+\one b\T$ is the weighted least-squares fit of $C$ by an additive model with weights $H$.
\end{lemma}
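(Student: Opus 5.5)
The statement to prove is Lemma~\ref{lem:fit-normal}. The plan is to show that (i) is exactly the normal equations of the weighted least-squares problem in (ii). The consequence then follows from the characterization of $\Pfr_H$ in Lemma~\ref{lem:gauge-complement}(b).

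\textbf{Step 1: (i) is the vanishing of the gradient of $F$.} The function $F$ is a convex quadratic in $(a,b)$, because it is a sum of squares of affine functions with nonnegative weights. So $(a,b)$ minimizes $F$ if and only if $\nabla F(a,b)=0$. We compute
\[
\partial F/\partial a_i=-2\sum_jH_{ij}(C_{ij}-a_i-b_j).
\]
Setting this to zero and using $\sum_jH_{ij}=1$ gives $a_i+\sum_jH_{ij}b_j=\sum_jH_{ij}C_{ij}$. In the same way, $\partial F/\partial b_j=0$ together with $\sum_iH_{ij}=1$ gives the second equation of (i). Hence (i)$\iff$(ii).

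We should also check that a minimizer exists, so that the consequence is not vacuous. A convex quadratic that is bounded below attains its minimum on a finite-dimensional space. This holds here because the quadratic form has the kernel $\{a=s\one,\ b=-s\one\}$ and is positive definite on a complement. Alternatively, existence follows from Lemma~\ref{lem:gauge-complement}(b), which produces a solution of \eqref{eq:FRproj}.

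\textbf{Step 2: the consequence.} Take $\xi=H\odot C$ in \eqref{eq:FRproj}. Its row sums are $\sum_j\xi_{ij}=\sum_jH_{ij}C_{ij}$, and its column sums are $\sum_iH_{ij}C_{ij}$. So \eqref{eq:FRproj} for this $\xi$ is exactly system (i). By Lemma~\ref{lem:gauge-complement}(b), $\Pfr_H(H\odot C)=H\odot C-H\odot(a\one\T+\one b\T)=H\odot(C-m)$ for any solution $(a,b)$ of (i). By Step 1, these solutions are precisely the minimizers of $F$.

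It remains to see that $m$ is well defined even though $(a,b)$ is not unique. The point $\Pfr_H(H\odot C)$ is unique by Lemma~\ref{lem:gauge-complement}(b), and $H>0$, so $m=C-\Pfr_H(H\odot C)\oslash H$ is determined.

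The main obstacle is minor: keeping track of the non-uniqueness of $(a,b)$ along the gauge direction $(s\one,-s\one)$, while the fitted values $m$ stay unique. This is handled in Step 2 by using the uniqueness of the projection.
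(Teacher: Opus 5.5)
Your proposal is correct and follows the paper's proof. Convexity of $F$ reduces minimization to stationarity, the stationarity conditions become (i) because $H$ has unit row and column sums, and the consequence follows from Lemma~\ref{lem:gauge-complement}(b) with $\xi=H\odot C$. Your extra checks are also sound: that a minimizer exists, and that $m=C-\Pfr_H(H\odot C)\oslash H$ is well defined despite the gauge freedom $(s\one,-s\one)$. The paper leaves these implicit here and proves uniqueness of the fitted values separately in Lemma~\ref{lem:additive}.
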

\begin{proof}
\begin{steps}
\step{1}{1} The minimizers of $F$ are exactly its stationary points.
\pf $F$ is a convex quadratic function of $(a,b)$.
\step{1}{2} The stationary points of $F$ are exactly the solutions of (i).
\pf The partial derivatives of $F$ are zero if and only if
\[
\textstyle\sum_jH_{ij}(C_{ij}-a_i-b_j)=0\ \ (i\in[n]),\qquad
\sum_iH_{ij}(C_{ij}-a_i-b_j)=0\ \ (j\in[n]).
\]
Because $\sum_jH_{ij}=1=\sum_iH_{ij}$, these equations are the equations in (i).
\qedstep{1}{3}
\pf Steps \sref{1}{1} and \sref{1}{2} give the equivalence. The equations (i) are \eqref{eq:FRproj} with $\xi=H\odot C$. Thus Lemma~\ref{lem:gauge-complement}(b) gives $\Pfr_H(H\odot C)=H\odot C-H\odot m$.
\end{steps}
\end{proof}

\begin{lemma}[Block structure]\label{lem:two-block}
Let $H\in\Bn$, and let $[\one/\sqrt n,\ U]$ be an orthonormal basis of $\R^n$ ($U\T U=I$, $U\T\one=0$). Then $H=J/n+QHQ$, the subspaces $\R\one$ and $\one^\perp$ are invariant under $H$ and $H\T$, and in this basis
\[
H=\begin{pmatrix}1&0\\0&U\T HU\end{pmatrix}.
\]
Thus the singular values of $H$ are $1$ and the singular values of $U\T HU$. In particular $\sigma_2(H)=\|U\T HU\|_2$ and $|\det(U\T HU)|=\prod_{i=2}^n\sigma_i(H)$. The same statements hold for products of doubly stochastic matrices.
\end{lemma}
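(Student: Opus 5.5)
The plan is to verify the block form directly from fact (F3) and then read off the spectral consequences. First, $H=J/n+QHQ$. Expand $QHQ=(I-J/n)H(I-J/n)=H-JH/n-HJ/n+JHJ/n^2$. By (F3), $JH=HJ=J$ and $JHJ=J\cdot J=nJ$. The right side is therefore $H-J/n-J/n+J/n=H-J/n$, which gives the identity. Invariance follows in the same way. We have $H\one=\one$ and $H\T\one=\one$, so $\R\one$ is invariant under $H$ and $H\T$. If $x\perp\one$, then $\one\T Hx=(H\T\one)\T x=\one\T x=0$, and the same computation works for $H\T$. So $\one^\perp$ is invariant under both maps.

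Next, let $V:=[\one/\sqrt n,\ U]$, which is orthogonal. We compute $V\T HV$ block by block. The $(1,1)$ entry is $\one\T H\one/n=1$. The off-diagonal blocks are $\one\T HU/\sqrt n=\one\T U/\sqrt n=0$ and $U\T H\one/\sqrt n=U\T\one/\sqrt n=0$. The lower-right block is $U\T HU$. Since $V$ is orthogonal, $H$ and this block-diagonal matrix have the same singular values. Those singular values are $1$ together with the singular values of $U\T HU$, which gives the displayed form.

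For $\sigma_2(H)=\|U\T HU\|_2$, we use that $HQ=HUU\T$ because $Q=UU\T$. Then $\|HQ\|_2=\|HU\|_2$, since $U\T$ is a co-isometry onto $\R^{n-1}$. By invariance, $HU=UU\T HU$, so $\|HU\|_2=\|U\T HU\|_2$. This matches the definition $\sigma_2(H)=\|HQ\|_2$. By (F1), every singular value of $U\T HU$ is at most $1$. The singular values of $H$ in decreasing order are therefore $1$ followed by those of $U\T HU$, with the leading $1$ possibly tied. This justifies the indexing $\sigma_i(H)=\sigma_{i-1}(U\T HU)$ for $i\ge2$. The determinant identity is then the standard fact that $|\det|$ equals the product of the singular values.

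Finally, the statement for products follows from (F2): a product of matrices in $\Bn$ lies in $\Bn$, so the same argument applies to it verbatim. Alternatively, block-diagonal forms multiply blockwise, giving $U\T H_{L:l}U=\prod U\T H_iU$. The only point that needs care is the ordering of the singular values when $1$ is repeated. (F1) settles it, because the block $U\T HU$ then has $\sigma_1=1$ and the multiset is unaffected.
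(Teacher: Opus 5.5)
Your proof is correct and follows essentially the same route as the paper: (F3) gives the decomposition $H=J/n+QHQ$ and the invariance of $\R\one$ and $\one^\perp$, the basis $[\one/\sqrt n,\ U]$ gives the block-diagonal form, (F1) puts the singular value $1$ first, and (F2) handles products. Your explicit check that $\|HQ\|_2=\|U\T HU\|_2$, which matches the definition of $\sigma_2$, is a small detail that the paper leaves implicit.
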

\begin{proof}
\begin{steps}
\step{1}{1} $H=J/n+QHQ$.
\pf By (F3), $HJ=J$ and $HQ=QH$. Thus $H=H(J/n+Q)=J/n+HQ$, and $HQ=HQ^2=QHQ$.
\step{1}{2} $\R\one$ and $\one^\perp$ are invariant under $H$ and $H\T$.
\pf $H\one=\one$. For $x\perp\one$, $\ip{Hx}{\one}=\ip{x}{H\T\one}=\ip{x}{\one}=0$. The same argument applies to $H\T\in\Bn$.
\step{1}{3} The singular values of $H$ are $1$ and the singular values $s_1\ge\dots\ge s_{n-1}$ of $U\T HU$, with $\sigma_{k+1}(H)=s_k$.
\pf By \sref{1}{2}, $H$ has the stated block form, and a block diagonal matrix has the singular values of its blocks. By (F1), $s_1\le\|H\|_2\le1$, so the largest singular value is the block $1$.
\qedstep{1}{4}
\pf Steps \sref{1}{1}--\sref{1}{3} give all claims, because $|\det(U\T HU)|=\prod_ks_k$. By (F2), products of doubly stochastic matrices are doubly stochastic.
\end{steps}
\end{proof}

\begin{lemma}[Monotonicity of the vertex proximity]\label{lem:eps-mono}
Let $H,H'\in\R^{n\times n}_{>0}$ and $0<\lambda\le\lambda'$ satisfy $\lambda H_{ij}\le H'_{ij}\le\lambda' H_{ij}$ for all $(i,j)$. Then $\lambda\,\eps_\pi(H)\le\eps_\pi(H')\le\lambda'\eps_\pi(H)$ for every $\pi$, and $\lambda\,\eps(H)\le\eps(H')\le\lambda'\eps(H)$.
\end{lemma}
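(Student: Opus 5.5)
The plan is to prove the bound for a fixed $\pi$ first, directly from the definition $\eps_\pi(H)=\max_{i,\,j\ne\pi(i)}H_{ij}$, and then pass to $\eps(H)=\min_\pi\eps_\pi(H)$ by monotonicity of $\min$.

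\textbf{Fixed $\pi$.} Positivity of $H$ and $H'$ ensures every entry in the maximum is a positive real, so the maxima are well defined. The maximum runs over the same index set $\{(i,j):j\ne\pi(i)\}$ for both matrices, and this set is nonempty since $n\ge2$. For each such $(i,j)$ the hypothesis gives $H'_{ij}\le\lambda'H_{ij}\le\lambda'\eps_\pi(H)$. Taking the maximum over $(i,j)$ yields $\eps_\pi(H')\le\lambda'\eps_\pi(H)$. For the other direction, pick $(i_0,j_0)$ attaining $\eps_\pi(H)$. Then $\eps_\pi(H')\ge H'_{i_0j_0}\ge\lambda H_{i_0j_0}=\lambda\eps_\pi(H)$. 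Both steps use only that the scalars $\lambda,\lambda'$ are positive, so multiplication by them commutes with $\max$ and $\min$.

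\textbf{Minimum over $\pi$.} Take the minimum over $\pi\in\Sn$ in the two-sided inequality $\lambda\eps_\pi(H)\le\eps_\pi(H')\le\lambda'\eps_\pi(H)$. Because $\lambda,\lambda'>0$, we have $\min_\pi\lambda\eps_\pi(H)=\lambda\eps(H)$ and $\min_\pi\lambda'\eps_\pi(H)=\lambda'\eps(H)$. A pointwise inequality between functions of $\pi$ is preserved under $\min$, which gives the claim.

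There is no real obstacle. The only point to be careful about is that the minimizing permutation for $H$ may differ from the one for $H'$. This is why the argument passes through the pointwise-in-$\pi$ inequality rather than fixing a single permutation.
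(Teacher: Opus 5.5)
Your proof is correct and follows the paper's argument: the bound for $\eps_\pi$ comes from the maximum over the fixed off-$\pi$ cells being monotone and positively homogeneous, and the bound for $\eps=\min_\pi\eps_\pi$ comes from the minimum keeping both properties. You write out the $\max$ step explicitly with an attaining cell, which is only a more detailed version of the same step.
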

\begin{proof}
\begin{steps}
\step{1}{1} The claim holds for $\eps_\pi$.
\pf $\eps_\pi$ is the maximum of the entries on a fixed set of cells, and a maximum is monotone and positively homogeneous.
\qedstep{1}{2}
\pf $\eps=\min_\pi\eps_\pi$ is a minimum of finitely many functions with the two properties of \sref{1}{1}, and a minimum keeps both properties.
\end{steps}
\end{proof}

\begin{lemma}[Dini comparison]\label{lem:dini}
Let $\psi:[0,T)\to\R$ be continuous, let $c\in\R$, and assume that
\[
D^+\psi(t):=\limsup_{h\downarrow0}\frac{\psi(t+h)-\psi(t)}{h}\ \le\ c\qquad\text{for every }t\in[0,T).
\]
Then $\psi(t)\le\psi(0)+ct$ for all $t\in[0,T)$.
\end{lemma}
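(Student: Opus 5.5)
We argue by contradiction, using the standard reduction for upper Dini derivatives. Fix $\eta>0$ and set $\phi(t):=\psi(t)-\psi(0)-(c+\eta)t$. It suffices to show $\phi(t)\le0$ on $[0,T)$ for every $\eta>0$; letting $\eta\downarrow0$ then gives the claim. The function $\phi$ is continuous, $\phi(0)=0$, and $D^+\phi(t)=D^+\psi(t)-(c+\eta)\le-\eta<0$ for every $t$.

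Suppose $\phi(t_1)>0$ for some $t_1\in(0,T)$. Let $s:=\sup\{t\in[0,t_1]:\phi(t)\le0\}$. This set contains $0$ and is closed by continuity, so $\phi(s)\le0$ and $s<t_1$. Moreover $\phi>0$ on $(s,t_1]$, and continuity forces $\phi(s)=0$. Then for every $h\in(0,t_1-s)$ we have $(\phi(s+h)-\phi(s))/h>0$. Hence $D^+\phi(s)\ge0$, which contradicts $D^+\phi(s)\le-\eta$.

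The only delicate point is the choice of the extremal point $s$: we need the last zero before $t_1$, not the first time $\phi$ becomes positive, so that the forward difference quotients at $s$ are all nonnegative. The perturbation by $\eta$ is what turns the weak inequality $D^+\psi\le c$ into a strict one, which the contradiction needs. Without it, a limsup equal to zero would not contradict positive quotients.
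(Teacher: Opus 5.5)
Your proof is correct and follows essentially the same route as the paper: perturb the slope to $c+\eta$, take the last time $s$ before a violation at which the perturbed function is at most its base value, and contradict the strict bound $D^+\le-\eta$ with the positive forward difference quotients at $s$. The only cosmetic difference is that the paper shows $\psi(t)-(c+\nu)t$ is nonincreasing between arbitrary $t_0<t_1$, whereas you compare only with $t_0=0$, which is all the statement needs.
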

\begin{proof}
\begin{steps}
\step{1}{1} \Suffices For every $\nu>0$, the function $\psi_\nu(t):=\psi(t)-(c+\nu)t$ is nonincreasing.
\pf Then $\psi(t)-(c+\nu)t\le\psi(0)$ for every $\nu>0$. Let $\nu\downarrow0$.
\step{1}{2} \Assume $\nu>0$, $0\le t_0<t_1<T$, and $\psi_\nu(t_1)>\psi_\nu(t_0)$. \Prove \textsc{false}.
\begin{steps}
\step{2}{1} \Let $s:=\sup\{t\in[t_0,t_1]:\ \psi_\nu(t)\le \psi_\nu(t_0)\}$. Then $s<t_1$, $\psi_\nu(s)=\psi_\nu(t_0)$, and $\psi_\nu(t)>\psi_\nu(t_0)$ for all $t\in(s,t_1]$.
\pf The set contains $t_0$, so $s$ exists. The set is closed because $\psi_\nu$ is continuous, so $\psi_\nu(s)\le \psi_\nu(t_0)$. By the definition of $s$, $\psi_\nu(t)>\psi_\nu(t_0)$ for $t\in(s,t_1]$. By continuity, $\psi_\nu(s)\ge \psi_\nu(t_0)$. Thus $\psi_\nu(s)=\psi_\nu(t_0)<\psi_\nu(t_1)$, and $s<t_1$.
\step{2}{2} $D^+\psi_\nu(s)\ge0$.
\pf By \sref{2}{1}, $\big(\psi_\nu(s+h)-\psi_\nu(s)\big)/h>0$ for $0<h\le t_1-s$.
\qedstep{2}{3}
\pf $D^+\psi_\nu=D^+\psi-(c+\nu)\le-\nu<0$ everywhere. This contradicts \sref{2}{2}.
\end{steps}
\qedstep{1}{3}
\pf By \sref{1}{2}, $\psi_\nu(t_1)\le \psi_\nu(t_0)$ whenever $t_0<t_1$. Thus \sref{1}{1} applies.
\end{steps}
\end{proof}

%% file: app_proofs.tex
\section{Proofs}\label{app:proofs}

\subsection{Proofs for Section~\ref{sec:signal}}

\begin{proof}[Proof of Theorem~\ref{thm:two-channel}]
\begin{steps}
\step{1}{1} $\bar x_{l+1}\T=\bar x_l\T+\bar h_l\,u_l$.
\pf Multiply \eqref{eq:hc} on the left by $\frac1n\one\T$. Because $\one\T H_l=\one\T$, we get $\frac1n\one\T H_lx_l=\bar x_l\T$ and $\frac1n\one\T(\hpost_l)\T u_l=\bar h_l\,u_l$.
\step{1}{2} $x^\perp_{l+1}=H_lx^\perp_l+Q(\hpost_l)\T u_l$.
\pf Multiply \eqref{eq:hc} on the left by $Q$. By (F3), $QH_l=H_lQ$, so $QH_lx_l=H_lx^\perp_l$.
\step{1}{3} The two recursions \eqref{eq:two-channel} determine \eqref{eq:hc}.
\pf They determine $x_{l+1}=\one\bar x_{l+1}\T+x^\perp_{l+1}$ by \eqref{eq:channels}.
\step{1}{4} For $l<L$,
\begin{equation}\label{eq:two-channel-unrolled}
\bar x_L\T=\bar x_l\T+\sum_{i=l}^{L-1}\bar h_i\,u_i,\qquad
x^{\perp}_L=H_{L:l}\,x^{\perp}_l+\sum_{i=l}^{L-1}H_{L:i+1}\,Q(\hpost_i)\T u_i .
\end{equation}
\pf Induction on $L$, with \sref{1}{1} and \sref{1}{2}.
\step{1}{5} The residual-path Jacobian is $x\mapsto H_{L:l}x$, its adjoint is $y\mapsto H_{L:l}\T y$, and both maps fix $\one$.
\pf Unroll \eqref{eq:hc}:
\[
x_L=H_{L:l}\,x_l+\sum_{i=l}^{L-1}H_{L:i+1}(\hpost_i)\T u_i .
\]
With the block outputs $u_i$ held fixed, this map is affine in $x_l$, and its linear part is $x\mapsto H_{L:l}x$. Its adjoint for the Frobenius inner product is $y\mapsto H_{L:l}\T y$. By (F2), $H_{L:l}\in\Bn$, so $H_{L:l}\one=\one$ and $H_{L:l}\T\one=\one$.
\qedstep{1}{6}
\pf Steps \sref{1}{1}--\sref{1}{3} give the equivalence, and \sref{1}{5} gives the statement about the Jacobian.
\end{steps}
\end{proof}

\begin{proof}[Proof of Theorem~\ref{thm:contraction}]
\begin{steps}
\step{1}{1} For every $H\in\Bn$, $\sigma_2(H)\le1-n\delta(H)$.
\begin{steps}
\step{2}{1} \Let $\delta:=\delta(H)$. \Case $n\delta=1$.
\pf Every entry is at least $1/n$, and every row sums to one. Thus $H=J/n$, and $\sigma_2(H)=\|JQ\|_2/n=0=1-n\delta$.
\step{2}{2} \Case $n\delta<1$. \Let $R:=(H-\delta J)/(1-n\delta)$. Then $R\in\Bn$.
\pf $R\ge0$ because $H_{ij}\ge\delta$. Each row sum and each column sum of $R$ is $(1-n\delta)/(1-n\delta)=1$.
\step{2}{3} \Case $n\delta<1$. Then $\|Hx\|_2\le(1-n\delta)\|x\|_2$ for every $x\perp\one$.
\pf For $x\perp\one$ we have $Jx=0$. Thus $Hx=(1-n\delta)Rx+\delta Jx=(1-n\delta)Rx$, and $\|R\|_2\le1$ by \sref{2}{2} and (F1).
\qedstep{2}{4}
\pf The two cases cover all $H$, because $n\delta\le1$. In the second case, \sref{2}{3} and $\sigma_2(H)=\|HQ\|_2$ give the claim.
\end{steps}
\step{1}{2} Equality holds for $H=(1-t)P_\pi+tJ/n$, $t\in[0,1]$.
\pf The entries of $H$ off $\pi$ are $t/n$, and the entries on $\pi$ are $1-t+t/n\ge t/n$. Thus $\delta(H)=t/n$. For $x\perp\one$, $Hx=(1-t)P_\pi x$ and $\|P_\pi x\|_2=\|x\|_2$. Thus $\sigma_2(H)=1-t=1-n\delta(H)$.
\step{1}{3} Part (b) holds.
\pf By (F3), each $H_i$ maps $\one^\perp$ into $\one^\perp$, and on $\one^\perp$ it has norm $\sigma_2(H_i)$. Thus $\|H_{L:l}x\|_2\le\prod_i\sigma_2(H_i)\|x\|_2$ for $x\perp\one$. Then \sref{1}{1} and $1-s\le e^{-s}$ give the second inequality.
\step{1}{4} Part (c) holds.
\pf By (F2), $H_{L:l}\in\Bn$, so $H_{L:l}J=J$ and $H_{L:l}-J/n=H_{L:l}Q$. For unit vectors $e_j,e_k$,
\[
\big|(H_{L:l})_{jk}-\tfrac1n\big|=|e_j\T H_{L:l}Qe_k|\le\|H_{L:l}Q\|_2=\sigma_2(H_{L:l}).
\]
The limit statement then follows from (b).
\step{1}{5} \Assume $H=\SK(\exp W)$ with $\|W\|_\infty\le M$, for the exact projection or for a truncated iteration that ends with a row normalization. \Prove $\delta(H)\ge e^{-4M}/n$.
\begin{steps}
\step{2}{1} \Let $K:=\exp W$. Then $H=\diag(\mu)K\diag(\nu)$ with $\mu,\nu>0$, and $\nu_j=1/\sum_i\mu'_iK_{ij}$ for some $\mu'>0$.
\pf For the exact projection, the column sums of $H$ are one, so this holds with $\mu'=\mu$. For a truncated iteration, the last column normalization sets $\nu$, and $\mu'$ is the row scaling at that moment.
\step{2}{2} $\nu_j/\nu_k\ge e^{-2M}$ for all $j,k$.
\pf Let $s:=\sum_i\mu'_i$. Because $K_{ij}\in[e^{-M},e^M]$, we have $\sum_i\mu'_iK_{ij}\in[e^{-M}s,e^{M}s]$, so $\nu_j\in[e^{-M}/s,e^{M}/s]$ by \sref{2}{1}.
\step{2}{3} $H_{ij}/H_{ik}\ge e^{-4M}$ for all $i,j,k$.
\pf $H_{ij}/H_{ik}=(K_{ij}/K_{ik})(\nu_j/\nu_k)\ge e^{-2M}e^{-2M}$ by \sref{2}{2}.
\qedstep{2}{4}
\pf Row $i$ of $H$ sums to one, so $H_{ik}\ge1/n$ for some $k$. By \sref{2}{3}, $H_{ij}\ge e^{-4M}/n$ for every $j$.
\end{steps}
\step{1}{6} For the exact projection, $\sigma_2(H)\le1-e^{-4M}$.
\pf For the exact projection $H\in\Bn$. Apply \sref{1}{1} and \sref{1}{5}.
\qedstep{1}{7}
\pf Steps \sref{1}{1} and \sref{1}{2} give (a), \sref{1}{3} gives (b), \sref{1}{4} gives (c), and \sref{1}{5} and \sref{1}{6} give (d).
\end{steps}
\end{proof}

\begin{proof}[Proof of Corollary~\ref{cor:fading}]
\begin{steps}
\step{1}{1} $\|H_{L:l}x^\perp_l\|_F\le\rho_{L:l}\|x^\perp_l\|_F$.
\pf Every column of $x^\perp_l$ is in $\one^\perp$. Apply Theorem~\ref{thm:contraction}(b) to each column.
\step{1}{2} $\|H_{L:i+1}Q(\hpost_i)\T u_i\|_F\le\rho_{L:i+1}\|Q(\hpost_i)\T\|_2\|u_i\|_2$.
\pf The matrix $Q(\hpost_i)\T u_i$ has rank one, and its column space is spanned by the vector $Q(\hpost_i)\T\in\one^\perp$. Thus $H_{L:i+1}Q(\hpost_i)\T u_i=(H_{L:i+1}Q(\hpost_i)\T)\,u_i$, and its Frobenius norm is $\|H_{L:i+1}Q(\hpost_i)\T\|_2\|u_i\|_2$. Apply Theorem~\ref{thm:contraction}(b) to the vector $Q(\hpost_i)\T$.
\step{1}{3} The bound \eqref{eq:fading} holds.
\pf Apply the triangle inequality to \eqref{eq:two-channel-unrolled}, and use \sref{1}{1} and \sref{1}{2}.
\step{1}{4} $\|QH_{L:l}\T y\|_F\le\rho_{L:l}\|Qy\|_F$.
\pf By (F3), $QH_{L:l}\T=H_{L:l}\T Q$. Each $H_i\T$ is in $\Bn$ and has $\sigma_2(H_i\T)=\sigma_2(H_i)$. Apply Theorem~\ref{thm:contraction}(b) to $H_{L:l}\T=H_l\T\cdots H_{L-1}\T$ and the columns of $Qy$.
\step{1}{5} If $\rho\in[0,1)$ and $\tau=1/(1-\rho)$, then $\rho^k\le e^{-k/\tau}$.
\pf $\log\rho\le\rho-1=-1/\tau$.
\qedstep{1}{6}
\pf By \sref{1}{3}--\sref{1}{5}.
\end{steps}
\end{proof}

\begin{proof}[Proof of Proposition~\ref{prop:rigidity}]
\begin{steps}
\step{1}{1} Every $A\in\Cn$ satisfies $A\T\one=\one$, and $A$ maps $\one^\perp$ into $\one^\perp$.
\pf By the Cauchy--Schwarz inequality and $\|A\T\|_2=\|A\|_2\le1$,
\[
n=\one\T A\one=\ip{A\T\one}{\one}\le\|A\T\one\|_2\,\|\one\|_2\le\|\one\|_2^2=n .
\]
Thus both inequalities are equalities. Equality in the Cauchy--Schwarz inequality, together with $\|A\T\one\|_2=\|\one\|_2$, gives $A\T\one=\one$. For $x\perp\one$, $\ip{Ax}{\one}=\ip{x}{A\T\one}=0$.
\step{1}{2} $\Cn=\{J/n+UBU\T:\ B\in\R^{(n-1)\times(n-1)},\ \|B\|_2\le1\}$, where $U$ is an orthonormal basis of $\one^\perp$.
\pf Let $A\in\Cn$. By \sref{1}{1}, $A$ preserves $\R\one$ and $\one^\perp$, so $A=J/n+U(U\T AU)U\T$, and $\|U\T AU\|_2\le\|A\|_2\le1$. Conversely, $J/n+UBU\T$ fixes $\one$, and it is block diagonal in the basis $[\one/\sqrt n,\ U]$ with blocks $1$ and $B$. Thus its norm is at most one.
\step{1}{3} For $Z\in\R^{(n-1)\times(n-1)}$, the matrix $J/n+UZU\T$ is in $\On$ if and only if $Z\in O(n-1)$.
\pf Because $U\T\one=0$, we have $(J/n+UZU\T)\T(J/n+UZU\T)=J/n+UZ\T ZU\T$. This matrix equals $I=J/n+UU\T$ if and only if $Z\T Z=I$.
\step{1}{4} An element $A\in\Cn$ is collapse-free if and only if $A\in\On$.
\pf Write $A=J/n+UBU\T$ as in \sref{1}{2}. For $x=Uy\perp\one$, $\|Ax\|_2=\|By\|_2$ and $\|x\|_2=\|y\|_2$. Thus $A$ is collapse-free if and only if $B$ is orthogonal, that is, by \sref{1}{3}, if and only if $A\in\On$.
\step{1}{5} $\Cn=\conv\On$.
\pf The set $\Cn$ is convex, and it contains $\On$. Conversely, let $A=J/n+UBU\T\in\Cn$. By Lemma~\ref{aux:convO}, $B=\sum_s\lambda_sZ_s$ with $Z_s\in O(n-1)$, $\lambda_s\ge0$, and $\sum_s\lambda_s=1$. Thus $A=\sum_s\lambda_s(J/n+UZ_sU\T)$, and each term is in $\On$ by \sref{1}{3}.
\step{1}{6} $\Bn=\Cn\cap\R^{n\times n}_{\ge0}$.
\pf By (F1), $\Bn\subseteq\Cn$. Conversely, if $A\ge0$, $A\one=\one$, and $\|A\|_2\le1$, then $A\T\one=\one$ by \sref{1}{1}, so $A\in\Bn$.
\step{1}{7} $\Bn\cap\On=\Sn$.
\pf Let $A\in\Bn\cap\On$. Then $\|A\|_F^2=\tr(A\T A)=n$. Also $0\le A_{ij}\le1$, so $\sum_{ij}A_{ij}^2\le\sum_{ij}A_{ij}=n$, with equality if and only if every entry is $0$ or $1$. A doubly stochastic matrix with entries in $\{0,1\}$ is a permutation matrix. Conversely, permutation matrices are orthogonal and doubly stochastic.
\step{1}{8} For $H\in\Bn$, $\sigma_2(H)=1$ if and only if $H$ is partly decomposable.
\begin{steps}
\step{2}{1} If $H$ is partly decomposable, then $\sigma_2(H)=1$.
\pf Let $\mathcal R,\mathcal S$ be as in the definition, and let $k:=|\mathcal R|=|\mathcal S|$. Let $x:=\mathbf 1_{\mathcal S}/k-\mathbf 1_{\mathcal S^c}/(n-k)$, so $x\perp\one$. For $i\in\mathcal R$, row $i$ is supported in $\mathcal S$ and sums to one, so $(Hx)_i=1/k$. For $i\notin\mathcal R$, row $i$ is supported in $\mathcal S^c$, so $(Hx)_i=-1/(n-k)$. Thus
\[
\|Hx\|_2^2=\frac{k}{k^2}+\frac{n-k}{(n-k)^2}=\|x\|_2^2 ,
\]
so $\sigma_2(H)\ge1$, and $\sigma_2(H)\le1$ by (F1).
\step{2}{2} If $\sigma_2(H)=1$, then $H$ is partly decomposable.
\begin{steps}
\step{3}{1} There is $x\perp\one$, $x\ne0$, with $H\T Hx=x$.
\pf By compactness, there is a unit vector $x\perp\one$ with $\|Hx\|_2=1$. By (F1), $I-H\T H$ is positive semidefinite, and $\ip{(I-H\T H)x}{x}=0$. Thus $(I-H\T H)x=0$.
\step{3}{2} The matrix $N:=H\T H$ is reducible.
\pf $N$ is symmetric, nonnegative, and doubly stochastic, so its spectral radius is $1$, with eigenvector $\one$. By \sref{3}{1}, the eigenvalue $1$ is not simple. By Theorem~\ref{aux:pf}(a), $N$ is not irreducible.
\step{3}{3} There is a nonempty proper subset $\mathcal S\subset[n]$ such that no row of $H$ has entries in both $\mathcal S$ and $\mathcal S^c$.
\pf By \sref{3}{2} and Theorem~\ref{aux:pf}(b), there is a nonempty proper $\mathcal S$ with $N_{jk}=\sum_iH_{ij}H_{ik}=0$ for $j\in\mathcal S$ and $k\notin\mathcal S$. All terms are nonnegative, so $H_{ij}H_{ik}=0$ for every $i$.
\qedstep{3}{4}
\pf Each row of $H$ sums to one, so by \sref{3}{3} it is supported either in $\mathcal S$ or in $\mathcal S^c$. Let $\mathcal R$ be the set of rows supported in $\mathcal S$. If we sum the columns in $\mathcal S$, we get
\[
|\mathcal S|=\sum_{j\in\mathcal S}\sum_iH_{ij}=\sum_{i\in\mathcal R}\sum_{j\in\mathcal S}H_{ij}=|\mathcal R| .
\]
Thus $0<|\mathcal R|=|\mathcal S|<n$. By construction $H_{ij}=0$ if $i\in\mathcal R$ and $j\notin\mathcal S$, or if $i\notin\mathcal R$ and $j\in\mathcal S$.
\end{steps}
\qedstep{2}{3}
\pf Steps \sref{2}{1} and \sref{2}{2} give the equivalence. A positive matrix is not partly decomposable, so $\sigma_2<1$ on $\Bno$.
\end{steps}
\step{1}{9} \Assume $H\in\Bn$, $\sigma_n(H)\ge1-\vartheta$, and $4n\vartheta<1$. \Prove $\eps_\pi(H)\le4n\vartheta$ for some $\pi$.
\begin{steps}
\step{2}{1} $\sum_{ij}H_{ij}(1-H_{ij})\le2n\vartheta$.
\pf We have $\|H\|_F^2=\sum_k\sigma_k(H)^2\ge n(1-\vartheta)^2\ge n(1-2\vartheta)$. Also $\|H\|_F^2=\sum_{ij}H_{ij}-\sum_{ij}H_{ij}(1-H_{ij})=n-\sum_{ij}H_{ij}(1-H_{ij})$.
\step{2}{2} The entries with $H_{ij}\le\frac12$ have a sum of at most $4n\vartheta<1$.
\pf These entries satisfy $H_{ij}(1-H_{ij})\ge H_{ij}/2$. Apply \sref{2}{1}.
\step{2}{3} Each row and each column of $H$ has exactly one entry larger than $\frac12$.
\pf If a row has no such entry, then all its entries are counted in \sref{2}{2}, and their sum is $1>4n\vartheta$. This is a contradiction. Two entries larger than $\frac12$ cannot be in one row, because the row sums to one. The same arguments apply to columns.
\step{2}{4} \Let $\pi(i)$ be the column of the large entry of row $i$. Then $\pi$ is a permutation.
\pf If $\pi(i)=\pi(i')$ with $i\ne i'$, then column $\pi(i)$ has two entries larger than $\frac12$. This contradicts \sref{2}{3}.
\qedstep{2}{5}
\pf By \sref{2}{3}, every entry off $\pi$ is at most $\frac12$, so \sref{2}{2} counts it. Thus $\eps_\pi(H)\le4n\vartheta$.
\end{steps}
\qedstep{1}{10}
\pf Steps \sref{1}{1}, \sref{1}{4}, \sref{1}{5}, and \sref{1}{6} give (a); \sref{1}{7} gives (b); \sref{1}{8} gives (c); and \sref{1}{9} gives (d).
\end{steps}
\end{proof}

\subsection{Proofs for Section~\ref{sec:geometry}}

\begin{proof}[Proof of Lemma~\ref{lem:chart}]
\begin{steps}
\step{1}{1} $\Phi(W)=\Phi(W')$ if and only if $W-W'\in\gauge$.
\pf If $W'=W+a\one\T+\one b\T$, then $\exp W'=\diag(e^a)\exp(W)\diag(e^b)$. By the uniqueness in Theorem~\ref{aux:sinkhorn}(a), this matrix has the same doubly stochastic scaling as $\exp W$. Conversely, if $\Phi(W)=\Phi(W')$, then $\diag(e^\alpha)e^{W}\diag(e^\beta)=\diag(e^{\alpha'})e^{W'}\diag(e^{\beta'})$. The entrywise logarithm gives $W-W'=(\alpha'-\alpha)\one\T+\one(\beta'-\beta)\T\in\gauge$.
\step{1}{2} $\Phi$ is surjective onto $\Bno$.
\pf For $H\in\Bno$, the matrix $\exp(\log H)=H$ is doubly stochastic, so $\Phi(\log H)=H$.
\step{1}{3} $\Phi$ is smooth, and we can choose the scaling vectors $\alpha(W),\beta(W)$ smoothly.
\begin{steps}
\step{2}{1} \Let $M(W,\alpha,\beta):=\diag(e^\alpha)e^{W}\diag(e^\beta)$ and $F(W,\alpha,\beta):=(M\one-\one,\ M\T\one-\one)$. At a zero of $F$, the Jacobian of $F$ with respect to $(\alpha,\beta)$ is
$\begin{psmallmatrix}I&H\\H\T&I\end{psmallmatrix}$, where $H=M$.
\pf $\partial(M\one)_i/\partial\alpha_k=(M\one)_i$ if $k=i$ and $0$ otherwise, and $\partial(M\one)_i/\partial\beta_j=M_{ij}$. The column sums are similar. At a zero of $F$, $M\one=M\T\one=\one$.
\step{2}{2} The kernel of this Jacobian is the line $\{(s\one,-s\one):s\in\R\}$.
\pf A pair $(a,b)$ in the kernel satisfies $a=-Hb$ and $b=-H\T a=H\T Hb$. By Proposition~\ref{prop:rigidity}(c), $\sigma_2(H)<1$, so the eigenvalue $1$ of $H\T H$ is simple, with eigenvector $\one$. Thus $b=s\one$ and $a=-s\one$.
\step{2}{3} On the slice $\{\sum_j\beta_j=0\}$, the Jacobian is a bijection onto the hyperplane $E:=\{(r,c)\in\R^{2n}:\ \sum_ir_i=\sum_jc_j\}$.
\pf The total of the row sums of $M$ equals the total of its column sums, so $F$ takes values in $E$, and the Jacobian maps into $E$. The kernel line of \sref{2}{2} meets the slice only in $0$. Thus the Jacobian is injective on the slice, which has dimension $2n-1=\dim E$.
\qedstep{2}{4}
\pf Apply the implicit function theorem to $F:\R^{n\times n}\times(\R^n\times\{\sum_j\beta_j=0\})\to E$, with \sref{2}{3}. This gives smooth $\alpha(W),\beta(W)$, and thus $\Phi(W)=M(W,\alpha(W),\beta(W))$ is smooth.
\end{steps}
\step{1}{4} The Frobenius-orthogonal projection onto $\Tsp$ is $\xi\mapsto Q\xi Q$.
\pf The matrix $Q\xi Q$ has zero row sums and zero column sums. Also, $\xi-Q\xi Q=\frac1nJ\xi+\frac1n\xi J-\frac1{n^2}J\xi J$, and each of the three terms is in $\gauge=\Tsp^\perp$.
\step{1}{5} $\Phi|_\Tsp$ is a diffeomorphism onto $\Bno$, with inverse $H\mapsto Q(\log H)Q$.
\pf For $W\in\Tsp$, $\log\Phi(W)=W+\alpha\one\T+\one\beta\T$. By \sref{1}{4}, $Q(\log\Phi(W))Q=QWQ=W$. For $H\in\Bno$, $\log H-Q(\log H)Q\in\gauge$ by \sref{1}{4}. Thus \sref{1}{1} and \sref{1}{2} give $\Phi(Q(\log H)Q)=\Phi(\log H)=H$. Both maps are smooth by \sref{1}{3}.
\step{1}{6} If $W_k\in\Tsp$ and $\Phi(W_k)\to P_\pi$, then $\|W_k\|_\infty\to\infty$.
\pf Assume the opposite. Then a subsequence satisfies $\|W_k\|_\infty\le M$ for some $M$, and Theorem~\ref{thm:contraction}(d) gives $\delta(\Phi(W_k))\ge e^{-4M}/n$ on this subsequence. But $P_\pi$ has zero entries because $n\ge2$, so $\delta(\Phi(W_k))\to0$. This is a contradiction.
\qedstep{1}{7}
\pf By \sref{1}{1}--\sref{1}{3}, \sref{1}{5}, and \sref{1}{6}.
\end{steps}
\end{proof}

\begin{proof}[Proof of Lemma~\ref{lem:differential}]
\begin{steps}
\step{1}{1} $\dd\Phi_W(V)=\Pfr_H(H\odot V)$.
\begin{steps}
\step{2}{1} \Let $H(t):=\Phi(W+tV)$. Then $\dot H(0)\in H\odot V+H\odot\gauge$.
\pf By Lemma~\ref{lem:chart} (step \xsref{1}{3} of its proof), $\log H(t)=W+tV+\alpha(t)\one\T+\one\beta(t)\T$ with smooth $\alpha,\beta$. Differentiate at $t=0$: $\dot H(0)=H\odot(V+\dot\alpha\one\T+\one\dot\beta\T)$.
\step{2}{2} $\dot H(0)\in\Tsp$.
\pf $H(t)\in\Bn$ for all $t$, so the row sums and the column sums of $\dot H(0)$ are zero.
\qedstep{2}{3}
\pf By Lemma~\ref{lem:gauge-complement}(b), the only point of $H\odot V+H\odot\gauge$ in $\Tsp$ is $\Pfr_H(H\odot V)$.
\end{steps}
\step{1}{2} $g_H(\zeta,H\odot\theta)=\ip{\zeta}{\theta}$ for all $\zeta,\theta\in\R^{n\times n}$.
\pf $\sum_{ij}\zeta_{ij}H_{ij}\theta_{ij}/H_{ij}=\ip{\zeta}{\theta}$.
\step{1}{3} $A_H$ is self-adjoint.
\pf Let $\xi,\zeta\in\R^{n\times n}$, and write $\zeta=\zeta_\Tsp+\zeta_\gauge$ with $\zeta_\Tsp\in\Tsp$ and $\zeta_\gauge\in\gauge$. We use three facts: $A_H\xi\in\Tsp$, step \sref{1}{2}, and the $g_H$-self-adjointness of the $g_H$-orthogonal projection $\Pfr_H$. Then
\begin{align*}
\ip{A_H\xi}{\zeta}&=\ip{A_H\xi}{\zeta_\Tsp}
=g_H\big(\Pfr_H(H\odot\xi),\,H\odot\zeta_\Tsp\big)
=g_H\big(H\odot\xi,\,\Pfr_H(H\odot\zeta_\Tsp)\big)\\
&=\ip{\xi}{\Pfr_H(H\odot\zeta_\Tsp)}
=\ip{\xi}{\Pfr_H(H\odot\zeta)}=\ip{\xi}{A_H\zeta}.
\end{align*}
The fifth equality holds because $H\odot\zeta_\gauge\in\ker\Pfr_H$ (Lemma~\ref{lem:gauge-complement}(c)).
\step{1}{4} $A_H$ is positive semidefinite, with kernel $\gauge$.
\pf By \sref{1}{2} and because $\Pfr_H$ is a $g_H$-orthogonal projection,
\[
\ip{A_H\xi}{\xi}=g_H\big(\Pfr_H(H\odot\xi),H\odot\xi\big)=g_H\big(\Pfr_H(H\odot\xi),\Pfr_H(H\odot\xi)\big)\ge0 .
\]
Equality holds if and only if $H\odot\xi\in\ker\Pfr_H=H\odot\gauge$, that is, $\xi\in\gauge$.
\step{1}{5} The restriction $A_H|_\Tsp$ is a positive definite bijection of $\Tsp$, and $g_H(\xi,\zeta)=\ip{A_H|_\Tsp^{-1}\xi}{\zeta}$ for $\xi,\zeta\in\Tsp$.
\pf $A_H$ maps into $\Tsp$, and $\Tsp\cap\gauge=\{0\}$. Thus $A_H|_\Tsp$ is injective by \sref{1}{4}, hence bijective and positive definite. Let $\xi\in\Tsp$ and $\theta:=A_H|_\Tsp^{-1}\xi$, so $\Pfr_H(H\odot\theta)=\xi$. By Lemma~\ref{lem:gauge-complement}(b), $H\odot\theta=\xi+H\odot s$ with $s\in\gauge$, that is, $\theta=\xi\oslash H+s$. For $\zeta\in\Tsp$, $\ip{\theta}{\zeta}=\ip{\xi\oslash H}{\zeta}+\ip{s}{\zeta}=g_H(\xi,\zeta)+0$.
\step{1}{6} $\nabla_W(f\circ\Phi)(W)=A_H(\nabla f(H))=\gradFR f(H)$.
\pf By the chain rule, \sref{1}{1}, and \sref{1}{3}, $\nabla_W(f\circ\Phi)=(\dd\Phi_W)^*\nabla f(H)=A_H\nabla f(H)$. By \eqref{eq:FRgrad}, $A_H\nabla f(H)=\Pfr_H(H\odot\nabla f(H))=\gradFR f(H)$.
\qedstep{1}{7}
\pf Step \sref{1}{1} gives (a), \sref{1}{3}--\sref{1}{5} give (b), and \sref{1}{6} gives (c).
\end{steps}
\end{proof}

\begin{proof}[Proof of Theorem~\ref{thm:induced}]
\begin{steps}
\step{1}{1} \Let $B:=A_H|_\Tsp$, the positive definite endomorphism of $\Tsp$ of Lemma~\ref{lem:differential}(b). Then $A_H\xi=B(Q\xi Q)$ for every $\xi$.
\pf $A_H$ is zero on $\gauge=\Tsp^\perp$, and $\xi\mapsto Q\xi Q$ is the Frobenius projection onto $\Tsp$ (Lemma~\ref{lem:chart}, step \xsref{1}{4}).
\step{1}{2} Under the logit gradient flow, $\dot H=-A_H^2\nabla f(H)=-A_H\gradFR f(H)$.
\pf By Lemma~\ref{lem:differential}(c), $\dot W=-A_H\nabla f(H)$. By Lemma~\ref{lem:differential}(a), $\dot H=\dd\Phi_W(\dot W)=A_H\dot W$.
\step{1}{3} The logit gradient flow is the gradient flow of $f$ for $\mathcal G$.
\pf The $\mathcal G$-gradient of $f$ at $H$ is the $\xi\in\Tsp$ with $\mathcal G_H(\xi,\zeta)=\ip{\nabla f(H)}{\zeta}$ for all $\zeta\in\Tsp$. Because $\mathcal G_H(\xi,\zeta)=\ip{B^{-2}\xi}{\zeta}$, this condition says $B^{-2}\xi=Q\nabla f(H)Q$, that is, $\xi=B^2Q\nabla f(H)Q=A_H^2\nabla f(H)$ by \sref{1}{1}. Thus \sref{1}{2} says $\dot H=-\xi$. The tensor $B$ depends smoothly on $H$ and is positive definite, so $\mathcal G$ is a Riemannian metric on $\Bno$.
\step{1}{4} Under the straight-through flow, $\dot H=-\gradFR f(H)$, and this flow is the entropic mirror flow.
\pf By Lemma~\ref{lem:differential}(a) and (c), $\dot H=A_H(-\nabla f(H))=-\gradFR f(H)$. The entropic mirror flow on $\Bn$ has the mirror map $\log H$, and it reads $\frac{\dd}{\dd t}\log H=-\nabla f(H)+s$, where $s\in\gauge$ keeps $H$ in $\Bn$. Thus $\dot H=H\odot(-\nabla f(H)+s)\in\Tsp$. By Lemma~\ref{lem:gauge-complement}(b), the only such vector is $-\Pfr_H(H\odot\nabla f(H))=-\gradFR f(H)$.
\step{1}{5} $\SK(D_1KD_2)=\SK(K)$ for all positive diagonal $D_1,D_2$ and all $K>0$.
\pf If $D_1'KD_2'$ is doubly stochastic, then so is $(D_1'D_1^{-1})(D_1KD_2)(D_2^{-1}D_2')$. Apply the uniqueness in Theorem~\ref{aux:sinkhorn}(a).
\step{1}{6} $\Phi(W^{(k)}-\eta\nabla f(H^{(k)}))=\SK(H^{(k)}\odot e^{-\eta\nabla f(H^{(k)})})$.
\pf Write $H^{(k)}=\diag(e^\alpha)\exp(W^{(k)})\diag(e^\beta)$ and $\Gamma:=\nabla f(H^{(k)})$. Then $\exp(W^{(k)}-\eta\Gamma)=\diag(e^{-\alpha})(H^{(k)}\odot e^{-\eta\Gamma})\diag(e^{-\beta})$. Apply \sref{1}{5}.
\step{1}{7} The minimizer of $\eta\ip{\Gamma}{H}+\KL(H\,\|\,H^{(k)})$ over $\Bn$ is $\SK(K)$, where $K:=H^{(k)}\odot e^{-\eta\Gamma}$.
\begin{steps}
\step{2}{1} On $\Bn$, $\eta\ip{\Gamma}{H}+\KL(H\,\|\,H^{(k)})=\KL(H\,\|\,K)+\text{const}$.
\pf Both sides equal $\sum_{ij}H_{ij}\log(H_{ij}/K_{ij})$ plus a constant, because $\sum_{ij}H_{ij}=n$ on $\Bn$.
\step{2}{2} The function $H\mapsto\KL(H\,\|\,K)$ has a unique minimizer on $\Bn$.
\pf It is continuous and strictly convex on the compact convex set $\Bn$.
\step{2}{3} The minimizer is in $\Bno$.
\pf Let $H\in\Bn$ with $H_{ij}=0$. Choose a permutation $\pi'$ with $\pi'(i)=j$, and move along $(1-s)H+sP_{\pi'}$, $s\in[0,1]$. Each cell that is zero in $H$ and one in $P_{\pi'}$ changes the objective by $s\log s+O(s)$. Each other cell changes it by $O(s)$. Thus the objective decreases for small $s>0$, and $H$ is not a minimizer.
\qedstep{2}{4}
\pf At the interior minimizer, the Lagrange conditions for the marginal constraints give $\log H_{ij}-\log K_{ij}=a_i+b_j$, that is, $H=\diag(e^a)K\diag(e^b)$ \citep{csiszar1975idivergence}. This matrix is doubly stochastic, so it is $\SK(K)$.
\end{steps}
\step{1}{8} $R_{H^{(k)}}(-\eta\gradFR f(H^{(k)}))=\SK(K)$.
\pf By Lemma~\ref{lem:fit-normal}, $\gradFR f(H^{(k)})=H^{(k)}\odot(\Gamma-m)$ with $m=a\one\T+\one b\T\in\gauge$. Thus
\[
H^{(k)}\odot\exp\big(-\eta\gradFR f(H^{(k)})\oslash H^{(k)}\big)=H^{(k)}\odot e^{-\eta\Gamma}\odot e^{\eta m}=\diag(e^{\eta a})\,K\,\diag(e^{\eta b}).
\]
Apply \sref{1}{5}.
\step{1}{9} $R$ is a first-order retraction on $\Bno$.
\pf $R_H(t\xi)=\Phi(\log H+t\,\xi\oslash H)$ is smooth in $(H,\xi)$ by Lemma~\ref{lem:chart}, and it equals $\Phi(\log H)=H$ at $t=0$. By Lemma~\ref{lem:differential}(a), its derivative at $t=0$ is $\Pfr_H(H\odot(\xi\oslash H))=\Pfr_H(\xi)=\xi$ for $\xi\in\Tsp$.
\qedstep{1}{10}
\pf Steps \sref{1}{2} and \sref{1}{3} give (a), \sref{1}{4} gives (b), and \sref{1}{6}--\sref{1}{9} give (c).
\end{steps}
\end{proof}

\subsection{Proofs for Section~\ref{sec:viscosity}}

\begin{proof}[Proof of Lemma~\ref{lem:additive}]
\begin{steps}
\step{1}{1} \Let $z:=(a_1,\dots,a_n,b_1,\dots,b_{n-1})\in\R^{2n-1}$, with the gauge $b_n:=0$. \Let $X\in\R^{n^2\times(2n-1)}$ be the design matrix: the row of the cell $(i,j)$ has the entry $1$ in the columns of $a_i$ and $b_j$ (only in the column of $a_i$ if $j=n$), and $0$ elsewhere. \Let $y:=\mathrm{vec}\,C$ and $D:=\diag(\omega)$. Then $(Xz)_{ij}=a_i+b_j$, and the problem is $\min_z\|D^{1/2}(y-Xz)\|_2^2$.
\pf By the definitions. The gauge $b_n=0$ does not change the set of fitted values, because $(a+s\one,b-s\one)$ and $(a,b)$ have the same fitted values.
\step{1}{2} $X$ has full column rank. Thus the minimizer $z^\star=(X\T DX)^{-1}X\T Dy$ is unique, and the fitted values $m=Xz^\star$ are unique.
\pf If $Xz=0$, then $a_i=-b_j$ for all $i,j$, so all $b_j$ are equal. Thus $b=0$ (because $b_n=0$), and $a=0$.
\step{1}{3} $z^\star=\sum_{S}\lambda_S\,z_S$, where $S$ ranges over the $(2n-1)$-element sets of cells with $\det X_S\ne0$, $z_S$ solves $X_Sz=y_S$, and
\begin{equation}\label{eq:jacobi}
\lambda_S=\frac{\omega_S\det(X_S)^2}{\sum_{S'}\omega_{S'}\det(X_{S'})^2}\ge0,\qquad\sum_S\lambda_S=1 .
\end{equation}
\begin{steps}
\step{2}{1} $z^\star_r=\det(X\T DX^{(r)})/\det(X\T DX)$, where $X^{(r)}$ is $X$ with its $r$-th column replaced by $y$.
\pf By Cramer's rule (Theorem~\ref{aux:cb}), $z^\star_r=\det(N_r)/\det(X\T DX)$, where $N_r$ is $X\T DX$ with its $r$-th column replaced by $X\T Dy$. This column is the $r$-th column of $X\T DX^{(r)}$, and the other columns agree, so $N_r=X\T DX^{(r)}$.
\step{2}{2} $\det(X\T DX^{(r)})=\sum_S\omega_S\det(X_S)\det(X^{(r)}_S)$ and $\det(X\T DX)=\sum_S\omega_S\det(X_S)^2$.
\pf The Cauchy--Binet formula (Theorem~\ref{aux:cb}).
\qedstep{2}{3}
\pf For $S$ with $\det X_S\ne0$, Cramer's rule for $X_Sz=y_S$ gives $(z_S)_r=\det(X^{(r)}_S)/\det(X_S)$. Insert this into \sref{2}{2}, and divide as in \sref{2}{1}. (A representation of this kind is classical; see \citet{bental1990geometric}.)
\end{steps}
\step{1}{4} $\det X_S\ne0$ if and only if the cells of $S$ form a spanning tree of the complete bipartite graph $K_{n,n}$.
\pf View each cell $(i,j)$ as the edge between the row vertex $i$ and the column vertex $j$ of $K_{n,n}$. This graph is connected and has $2n$ vertices. If we multiply the $b$-columns of $X$ by $-1$, then $X\T$ becomes the signed incidence matrix of $K_{n,n}$ (edges oriented from rows to columns), without the row of the vertex $b_n$. This change multiplies each $\det X_S$ by $\pm1$. Apply Theorem~\ref{aux:tree}.
\step{1}{5} For every spanning tree $S$ and every cell $(i,j)$, $|(Xz_S)_{ij}|\le(2n-1)\|C\|_\infty$.
\pf The unique path in $S$ from the row vertex $i$ to the column vertex $j$ alternates between rows and columns: $i=i_0,\ j_1,\ i_1,\ j_2,\ \dots,\ i_k,\ j_{k+1}=j$. Its edges are the cells $(i_r,j_{r+1})$ for $0\le r\le k$ and $(i_r,j_r)$ for $1\le r\le k$, and all of them are in $S$. Because $z_S$ interpolates $C$ on $S$, a telescoping sum gives
\begin{equation}\label{eq:path-sum}
a_i+b_j=\sum_{r=0}^{k}\big(a_{i_r}+b_{j_{r+1}}\big)-\sum_{r=1}^{k}\big(a_{i_r}+b_{j_r}\big)
=\sum_{r=0}^{k}C_{i_rj_{r+1}}-\sum_{r=1}^{k}C_{i_rj_r}.
\end{equation}
This is a signed sum of $2k+1$ entries of $C$. A path in a tree with $2n$ vertices has at most $2n-1$ edges, so $2k+1\le2n-1$.
\step{1}{6} $\|m\|_\infty\le(2n-1)\|C\|_\infty$.
\pf By \sref{1}{3} and \sref{1}{4}, $m=Xz^\star=\sum_S\lambda_SXz_S$ is a convex combination over spanning trees. Apply \sref{1}{5}.
\step{1}{7} The constant $2n-1$ cannot be made smaller.
\pf Let $S_0$ be the Hamiltonian path $r_1,c_1,r_2,c_2,\dots,r_n,c_n$ through the row vertices $r_k$ and the column vertices $c_k$. Its cells are $(r_k,c_k)$ for $1\le k\le n$ and $(r_k,c_{k-1})$ for $2\le k\le n$; it is a spanning tree with $2n-1$ cells. Let $C=+1$ on the cells $(r_k,c_k)$, $C=-1$ on the cells $(r_k,c_{k-1})$, and $C=0$ elsewhere, so $\|C\|_\infty=1$. Let $\omega=1$ on $S_0$ and $\omega=\nu$ elsewhere. Every spanning tree other than $S_0$ has a cell of weight $\nu$, and $\det(X_S)^2=1$ for every tree. Thus $\lambda_{S_0}\to1$ in \eqref{eq:jacobi} as $\nu\to0$, and $m\to Xz_{S_0}$. By \eqref{eq:path-sum}, the fitted value of $z_{S_0}$ at the cell $(r_1,c_n)$ is the alternating sum along the whole path, $n\cdot(+1)-(n-1)\cdot(-1)=2n-1$.
\qedstep{1}{8}
\pf Step \sref{1}{2} gives existence and uniqueness, \sref{1}{6} gives \eqref{eq:additive}, and \sref{1}{7} gives sharpness.
\end{steps}
\end{proof}

\begin{proof}[Proof of Lemma~\ref{lem:small-grad}]
\begin{steps}
\step{1}{1} $\gradFR f(H)=H\odot(\nabla f(H)-m)$ with $\|m\|_\infty\le(2n-1)G$.
\pf By \eqref{eq:FRgrad} and Lemma~\ref{lem:fit-normal}, $m$ is the weighted additive fit of $\nabla f(H)$ with weights $H$. Apply Lemma~\ref{lem:additive} with $\omega=H$ and $C=\nabla f(H)$.
\step{1}{2} $|\gradFR f(H)_{ij}|\le2nG\,H_{ij}$ for all $(i,j)$.
\pf By \sref{1}{1}, $|\gradFR f(H)_{ij}|\le H_{ij}(|\partial_{ij}f(H)|+|m_{ij}|)\le H_{ij}(G+(2n-1)G)$.
\step{1}{3} For every $\pi$, $\|\gradFR f(H)\|_\infty\le2n(n-1)G\,\eps_\pi(H)$.
\begin{steps}
\step{2}{1} The bound holds for each cell off $\pi$.
\pf For $j\ne\pi(i)$, $H_{ij}\le\eps_\pi(H)$. Apply \sref{1}{2}, and use $1\le n-1$.
\step{2}{2} The bound holds for each cell on $\pi$.
\pf $\gradFR f(H)\in\Tsp$ has zero row sums. By \sref{1}{2},
\[
\big|\gradFR f(H)_{i\pi(i)}\big|=\Big|\sum_{j\ne\pi(i)}\gradFR f(H)_{ij}\Big|\le2nG\sum_{j\ne\pi(i)}H_{ij}\le2n(n-1)G\,\eps_\pi(H).
\]
\qedstep{2}{3}
\pf By \sref{2}{1} and \sref{2}{2}.
\end{steps}
\step{1}{4} The bound \eqref{eq:flat} holds.
\pf Minimize \sref{1}{3} over $\pi$, and apply the result at $H=\Phi(W)$. By Lemma~\ref{lem:differential}(c), the left-hand side of \eqref{eq:flat} is $\|\gradFR f(\Phi(W))\|_\infty$.
\qedstep{1}{5}
\pf By \sref{1}{2}--\sref{1}{4}.
\end{steps}
\end{proof}

\begin{proof}[Proof of Theorem~\ref{thm:viscosity}]
\begin{steps}
\step{1}{1} $\frac{\dd}{\dd t}\log H_{ij}=\dot W_{ij}+\dot\alpha_i+\dot\beta_j$, where $\alpha(t),\beta(t)$ are the smooth scaling vectors of Lemma~\ref{lem:chart}.
\pf $H(t)=\diag(e^{\alpha(t)})\exp(W(t))\diag(e^{\beta(t)})$, so $\log H_{ij}=W_{ij}+\alpha_i+\beta_j$.
\step{1}{2} $|\dot\alpha_i+\dot\beta_j|\le(2n-1)\|\dot W\|_\infty$ for all $(i,j)$.
\pf Differentiate the constraints $\sum_jH_{ij}=1$ and $\sum_iH_{ij}=1$, and use \sref{1}{1}:
\[
\textstyle\sum_jH_{ij}(\dot W_{ij}+\dot\alpha_i+\dot\beta_j)=0\ \ (i\in[n]),\qquad
\sum_iH_{ij}(\dot W_{ij}+\dot\alpha_i+\dot\beta_j)=0\ \ (j\in[n]).
\]
By Lemma~\ref{lem:fit-normal} with $C=-\dot W$, the pair $(\dot\alpha,\dot\beta)$ gives the weighted additive fit of $-\dot W$ with weights $H$. Apply Lemma~\ref{lem:additive}.
\step{1}{3} $\|\dot W\|_\infty\le2n(n-1)G\,\eps(H)$.
\pf By Lemma~\ref{lem:differential}(c), $\dot W=-\gradFR f(H)$. Apply Lemma~\ref{lem:small-grad}.
\step{1}{4} The bound \eqref{eq:viscosity} holds.
\pf By \sref{1}{1}--\sref{1}{3}, $|\frac{\dd}{\dd t}\log H_{ij}|\le2n\|\dot W\|_\infty\le4n^2(n-1)G\,\eps(H)\le4n^3G\,\eps(H)$.
\step{1}{5} \Let $\phi(t):=\eps(H(t))$ and $c:=4n^3G$. Then $\phi$ is positive and continuous, and for $0\le t<t+h<T$,
\begin{equation}\label{eq:phi-integral}
\phi(t)\,e^{-c\int_t^{t+h}\phi}\ \le\ \phi(t+h)\ \le\ \phi(t)\,e^{c\int_t^{t+h}\phi}.
\end{equation}
\pf $H(t)>0$, so $\phi>0$; and $\phi$ is continuous because $\eps$ is a minimum of maxima of entries. Integrate \eqref{eq:viscosity} over $[t,t+h]$:
\[
H_{ij}(t)\,e^{-c\int_t^{t+h}\phi}\le H_{ij}(t+h)\le H_{ij}(t)\,e^{c\int_t^{t+h}\phi}\quad\text{for every cell}.
\]
The two factors are the same for all cells, so Lemma~\ref{lem:eps-mono} applies.
\step{1}{6} $D_+\phi(t):=\liminf_{h\downarrow0}\frac{\phi(t+h)-\phi(t)}{h}\ge-c\,\phi(t)^2$ and $D^+\phi(t)\le c\,\phi(t)^2$.
\pf By continuity, $\int_t^{t+h}\phi=h\phi(t)+o(h)$ as $h\downarrow0$. Insert this into \eqref{eq:phi-integral}.
\step{1}{7} The approach bound \eqref{eq:approach} holds.
\pf Let $\psi:=1/\phi$. Then $D^+\psi(t)=-D_+\phi(t)/\phi(t)^2\le c$ by \sref{1}{6}. Lemma~\ref{lem:dini} gives $1/\phi(t)\le1/\phi(0)+ct$.
\step{1}{8} The escape bound \eqref{eq:escape} holds.
\pf Let $\chi:=-1/\phi$. Then $D^+\chi(t)=D^+\phi(t)/\phi(t)^2\le c$ by \sref{1}{6}. Lemma~\ref{lem:dini} gives $1/\phi(t)\ge1/\phi(0)-ct$, which is positive for $t<1/(c\phi(0))$.
\qedstep{1}{9}
\pf Steps \sref{1}{4}, \sref{1}{7}, and \sref{1}{8} give the three bounds. If $\phi(t)\le\eps$, then \eqref{eq:approach} gives $t\ge(1/\eps-1/\eps_0)/c$. If $\phi(t)\ge2\eps_0$, then \eqref{eq:escape} gives $t\ge1/(2c\eps_0)$.
\end{steps}
\end{proof}

\begin{proof}[Proof of Theorem~\ref{thm:fr-rate}]
\begin{steps}
\step{1}{1} $|\frac{\dd}{\dd t}\log H_{ij}|\le2nG$, and \eqref{eq:fr-bounds} holds.
\pf By Lemma~\ref{lem:small-grad} (step \xsref{1}{1} of its proof), $\dot H=-H\odot(\nabla f(H)-m)$ with $\|m\|_\infty\le(2n-1)G$. Thus $|\frac{\dd}{\dd t}\log H_{ij}|=|\partial_{ij}f-m_{ij}|\le2nG$. Integrate, and apply Lemma~\ref{lem:eps-mono} with the factors $e^{\mp2nGt}$.
\step{1}{2} For $f_\pi$, $H(t)$ is the doubly stochastic scaling of $K^{(t)}:=\exp(W_0)\odot\big(P_\pi+e^{-t}(J-P_\pi)\big)$.
\pf $\nabla f_\pi=-P_\pi$. By Theorem~\ref{thm:induced}(b), $\Phi(W_0+tP_\pi)$ solves the Fisher--Rao flow, and the solution is unique, so $H(t)=\Phi(W_0+tP_\pi)$. Also $\exp(W_0+tP_\pi)=e^tK^{(t)}$, and a positive multiple does not change the scaling.
\step{1}{3} \Let $H(t)=\diag(\mu)K^{(t)}\diag(\nu)$ with $\mu,\nu>0$. Then $\mu_i\nu_{\pi(i)}\le e^{\|W_0\|_\infty}$ for all $i$.
\pf $H_{i\pi(i)}=\mu_ie^{(W_0)_{i\pi(i)}}\nu_{\pi(i)}\le1$.
\step{1}{4} \Let $t\ge0$ and $\bar\eps:=\eps_\pi(H(t))$. There are rows $i_0,\dots,i_n$ and columns $j_0,\dots,j_n$ with $j_r\ne\pi(i_r)$, $H_{i_rj_r}\ge\bar\eps/(n-1)^r$, and $\pi(i_{r+1})=j_r$.
\begin{steps}
\step{2}{1} The claim holds for $r=0$.
\pf Choose a cell $(i_0,j_0)$ off $\pi$ where $\bar\eps$ is attained.
\step{2}{2} \Assume the claim for $r$. \Prove the claim for $r+1$.
\pf Let $i_{r+1}:=\pi^{-1}(j_r)$. Column $j_r$ sums to one and contains the entry $H_{i_rj_r}\ge\bar\eps/(n-1)^r$ off $\pi$, so $H_{i_{r+1}j_r}\le1-\bar\eps/(n-1)^r$. Row $i_{r+1}$ sums to one, so its $n-1$ entries off $\pi$ have a sum of at least $\bar\eps/(n-1)^r$. One of them, at a column $j_{r+1}\ne\pi(i_{r+1})$, is at least $\bar\eps/(n-1)^{r+1}$.
\qedstep{2}{3}
\pf Induction on $r$.
\end{steps}
\step{1}{5} There are $a<b\le n$ with $i_a=i_b$. \Let $k:=b-a$. Then $k\ge2$, and $H_{i_rj_r}\ge\bar\eps/(n-1)^{n-1}$ for $a\le r<b$.
\pf There are $n+1$ rows $i_0,\dots,i_n$ but only $n$ distinct rows. Also $i_{r+1}\ne i_r$, because $\pi(i_{r+1})=j_r\ne\pi(i_r)$; thus $k\ge2$. For $r<b\le n$, $(n-1)^r\le(n-1)^{n-1}$.
\step{1}{6} $\prod_{r=a}^{b-1}H_{i_rj_r}\le\big(e^{2\|W_0\|_\infty}e^{-t}\big)^{k}$.
\pf Every cell $(i_r,j_r)$ is off $\pi$, so $H_{i_rj_r}=\mu_{i_r}e^{(W_0)_{i_rj_r}}e^{-t}\nu_{j_r}$ with $j_r=\pi(i_{r+1})$. Because $i_b=i_a$, the multisets $\{\pi(i_{r+1})\}_{a\le r<b}$ and $\{\pi(i_r)\}_{a\le r<b}$ are equal. Thus $\prod_r\mu_{i_r}\nu_{j_r}=\prod_r\mu_{i_r}\nu_{\pi(i_r)}\le e^{k\|W_0\|_\infty}$ by \sref{1}{3}. Also $\prod_re^{(W_0)_{i_rj_r}}\le e^{k\|W_0\|_\infty}$.
\step{1}{7} The bound \eqref{eq:fr-attained} holds.
\pf By \sref{1}{5} and \sref{1}{6}, $(\bar\eps/(n-1)^{n-1})^k\le(e^{2\|W_0\|_\infty}e^{-t})^k$. Take the $k$-th root, and use $\eps(H(t))\le\bar\eps$.
\qedstep{1}{8}
\pf Steps \sref{1}{1} and \sref{1}{7} give the first two claims. For the logit gradient flow, apply Theorem~\ref{thm:viscosity} with $G=\|P_\pi\|_\infty=1$.
\end{steps}
\end{proof}

\begin{proof}[Proof of Proposition~\ref{prop:n2}]
\begin{steps}
\step{1}{1} $\Phi(W)=\begin{psmallmatrix}p&q\\q&p\end{psmallmatrix}$ with $\log\frac pq=\frac w2$, and $\Tsp=\R E$ with $E:=\begin{psmallmatrix}1&-1\\-1&1\end{psmallmatrix}$.
\pf Every $2\times2$ doubly stochastic matrix has this form. The cross-ratio $H_{11}H_{22}/(H_{12}H_{21})$ does not change under diagonal scalings, and it equals $e^{w}$ for $\exp W$. Thus $p^2/q^2=e^w$. Also $w=\ip{W}{E}$.
\step{1}{2} Under the Fisher--Rao flow, $\dot w=\Delta$.
\pf The straight-through flow is $\dot W=-C$, so $\dot w=-\ip{C}{E}=\Delta$.
\step{1}{3} Under the logit gradient flow, $\dot w=2\Delta pq$.
\pf By Lemma~\ref{lem:differential}(c), $\dot W=-\Pfr_H(H\odot C)=-\lambda E$, where
\[
\lambda=\frac{g_H(H\odot C,E)}{g_H(E,E)}=\frac{\ip{C}{E}}{2/p+2/q}=-\frac{\Delta}{2}\,pq .
\]
Thus $\dot w=\ip{\dot W}{E}=-4\lambda=2\Delta pq$.
\step{1}{4} The equations \eqref{eq:n2} for $q$ hold.
\pf $q=1/(1+e^{w/2})$, so $\dot q=-\frac12pq\,\dot w$. Insert \sref{1}{2} and \sref{1}{3}.
\step{1}{5} For $\Delta<0$, the logit escape satisfies $q(t)\le q_0/(1-|\Delta|q_0t)$.
\pf By \sref{1}{4}, $\dot q=|\Delta|q^2(1-q)^2\le|\Delta|q^2$, so $\frac{\dd}{\dd t}(-1/q)\le|\Delta|$. Integrate.
\qedstep{1}{6}
\pf Steps \sref{1}{1}--\sref{1}{5} give all claims. For $\Delta<0$ the Fisher--Rao equation $\dot q=\frac{|\Delta|}2q(1-q)$ is the logistic equation.
\end{steps}
\end{proof}

\begin{proof}[Proof of Corollary~\ref{cor:price}]
\begin{steps}
\step{1}{1} $\prod_{l<L}\sigma_n(H_l)\ge c_0^{\,n-1}$.
\pf By Lemma~\ref{lem:two-block}, $|\det(U\T H_lU)|=\prod_{k\ge2}\sigma_k(H_l)\le\sigma_n(H_l)$, because all singular values are in $[0,1]$. Also $|\det(U\T H_{L:0}U)|\ge\sigma_n(H_{L:0})^{n-1}\ge c_0^{\,n-1}$. The subspace $\one^\perp$ is invariant under every $H_l$, so the determinant on $\one^\perp$ is multiplicative: $\det(U\T H_{L:0}U)=\prod_l\det(U\T H_lU)$.
\step{1}{2} At least half of the layers satisfy $\vartheta_l:=1-\sigma_n(H_l)\le2(n-1)\log(1/c_0)/L$.
\pf Because $1-s\le\log(1/s)$ for $s\in(0,1]$, \sref{1}{1} gives $\sum_l\vartheta_l\le\sum_l\log(1/\sigma_n(H_l))\le(n-1)\log(1/c_0)$. Apply Markov's inequality.
\step{1}{3} These layers satisfy $\eps(H_l)\le\eps_\star$.
\pf For them, $4n\vartheta_l\le8n(n-1)\log(1/c_0)/L<1$. Proposition~\ref{prop:rigidity}(d) gives $\eps(H_l)\le4n\vartheta_l\le\eps_\star$.
\qedstep{1}{4}
\pf Step \sref{1}{3} gives the first claim. Solve $\eps_\star\ge\eps_0/(1+c\eps_0t)$ for $t$ in \eqref{eq:approach} for the first time bound. The bound \eqref{eq:fr-bounds} gives the second time bound, and Theorem~\ref{thm:fr-rate} shows that logarithmic time is possible.
\end{steps}
\end{proof}

\subsection{Proofs for Section~\ref{sec:sinkhorn}}

\begin{proof}[Proof of Proposition~\ref{prop:knight}]
\begin{steps}
\step{1}{1} $K_{ij}\nu^\star_j/(K\nu^\star)_i=H_{ij}$ and $\mu^\star_iK_{ij}/(K\T\mu^\star)_j=H_{ij}$.
\pf At the fixed point, $\mu^\star=\one\oslash(K\nu^\star)$ and $\nu^\star=\one\oslash(K\T\mu^\star)$, and $H_{ij}=\mu^\star_iK_{ij}\nu^\star_j$.
\step{1}{2} With $\nu=\nu^\star\odot(\one+v)$, $\mu(\nu)_i=\mu^\star_i\big(1-(Hv)_i+O(\|v\|^2)\big)$.
\pf By the first identity of \sref{1}{1}, $(K\nu)_i=(K\nu^\star)_i(1+(Hv)_i)$. Take the reciprocal.
\step{1}{3} $\Psi(\nu)=\nu^\star\odot(\one+H\T Hv+O(\|v\|^2))$, and the column sums of $\diag(\mu(\nu))K\diag(\nu)$ are $\one+(I-H\T H)v+O(\|v\|^2)$.
\pf By \sref{1}{2} and the second identity of \sref{1}{1}, $(K\T\mu(\nu))_j=(K\T\mu^\star)_j(1-(H\T Hv)_j+O(\|v\|^2))$. Take the reciprocal for $\Psi$. The column sums are $(K\T\mu(\nu))_j\nu_j=(1-(H\T Hv)_j)(1+v_j)+O(\|v\|^2)$.
\step{1}{4} $H\T H$ is symmetric and positive semidefinite, $H\T H\one=\one$, and on $\one^\perp$ its eigenvalues are $\sigma_k(H)^2\in[0,\sigma_2(H)^2]$, $k\ge2$, with $\sigma_2(H)<1$.
\pf Lemma~\ref{lem:two-block} and Proposition~\ref{prop:rigidity}(c).
\step{1}{5} $r_k\le\sigma_2(H)^{2k}\|Qv_0\|_2$.
\pf Write $v_0=\frac1n(\one\T v_0)\one+Qv_0$. By \sref{1}{4}, $(I-H\T H)\one=0$, so $(I-H\T H)(H\T H)^kv_0=(I-H\T H)(H\T H)^kQv_0$. On $\one^\perp$, $\|I-H\T H\|_2\le1$ and $\|(H\T H)^k\|_2=\sigma_2(H)^{2k}$. Finally $\|\cdot\|_\infty\le\|\cdot\|_2$.
\step{1}{6} If $Qv_0$ has a component $w\ne0$ in the eigenspace of $H\T H$ for $\sigma_2(H)^2$, then $r_k\ge(1-\sigma_2(H)^2)\,\sigma_2(H)^{2k}\|w\|_2/\sqrt n$.
\pf The component of $(I-H\T H)(H\T H)^kv_0$ in this eigenspace is $(1-\sigma_2^2)\sigma_2^{2k}w$. An orthogonal projection does not increase the norm, and $\|\cdot\|_\infty\ge\|\cdot\|_2/\sqrt n$.
\step{1}{7} The bounds \eqref{eq:sweeps-per-efold} hold.
\pf Write $\sigma_2=1-1/\tau$, and use $-\frac{s}{1-s}\le\log(1-s)\le-s$ for $s=1/\tau\in(0,1)$.
\qedstep{1}{8}
\pf By \sref{1}{3}--\sref{1}{7}.
\end{steps}
\end{proof}

\begin{proof}[Proof of Proposition~\ref{prop:drift}]
\begin{steps}
\step{1}{1} $H_{L:l}\one=\one$.
\pf A product of row-stochastic matrices is row-stochastic.
\step{1}{2} Every column sum of $H_{L:l}$ is in $\big[\prod_i(1-e_i),\prod_i(1+e_i)\big]$.
\pf Let $A,B\ge0$ have column sums in $[1-e_A,1+e_A]$ and $[1-e_B,1+e_B]$. Then, entrywise,
\[
(1-e_A)(1-e_B)\one\T\le(1-e_A)\one\T B\le\one\T AB=(\one\T A)B\le(1+e_A)\one\T B\le(1+e_A)(1+e_B)\one\T .
\]
Induction over the factors gives the claim.
\step{1}{3} $\prod_i(1+e_i)\le e^{\sum_ie_i}$ and $\prod_i(1-e_i)\ge e^{-2\sum_ie_i}$.
\pf $\log(1+s)\le s$, and $\log(1-s)\ge-2s$ for $s\in[0,\frac12]$.
\step{1}{4} $\|H_{L:l}\|_2\le e^{\frac12\sum_ie_i}$.
\pf By Lemma~\ref{aux:schur}, $\|H_i\|_2^2\le\|H_i\|_1\|H_i\|_\infty\le(1+e_i)\cdot1$. The operator norm is submultiplicative, and $1+e_i\le e^{e_i}$.
\qedstep{1}{5}
\pf By \sref{1}{1}--\sref{1}{4}.
\end{steps}
\end{proof}

\subsection{Proofs for Section~\ref{sec:isometric} and Appendix~\ref{app:extra}}

\begin{proof}[Proof of Proposition~\ref{prop:isometric} and of \eqref{eq:O1}]
\begin{steps}
\step{1}{1} \eqref{eq:O1} holds, and $\On$ is a compact Lie group isomorphic to $O(n-1)$.
\pf By Proposition~\ref{prop:rigidity} (steps \xsref{1}{2} and \xsref{1}{3} of its proof), $\On=\{J/n+UZU\T:Z\in O(n-1)\}$. The map $Z\mapsto J/n+UZU\T$ is an injective group homomorphism, because $(J/n+UZU\T)(J/n+UZ'U\T)=J/n+UZZ'U\T$, and it is smooth with a smooth inverse $A\mapsto U\T AU$.
\step{1}{2} $H_{L:l}\in\On$.
\pf $\On$ is a group by \sref{1}{1}.
\step{1}{3} Part (a) holds.
\pf Elements of $\On$ satisfy $A\one=\one$ and $A\T\one=A\T A\one=\one$. The proof of Theorem~\ref{thm:two-channel} uses only these two identities. By \sref{1}{2}, each $H_{L:i+1}$ is orthogonal, so it preserves Frobenius norms.
\step{1}{4} Part (b) holds.
\pf By Theorem~\ref{thm:two-channel}, the Jacobian is $x\mapsto H_{L:l}x$. All singular values of an orthogonal matrix, and of this map, are one.
\step{1}{5} $T_A\On=\{A\Omega:\ \Omega\T=-\Omega,\ \Omega\one=0\}$.
\pf Write $A=J/n+UZU\T$. Then $T_ZO(n-1)=\{Z\Omega':\ \Omega'\ \text{skew}\}$, so $T_A\On=\{UZ\Omega'U\T\}$. Because $JU=0$, $A(U\Omega'U\T)=UZ\Omega'U\T$. Also, $\{U\Omega'U\T:\ \Omega'\ \text{skew}\}$ is the set of skew $\Omega$ with $\Omega=Q\Omega Q$, that is, with $\Omega\one=0$.
\step{1}{6} The Riemannian gradient of $f$ for the Frobenius metric is $AQ\skewop(A\T\nabla f(A))Q$.
\pf Let $\mathfrak o:=\{\Omega\ \text{skew}:\ \Omega\one=0\}$. Because $A$ is orthogonal, $\ip{A\Omega_1}{A\Omega_2}=\ip{\Omega_1}{\Omega_2}$, so the orthogonal projection of $\nabla f(A)$ onto $T_A\On=A\mathfrak o$ is $A\,\Pi_{\mathfrak o}(A\T\nabla f(A))$. The maps $X\mapsto\skewop(X)$ and $X\mapsto QXQ$ are commuting orthogonal projections, because $\skewop(QXQ)=Q\skewop(X)Q$. Thus their product $X\mapsto Q\skewop(X)Q$ is the orthogonal projection onto the intersection of their ranges, which is $\mathfrak o$.
\step{1}{7} The Cayley map is a retraction on $\On$.
\pf By Lemma~\ref{aux:cayley}, $R_A(A\Omega)=A\,C(\Omega)$ is smooth and orthogonal, $R_A(0)=A$, and $\frac{\dd}{\dd t}R_A(tA\Omega)|_{t=0}=A\Omega$. Because $\Omega\one=0$, $(I+\frac12\Omega)\one=\one$ and $(I-\frac12\Omega)^{-1}\one=\one$. Thus $C(\Omega)\one=\one$, and $R_A(A\Omega)\one=A\one=\one$, so $R_A(A\Omega)\in\On$.
\qedstep{1}{8}
\pf By \sref{1}{1}--\sref{1}{7}; Proposition~\ref{prop:rigidity}(b) gives $\Sn\subset\On$.
\end{steps}
\end{proof}

\begin{proof}[Proof of Proposition~\ref{prop:rgd-conv}]
\begin{steps}
\step{1}{1} The pullbacks $f\circ R_A$ have Hessians that are bounded on a neighborhood of the origin, uniformly in $A\in\On$.
\pf $\On$ is compact, the Cayley retraction $R$ of Proposition~\ref{prop:isometric}(c) is smooth, and $f$ is twice continuously differentiable near $\On$.
\qedstep{1}{2}
\pf Let $\Lambda$ be the bound of \sref{1}{1}. The standard analysis of Riemannian gradient descent with a constant step at most $1/\Lambda$ gives $\min_{k<N}\|\gradFR_{\mathcal O}f(A^{(k)})\|_F^2\le2\Lambda\,(f(A^{(0)})-\min f)/N$ \citep[Chapter~4]{boumal2023introduction}.
\end{steps}
\end{proof}

%% file: app_discussion.tex
\section{Discussion}\label{app:discussion}

\subsection{What the theory says about mHC}

The doubly stochastic constraint gives one property, and it gives this property completely. The stream mean propagates through a residual network with unit gain, for every mixer and every depth. All other effects of the mixers stay in the difference channel, where the mixers contract. Thus the extra width of mHC is a short memory of $O(\tau)$ layers on top of a standard residual stream. The measurements of \citet{xie2026mhc,alimaskina2026stream,zhao2026howdoes,guo2026ohc} agree with this prediction.

The Sinkhorn-logit chart puts the collapse-free mixers at infinity, and logit gradient flow moves toward them only at rate $1/t$. The straight-through update is exact mirror descent, it moves at an exponential rate, and it needs no backward pass through Sinkhorn. Mixers with a long horizon are also the mixers that Sinkhorn projects most slowly. For a trained model, two cheap diagnostics follow: $\sigma_2(H_l)$ gives the horizon of layer $l$, and the final column-sum error bounds the drift of the backward gain (Proposition~\ref{prop:drift}). Table~\ref{tab:dynamics} compares the four ways to move a mixer that we study.

\begin{center}\small
\captionof{table}{Four ways to move a residual mixer, and what the theory says about each.}\label{tab:dynamics}
\begin{tabularx}{\linewidth}{@{}>{\raggedright\arraybackslash}p{0.19\linewidth}>{\raggedright\arraybackslash}p{0.16\linewidth}>{\raggedright\arraybackslash}X>{\raggedright\arraybackslash}p{0.18\linewidth}>{\raggedright\arraybackslash}p{0.10\linewidth}@{}}
\toprule
update rule & induced dynamics on $H$ & geometry & approach and escape of vertices & backward pass through $\SK$\\
\midrule
exact gradient of $f\circ\Phi$ (logit gradient descent) & $\dot H=-A_H^{2}\,\nabla f(H)$ & squared Fisher--Rao metric (Theorem~\ref{thm:induced}(a)) & polynomial, rate $1/t$ (Theorem~\ref{thm:viscosity}) & yes\\
straight-through, $\dot W=-\nabla f(H)$ & $\dot H=-A_H\nabla f(H)$ & Fisher--Rao metric; entropic mirror descent; Riemannian gradient descent with the Sinkhorn retraction (Theorem~\ref{thm:induced}(b,c)) & exponential (Theorem~\ref{thm:fr-rate}) & no\\
Adam on the logits & normalized per coordinate & sign-like near the vertices (Remark~\ref{rem:adam}) & exponential, empirically (Figure~\ref{fig:main}) & yes\\
Riemannian gradient descent on $\On$ & Cayley steps on $O(n-1)$ & Frobenius metric; no boundary (Section~\ref{sec:isometric}) & no vertices to approach & none\\
\bottomrule
\end{tabularx}
\end{center}

\subsection{Limits}\label{app:limits}

The theory does not say if a long horizon helps language modeling. It shows only that the Birkhoff design cannot have one at moderate budgets. Fast commitment to a vertex can also be premature \citep{liu2026premature}, and adaptive optimizers already reduce the viscosity (Remark~\ref{rem:adam}). Our results on input-dependent logits hold input by input (Remark~\ref{rem:dynamic}).

\subsection{Open problems}\label{app:open}

\begin{enumerate}[leftmargin=1.6em,itemsep=2pt]
\item \emph{Adaptive dynamics on the polytope.} Theorem~\ref{thm:viscosity} is about gradient flow. A rate theorem for Adam-type dynamics through the Sinkhorn chart would show how much viscosity remains in the regime in which mHC trains. The sign-descent analyses of adaptive methods are a possible start.
\item \emph{Escape rates.} Theorem~\ref{thm:fr-rate} shows that the Fisher--Rao flow \emph{can} approach a vertex at an exponential rate, for one loss. For $n>2$, two questions are open: a matching result for escape, and a characterization of the losses that attain the exponential rate. The second question is the analogue of the reduced-cost gap of linear programming.
\item \emph{Random products.} Theorem~\ref{thm:contraction} is a worst-case bound. The input-dependent mixers of mHC give a product of random doubly stochastic matrices. Its Lyapunov exponent on $\one^\perp$ is the natural definition of the effective horizon. The theory of Furstenberg and Kesten can give it in terms of the distribution of $\sigma_2$.
\item \emph{Exact parameterizations.} Each exact parameterization of $\Bn$ \citep{yang2026mhclite,zhou2026kromhc,lyubinin2026tbpmhc,dandachi2026gomhc,gu2026temper}, and the exact projection with implicit differentiation \citep{wang2026accelerating}, induces a metric on $\Bno$ through the analogue of Lemma~\ref{lem:differential}. A computation of these metrics near the vertices would extend Section~\ref{sec:viscosity} to the whole family.
\item \emph{Isometric designs.} By Proposition~\ref{prop:rigidity}, a smooth collapse-free design must allow negative entries, as the orthogonal and spectral-sphere designs do \citep{sengupta2026jpmhc,liu2026shc,guo2026ohc}. The subgroup $\On$ keeps the residual identity on the mean exactly. The full orthogonal group trades it for a coupling of mean and differences. It is an empirical question which inductive bias is better. The evidence of \citet{guo2026ohc} currently favors the full group.
\end{enumerate}